\newif\ifappendix
\newenvironment{example*}[2][]
	{\smallskip \pagebreak[2] \par \noindent \textbf{Example~\ref{#2} (continued).}\it}{\par\smallskip}
\tikzstyle{arg}=[draw,circle,inner sep=1pt,minimum size=.5cm]
\newtheorem{theorem}{Theorem}[section]
\newtheorem{definition}[theorem]{Definition}
\newtheorem{remark}[theorem]{Remark}
\newtheorem{example}[theorem]{Example}
\newcommand{\contrary}[1]{\overline{#1}}
\newcommand{\contraryempty}{\contrary{\phantom{a}}}
\newcommand{\asms}{asms}
\newcommand{\ABAdef}{D=(\mathcal{L},\mathcal{R},\mathcal{A},\contraryempty)}
\newcommand{\lit}{\ensuremath{\mathcal{L}}}
\newcommand{\rules}{\ensuremath{\mathcal{R}}}
\newcommand{\asm}{\ensuremath{\mathcal{A}}}
\newcommand{\contraryc}[1]{\overline{#1}}
\newcommand{\BF}{\mathit{F}}
\newcommand{\la}{\leftarrow}
\newcommand{\geqels}{\succeq_{els}}
\newcommand{\eqels}{\simeq_{els}}
\newcommand{\geqsup}{\succeq_{sup}}
\newcommand{\geqset}{\succeq_{set}}
\newcommand{\eqsup}{\simeq_{sup}}
\newcommand{\eqset}{\simeq_{set}}
\newcommand{\setag}{\zeta}
\newcommand{\Att}{\mathit{Att}}
\newcommand{\Sup}{\mathit{Sup}}
\newcommand{\ocr}{\textbf{(OR)}}
\newcommand{\fact}{\textbf{(F)}}
\newcommand{\neut}{\textbf{(N)}}
\newcommand{\ide}{\textbf{(ID)}}
\newcommand{\wl}{\textbf{(WL)}}
\newcommand{\void}{\textbf{(V)}}
\newcommand{\si}{\textbf{(SI)}}
\newcommand{\dom}{\textbf{(D)}}
\newcommand{\nonmax}[1]{#1^\circ} 
\newcommand{\nonmaxm}{\mathit{sup}} 
\newcommand{\spres}{\textbf{($\succeq$-P)}}
\newcommand{\basescore}{\tau}
\title{On Gradual Semantics for Assumption-Based Argumentation}
\author{
Anna Rapberger\and 
Fabrizio Russo\and
Antonio Rago\And 
Francesca Toni\\
\affiliations
Imperial College London, Department of Computing\\
 \emails
\{a.rapberger, fabrizio, a.rago, ft\}@imperial.ac.uk
}
\begin{document}

\maketitle

\begin{abstract}
In computational argumentation, gradual semantics are fine-grained alternatives to extension-based and labelling-based semantics
. They ascribe a dialectical strength to (components of) arguments sanctioning their degree of acceptability. Several gradual semantics have been studied for abstract, bipolar and quantitative bipolar argumentation frameworks (QBAFs), 
as well as, to a lesser extent, for some forms of structured argumentation. 
However, this has not been the case for assumption-based argumentation (ABA), despite it being a popular form of structured argumentation with several applications where gradual semantics could be useful. In this paper, we fill this gap and propose a family of novel gradual semantics for equipping assumptions, which are the core components in ABA frameworks, with dialectical strengths. To do so, we use bipolar set-based argumentation frameworks as an abstraction of (potentially non-flat) ABA frameworks and generalise state-of-the-art modular gradual semantics for QBAFs. We show that our \emph{gradual ABA semantics} satisfy suitable adaptations of desirable  properties of gradual QBAF semantics, such as \emph{balance} and \emph{monotonicity}. We also explore an argument-based
approach that leverages established QBAF modular semantics directly, and use it as baseline. Finally, we conduct experiments with synthetic ABA frameworks to compare our gradual ABA semantics with its argument-based counterpart and assess convergence. 
\end{abstract}

\section{Introduction}

Computational argumentation is a central research area within the field of knowledge representation and reasoning.
Among the most prominent structured argumentation formalisms is \emph{assumption-based argumentation (ABA)}~\cite{DBLP:journals/ai/BondarenkoDKT97,CyrasFST2018}, which provides a versatile framework for modelling and evaluating argumentative reasoning, with applications in several domains, such as recommendations within healthcare~\cite{DBLP:journals/argcom/CyrasOKT21,DBLP:conf/ratio/SkibaTW24,DBLP:conf/atal/ZengSCLWCM20} and explainable causal discovery~\cite{russo2024argumentativecausaldiscovery}. 
ABA frameworks (ABAFs) have been extensively studied, including investigations of their formal properties~\cite{DBLP:journals/jair/CaminadaS17,DBLP:conf/jelia/BertholdRU23}, computational complexity~\cite{DBLP:journals/ai/CyrasHT21,DBLP:journals/jair/LehtonenWJ21,DBLP:conf/kr/LehtonenR0W23} and implementations~\cite{Toni13,DBLP:journals/tplp/LehtonenWJ21,DBLP:conf/ijcai/LehtonenRT0W24}.
The building blocks in ABA are \emph{assumptions}, which are defeasible elements that can be challenged via a so-called \emph{contrary function}, and \emph{inference rules}. 
Relations between assumptions can be conflicting or supporting, where, informally: a set of assumptions $S$ \emph{attacks} an assumption $a$ if the contrary of $a$ is derivable (via a tree-derivation, see Section~\ref{subsec:aba bg}) from (a subset of) $S$; and
$S$ \emph{supports} $a$ if $a$ is derivable from (a subset of) $S$.\footnote{Supports emerge only if the ABAF is not \emph{flat}, i.e., 
 if assumptions 
 are derivable~\cite{DBLP:journals/ai/BondarenkoDKT97}.}
To date, existing research in the evaluation of ABA frameworks focuses exclusively on two main strands. The first is \emph{extension-based semantics}~\cite{DBLP:journals/ai/BondarenkoDKT97}, which use attack and support relations to decide which sets of assumptions can be jointly accepted. The second is \emph{labelling-based semantics}~\cite{DBLP:journals/ijar/SchulzT17}, which categorise each assumption as accepted, rejected, or undefined.
Both forms of semantics may be used to characterise multiple possible 
solutions for tasks such as decision making. 

Gradual semantics is an orthogonal evaluation paradigm with growing popularity, most prominently in the realm of (quantitative, bipolar) abstract argumentation~\cite{CayrolL05,BARONI2019252,DBLP:conf/kr/AmgoudD18a}.
Abstract argumentation represents argumentative reasoning processes via directed graphs in which nodes correspond to abstract arguments and different edge types indicate, for example, attack~\cite{Dung95} and support~\cite{DBLP:journals/ijis/AmgoudCLL08} relations between arguments.
In contrast to extension-based semantics, gradual semantics assign dialectical strengths to arguments, which can be seen as acceptability degrees
, providing a finer-grained evaluation of arguments 
than the one given by extensions~\cite{Dung95,CayrolL05} or labellings \cite{Caminada_09,Gonzalez_21}.
In quantitative bipolar argumentation frameworks (QBAFs)~\cite{BARONI2019252}, gradual semantics compute a numerical \textit{acceptability degree} 
for each argument based on its initial strength (the base score, which reflects the argument's intrinsic plausibility) and 
the acceptability degrees of its attackers and supporters
. Gradual semantics have numerous applications in domains such as engineering decision support \cite{DBLP:conf/kr/RagoTAB16}, social media analysis \cite{Leite_11,Oluokun_24},
online review aggregation \cite{Rago_25} and explainability \cite{Ayoobi_23}, amongst others.
Modular gradual semantics, originally proposed by~\citeauthor{DBLP:journals/corr/abs-1807-06685}~\shortcite{DBLP:journals/corr/abs-1807-06685}, 
decompose these semantics into aggregation and influence functions, 
enabling a systematic study of their formal behaviour (see also~\cite{BaroniCG18,DBLP:conf/atal/Potyka19,PotykaB24}). 

Despite the rich literature on both ABA and 
gradual semantics, there is currently no study investigating gradual semantics 
for ABA in contexts where assumptions themselves carry weights.
This leaves a significant gap in the ABA theory, and limits its application exclusively to scenarios where multiple solutions are tolerable and a numerical evaluation of assumptions is not needed. However, many real-world scenarios require differentiating between assumptions based on weights that capture, for example, source credibility or  empirical evidence (as, e.g., in the 
use of ABA for causal discovery~\cite{russo2024argumentativecausaldiscovery}). 
\begin{example}\label{exm:intro}
 Consider a simple ABAF 
 with inference rule $q \leftarrow b,c$,
 where $a,b,c$ are assumptions, the contrary of $a$ is $q$ 
 and the contraries of $b$ and $c$ are sentences $x,y$ respectively.
 This ABAF may represent a very simple decision-making setting:
 The 
 assumption $a$ stands for `having a picnic',   
 $q$ stands for `having a picnic is a bad idea', $b$ stands for `bad weather is forecasted' and $c$ stands for `no friends are around'.
 Given that $x,y$ are not derivable and thus $b,c$ cannot be attacked,
 any extension- or labelling-based semantics will return the set $\{b,c\}$, indicating that $a$ is not accepted. 
 Suppose that, while only being able to treat $b$ and $c$ as assumptions (as they are about the future),  we have some quantitative information (weights) about them, e.g., from a weather forecasting website and from the profile of the friends from a social network. 
 If the weights of $b$ and $c$ are both high (because the website and profiles are both reliable),
 then we would expect a low degree of acceptability for $a$, but if they are both low, the degree should  be higher.
\end{example}
With the standard extension-based~\cite{DBLP:journals/ai/BondarenkoDKT97}
or labelling-based~\cite{DBLP:journals/ijar/SchulzT17} ABA semantics, the numerical information available in such settings cannot be processed.
In this paper, we take a foundational step towards closing this gap by exploring gradual semantics for \emph{weighted assumptions} in ABA. 
To this end, we utilise 
\emph{bipolar set-based argumentation frameworks} (BSAFs)~\cite{DBLP:conf/kr/BertholdR024}, 
an abstract formalism that provides a concise representation for ABAFs by modelling the collective attacks and collective supports.

\begin{example*}{exm:intro}
The BSAF representation contains a node for each assumption and a single attack 
\textcolor{magenta}{$(\{b,c\},a)$},
depicted below (left). The table (right) contains possible values (base scores) for the assumptions.
\begin{center}
\begin{minipage}[c]{0.2\textwidth}
\begin{center}
		\begin{tikzpicture}[scale=1,>=stealth]
        \node at (-1.2,0.1) {BSAF:};
		\path
		(0,0)node[arg] (a){$a$}
        (-.8,-.7)node[arg] (b){$b$}
		(.8,-.7)node[arg] (c){$c$}
        ;
	
\path[thick,->,magenta]
(b)edge[out=0,in=-90](a)
(c)edge[out=180,in=-90](a)
;
            
\end{tikzpicture}
\end{center}
\end{minipage}
\hspace{10pt}
\begin{minipage}[c]{0.2\textwidth}
\begin{tabular}{c | c c c}
    &  a & b & c \\
     \hline \vspace{-8pt} \\
   $\tau_1$ & 1 & 1 & 1 \\
   $\tau_2$ & 1 & 0.1 & 0.2 \\
\end{tabular}
\end{minipage}    
\end{center}
Intuitively, the evaluation with respect to $\tau_1$ should result in a much lower dialectical strength of assumption $a$ than $\tau_2$.
\end{example*}

We employ BSAFs for their compactness and their flexibility to focus only on the defeasible elements of ABAFs, assumptions, whose granular evaluation is the focus of our proposed semantics.
Our main contributions are as follows:
\begin{itemize}
    \item We introduce a family of gradual semantics for ABA and investigate whether they satisfy fundamental properties of gradual semantics, such as balance and monotonicity, as well as their convergence behaviour. We introduce bespoke assumption aggregation functions and study their behaviour in relation to the other components of the modular semantics.
    \item
    Orthogonal to our (assumption-based) gradual ABA semantics, 
    we craft an (argument-based) QBAF baseline method to evaluate an ABAF with weighted assumptions, building on established approaches. 
    For this we leverage the correspondence between flat ABAFs and AFs to then employ existing gradual QBAF semantics. 

    \item We conduct an experimental evaluation to study the convergence behaviour of our gradual ABA semantics and compare it to the QBAF baseline. Among our findings is that instantiations of the gradual ABA semantics converge in 90\% of the scenarios while the QBAF baseline does so only in about 70\% of the scenarios, demonstrating the advantages of our proposed native ABA approach on the existing AF-based pipeline. Code is provided at \url{https://github.com/briziorusso/GradualABA}.
\end{itemize}
All omitted proofs and experimental details can be found in the appendix\ifappendix. \else\ of the technical report~\cite{arxivABAgrad}.
\fi

\newcommand{\allargs}{\mathit{Args}}
\newcommand{\aargs}{\mathbf{A}}
\newcommand{\aatts}{\mathbf{R}}
\newcommand{\asupps}{\mathbf{S}}
\newcommand{\multiS}{\mathbb{S}}
\newcommand{\multiA}{\mathbb{A}}

\newcommand{\AFargs}{\aargs}
\newcommand{\AFatts}{\aatts}
\newcommand{\maxval}{m}

\newcommand{\Pos}{\mathit{Pos}}
\newcommand{\neutr}{\textbf{(N)}}
\newcommand{\mon}{\textbf{(M)}}
\newcommand{\bal}{\textbf{(B)}}
\newcommand{\AFroute}{\textbf{(II)}}
\newcommand{\nativeroute}{\textbf{(I)}}

\section{Related Work}
The use of gradual semantics within structured argumentation frameworks in general has only recently started to receive attention.
Within the context of ABA, one recent work \cite{Skiba_23} explored the use of a family of ranking-based semantics for assumptions, reducing flat ABA frameworks to abstract argumentation and then applying existing ranking-based semantics, an endeavour which is orthogonal to our definition of purpose-built gradual semantics for general (possibly non-flat) ABA.  
To the best of our knowledge, there are no other works which explore gradual semantics within ABA.

Gradual semantics have also appeared in other forms of structured argumentation.
Heyninck et al.~\shortcite{Heyninck_23} explore the use of gradual semantics for \emph{deductive argumentation} \cite{besnard2014constructing}. 
Here, the focus is on relationships between properties of gradual semantics and culpability (of any logical inconsistency) in abstract argumentation frameworks extracted from deductive argumentation. 
Further, in a restricted type of structured argumentation, namely \emph{statement graphs} \cite{hecham2018flexible}, Jedwabny et al.~\shortcite{Jedwabny_20} define a modular gradual semantics based on complete support (inference) trees, demonstrating that one instance satisfies several useful properties centred around cases with complete information.
Another study~\cite{Rago_25X} explores the deployment of gradual semantics from abstract argumentation in these statement graphs, demonstrating that they present distinct advantages in cases under incomplete information.
Finally, Amgoud and Ben-Naim~\shortcite{Amgoud_15} do not explicitly introduce a quantitative strength, but they investigate ranking-based semantics for logic-based argumentation frameworks.

Several works relate to ours in terms of aggregating the strengths of arguments.
In the realm of ASPIC$^+$~\cite{Modgil_14}, 
Prakken~\shortcite{PRAKKEN2024104193} provides a comprehensive study of argument strength, instantiated with a formal model, considering the different ways in which an argument can be attacked.
Spaans~\shortcite{Spaans2021} explores initial argument strength aggregation functions in ASPIC$^+$, considering several principles similar in spirit to those in Section~\ref{subsec:aggregate sets BSAF}.
Other works use probabilistic argument strength in the context of logic-based argumentation, as studied by e.g., Prakken~\shortcite{prakken2018probabilistic} in ASPIC$^+$ and 
Hunter~(\citeyear{hunter2022argument,HUNTER201347}).
\citeauthor{DBLP:journals/argcom/RossitMDM21}~(\citeyear{DBLP:journals/argcom/RossitMDM21}) study accrual in quantitative abstract argumentation. 
In contrast to our work, they study accrual strengths in the context of extension-based semantics.

Lastly, ABA has been generalised to accommodate preferences~\cite{CyrasT16} and priorities~\cite{DungT10,Fan23}. These works incorporate numbers to evaluate acceptance. However, in contrast to gradual semantics, these approaches use preferences and priorities to modify the argumentation frameworks, e.g., by breaking symmetric attacks or by reversing attacks, to subsequently calculate extensions. Instead, gradual semantics propagate strengths across the entire framework with the final scores reflecting not only the initial (base) scores (which one could compare to preferences or priorities), but the whole structure
.

\section{Background}
\newcommand{\cl}{\textit{cl}}
\newcommand{\aset}{E}
\newcommand{\bset}{E'}
\subsection{Assumption-Based Argumentation}\label{subsec:aba bg}
We recall assumption-based argumentation (ABA)~\cite{CyrasFST2018}.
We assume a deductive system $(\lit,\mathcal{R})$, where  $\lit$ is a formal language, i.\,e., a set of sentences, 
and $\mathcal{R}$ is a set of inference rules over $\lit$. A rule $r \in \mathcal{R}$ has the form
$a_0 \leftarrow a_1,\ldots,a_n$ with $a_i \in \lit$.
We write $head(r) = a_0$ and $body(r) = \{a_1,\ldots,a_n\}$ for the (possibly empty) body of $r$.%
\begin{definition}
	An ABA framework (ABAF) is a tuple $(\lit,\asm,\rules,\contraryempty)$ with deductive system $(\mathcal{L},\mathcal{R})$, a non-empty set $\mathcal{A} \subseteq \mathcal{L}$ of assumptions, and  
contrary function $\contraryempty:\mathcal{A}\rightarrow \mathcal{L}$.
\end{definition}

Unless otherwise specified, we assume a unique contrary $a_c$ for each assumption  $a\in \mathcal{A}$.
We write $\contrary{a}$ to denote $a_c$.

We fix an arbitrary ABAF $\ABAdef$. $D$ is \emph{flat} if $head(r)\notin \asm$ for any $r \in \mathcal{R}$.
A sentence $p \in \mathcal{L}$ is \emph{tree-derivable} via tree $t$ from assumptions $\aset \subseteq \mathcal{A}$ and rules $R \subseteq \mathcal{R}$, denoted by $\aset \vdash_{t,R} p$ (tree-derivation), if $t$ is a finite rooted labelled tree such that the root is labelled with $p$, the set of labels for the leaves of $t$ is equal to $\aset$ or $\aset \cup \{\top\}$, and 
there is a surjective mapping from the set of internal nodes to $R$, satisfying that
for each internal node $v$, there exists $r\in R$ s.t.\ $v$ is labelled with $head(r)$, and the set of all successor nodes corresponds to $body(r)$ or $\top$ if $body(r)=\emptyset$.
We say $p$ is derivable (in $D$) iff there is a tree-derivation $\aset \vdash_{R,t} p$, 
$\aset \subseteq \mathcal{A}$, $R\subseteq \mathcal{R}$.
We drop $R$ or $t$ if clear from the context.
\begin{definition}
    The derivation $\aset\vdash p$ is an \emph{ABA argument} if there is a tree-derivation $\aset\vdash_{R,t} p$  for $\aset \subseteq \mathcal{A}$, $R\subseteq \mathcal{R}$. $\allargs_D=\{(A,p)\mid \aset \vdash_{R,t} p\}$ is the set of all arguments. 
\end{definition}
We call arguments of the form $\{a\}\vdash a$, where $a\in\mathcal{A}$, assumption arguments; 
and arguments with $R\neq \emptyset$ rule-based arguments.
We identify assumption arguments with their assumptions.
Given an argument $x=\aset \vdash p$, by $\cl(x)=p$ we denote the claim (conclusion) of an argument, by $\asms(x)=\aset$ its assumptions.
Given $\aset\subseteq \mathcal{A}$, we let $\contrary{\aset}=\{\contrary{a}\mid a\in \aset\}$.
$\aset$ is an attacker of $a\in\mathcal{A}$ iff $\aset\vdash \contrary{a}$;
$\aset$ is a supporter of $a$ iff there is $\aset \vdash_R a$ and $R\neq \emptyset$. 
Given $a\in\mathcal{A}$, we let $\Att(a)=\{ \aset \mid \aset \vdash \contrary{a}\}$
denote the set of all attackers and $\Sup(a)= \{\aset \mid \aset \vdash a\}$ all supporters. 
We extend the functions to sets of assumptions:
for $\aset\subseteq \mathcal{A}$, let $\Att(\aset)=\bigcup_{a\in \aset}\Att(a)$;
analogously for $\Sup(\aset)$.

\newcommand{\AF}{\mathcal{F}}
\subsection{Abstract Representations of ABA}
Abstract argumentation~\cite{Dung95} considers graph-like systems where edges encode relations between abstract entities, typically called arguments (we will also associate them with assumptions).
We recall bipolar set-argumentation frameworks (BSAFs)~\cite{DBLP:conf/kr/BertholdR024} that capture (collective) set-attacks and -supports.
\begin{definition}
    A bipolar set-argumentation framework (BSAF) is a tuple $\BF = (\aargs,\aatts,\asupps)$, 
    where $\aargs$ is a finite set of arguments, $\aatts\subseteq 2^\aargs\times \aargs$ is an attack relation and $\asupps\subseteq 2^\aargs\times \aargs$ a support relation.
\end{definition}
BSAFs generalise both bipolar AFs (BAFs)~\cite{DBLP:books/sp/09/CayrolL09} where each $(H,t)\in \aatts\cup \asupps$ satisfies $|H|=1$ and abstract argumentation frameworks (AFs) \cite{Dung95} where the same holds but $\asupps=\emptyset$.
For $t\in \aargs$, we let $\Att(t)=\{ \aset \mid (\aset,t)\in\aatts\}$
denote the set of all attackers $x$; then for a set $T\subseteq \aargs$, we let $\Att(T)=\bigcup_{t\in T} \Att(t)$; and
analogously for $\Sup(t)$.%

ABAFs are closely related to abstract argumentation~\cite{CyrasFST2018,DBLP:conf/kr/BertholdR024}.
We recall two graph-based representations of ABA. 

First, we recall the BSAF representation of ABAFs. 
Here, assumptions are the nodes in the graph and the derivations $S\vdash p$ induce hyperedges that encode attacks and supports.
\begin{definition}
    Let $D=(\lit,\asm,\rules,\contraryempty)$ be an ABAF.
    By $\BF_D=(\mathcal{A},\aatts_D,\asupps_D)$, we denote the BSAF corresponding to $D$ where
     $\aatts_D=\Att(\mathcal{A})$ and $\asupps_D=\Sup(\mathcal{A})$ 
    denote the set of all attacks and supports in $D$, respectively.
\end{definition}
Second, we recall the AF instantiation.
Here, the abstract arguments correspond to the ABA arguments and the attacks between the arguments is determined by their claims and assumptions. Note that the instantiation is only defined for 
\emph{flat} ABAFs where assumptions cannot be derived, i.e., there is no argument $\aset\vdash_R p$ with $R\neq\emptyset$ and $p\in\asm$.
\begin{definition}\label{def:AF instantiation}
    Let $D=(\lit,\asm,\rules,\contraryempty)$ be a flat ABAF.
    By $\AF_D=(\allargs_D,\AFatts_{\allargs_D})$, we denote the AF corresponding to $D$ where
    $\AFatts_{\allargs_D}=\{(x,y)\mid \cl(x)\in \contrary{\asms(y)}\}$.
\end{definition}
\begin{example}\label{exm:bsaf}
Consider an ABAF 
$D$ with
assumptions $a$, $b$, $c$, $d$, their contraries $\contraryc{a}$, $\contraryc{b}$, $\contrary{c}$, and $\contrary{d}$, respectively, and rules
\begin{align*}
r_1= c\la a,b&&r_2=\contraryc{b}&\la a &r_3=\contraryc{c}&\la b,d 
\end{align*}
Note that $D$ is not flat since the assumption $c$ is in the head of $r_1$. 
Let $D'$ denote the ABAF without rule $r_1$.

Below, we depict the BSAF $F_D$ with two attacks 
\textcolor{violet}{$(\{b,d\},c)$} and 
$\textcolor{red}{(\{a\},b)}$ and a support $\textcolor{cyan}{(\{a,b\},e)}$;
the BSAF $F_{D'}$; and the AF $\AF_{D'}$ with assumption arguments $a,b,c,d$ and two rule-based arguments $x_1=\{a\}\vdash \contrary{b}$ and $x_2=\{b,d\}\vdash \contrary{c}$.
\begin{center}
		\begin{tikzpicture}[scale=0.8,>=stealth]
        \node at (-0.8,1.5) {$\BF_D$:};
		\path
        (0.2,1.3)node[arg] (b){$b$}
		(1,0)node[arg] (c){$c$}
		(0,0)node[arg] (a){$a$}
  		(1.2,1.3)node[arg] (d){$d$}
        ;
        
\path[thick,->,red]
(a)edge[out=160,in=205](b)
;
\path[thick,->,violet]
(b)edge[out=-25,in=90](c)
(d)edge[out=-115,in=90](c)
;

\path[thick,->,cyan,dashed]
(a)edge[out=35,in=150](c)
(b)edge[out=-95,in=150](c)
;
\begin{scope}
    [xshift=3.3cm]
        \node at (-0.8,1.5) {$\BF_{D'}$:};
		\path
        (0.2,1.3)node[arg] (b){$b$}
		(1,0)node[arg] (c){$c$}
		(0,0)node[arg] (a){$a$}
  		(1.2,1.3)node[arg] (d){$d$}
        ;
        
\path[thick,->,red]
(a)edge[out=160,in=205](b)
;
\path[thick,->,violet]
(b)edge[out=-25,in=90](c)
(d)edge[out=-115,in=90](c)
;

\end{scope}

\begin{scope}
    [xshift=6.6cm]
          \node at (-0.8,1.5) {$\AF_{D'}$:};
		\path
        (0,1)node[arg] (b){$b$}
		(1.8,0)node[arg] (c){$c$}
		(0,0)node[arg] (a){$a$}
  		(1.8,1)node[arg] (d){$d$}
  		(0.9,1.6)node[arg] (x){$x_1$}
  		(0.9,0.4)node[arg] (xx){$x_2$}
        ;
        
\path[thick,->]
(x)edge (b)
(x) edge (xx)
(xx) edge (c)
;

\end{scope}

\end{tikzpicture}
\end{center}

\end{example}

\subsection{Gradual Semantics}
Quantitative bipolar AFs (QBAFs)~\cite{BARONI2019252} generalise AFs and BAFs by introducing an intrinsic strength of arguments.\footnote{QBAFs are defined using a set with a preorder in \cite{BARONI2019252} but in this paper we use real numbers $\mathbb{R}$.}
\begin{definition}
    A quantitative BAF (QBAF) is a tuple $Q=(\aargs,\aatts,\asupps,\basescore)$ where $(\aargs,\aatts,\asupps)$ is a BAF and $\basescore:\aargs\rightarrow \mathbb{R}$ assigns each $x\in \aargs$ a base score.
\end{definition}
Gradual semantics for QBAFs can be defined in terms of an aggregation and influence function, allowing for modular combinations~\cite{DBLP:journals/corr/abs-1807-06685}.
The \emph{aggregation function} has the form 
$\alpha:\mathbb{R}^{m+k} \mapsto \mathbb{R}$ where the first $m$ variables correspond to the strength values of the attackers and the last $k$ variables to the strengths of the supporters.
The order of the attackers and supporters does not play a role; and different arguments may have the same base score.
To aid readability, we use multi-sets as input, i.e., sets of real-valued elements that allow for multiple instances for each of its elements, and write $\alpha(A,S)$ for multi-sets $A$ of size $m$ and $S$ of size $k$.
See Table~\ref{tab:aggregation functions} for examples of aggregation functions.
The \emph{influence function} has the form $\iota:\mathbb{R}^{2} \mapsto \mathbb{R}$. The first variable is the base score of an argument and the second is the aggregate obtained from $\alpha$, cf.\ Table~\ref{tab:influence functions}.
All considered aggregation and influence functions in Table~\ref{tab:aggregation functions} and Table~\ref{tab:influence functions} respectively satisfy important properties such as \emph{balance} and \emph{monotonicity}~\cite{BARONI2019252,DBLP:conf/atal/Potyka19}; definitions for which are in \ifappendix Appendix~\ref{appendix:prelims}. \else Appendix A in~\cite{arxivABAgrad}. \fi 

A gradual semantics is the result of the iterative application of a given aggregation and influence function.
\begin{definition}
    Let $Q=(\aargs,\aatts,\asupps,\basescore)$ be a QBAF with $|\aargs|=n$,
    let $\alpha$ be an aggregation and $\iota$ be an influence function.
    The \emph{strength evolution process} is a function $s:\mathbb{R}^n\rightarrow \mathbb{R}^n$ defined as follows.
    For all $a\in \aargs$, 
    $t\in \mathbb{N}$,
    \begin{align*}
        s(0)_a=&\ \basescore(a)\\
        s(t+1)_a= &\ \iota\left(\basescore(a),
        \alpha(A_t^a,S_t^a\right) )
    \end{align*}
    where $A_t^a = \{s(t)_b\mid b\in \Att(a)\}$ and $S_t^a=\{s(t)_b\mid b\in \Sup(a)\}$ denote the strengths of the attackers and supporters of $a$ at time $t$, respectively.

    The modular $(\alpha,\iota)$-semantics $\sigma$
    is defined by the limit  $$\sigma(a)=\lim_{t\rightarrow \infty}s(t)_a.$$
\end{definition}
Multiple gradual semantics have been studied in the literature, e.g., \cite{DBLP:journals/corr/abs-1807-06685,DBLP:conf/ecsqaru/AmgoudB17}. We focus on DF-QuAD~\cite{DBLP:conf/kr/RagoTAB16} that uses $\alpha_\Pi$ and $\iota^1_{lin}$ and quadratic energy (QE)~\cite{DBLP:conf/kr/Potyka18} that combines $\alpha_\Sigma$ and $\iota^1_q$.
\begin{table}[t]
    \centering
        \caption{Influence functions}
    \label{tab:influence functions}
    \begin{tabular}{ c c}
       $\iota^k_{lin}(b,w)=b\!+\!\frac{b}{k}\!\cdot\! \min\{0,w\}\!+\!\frac{1\!-\!b}{k}\!\cdot\! \max\{0,w\}$ & \textbf{(lin$_k$)}\\
       $\iota^k_{q}(b,w)=b\!-\!b\!\cdot\! h(\text{-}\frac{w}{k}) +b\!\cdot\! h(\frac{w}{k})$ & \textbf{(QE$_k$)}\\
\qquad where $h(w)=\frac{\max\{0,w\}^2}{(1+\max\{0,w\})^2}$
\end{tabular}
\end{table}
\begin{table}[t]
    \centering
        \caption{Aggregation functions }
    \label{tab:aggregation functions}
    \begin{tabular}{l  r}
         $\alpha_\Sigma(A,S)=\Sigma_{s\in S}s-\Sigma_{a\in A}a$ & \textbf{(Sum)}  \\
         $\alpha_\Pi(A,S) = \Pi_{a\in A}(1-a)-\Pi_{s\in S}(1-s)$  & \textbf{(Prod)}
    \end{tabular}
\end{table}
\tikzstyle{workflow}=[draw,very thick, rectangle, rounded corners, minimum height=0.7cm, minimum width=1cm, 
inner sep=5pt,align=center]
\begin{figure}[b]
    \centering
    \begin{tikzpicture}
        \path
        node[workflow] at (0,0) (aba){{\footnotesize ABAF $D$}}
        node[workflow,label={below:{\scriptsize 1. Abstraction}}] at (2.2,0) (qbaf){{\footnotesize BSAF $\BF_D$}}
        node[workflow,label={below:{\scriptsize 2. Att/Sup strength}}] at (4.4,0) (arg){{\footnotesize $\setag(\aset)$}}
        node[workflow,label={above:{\scriptsize 3. Aggregate}}] at (6.6,.6) (sem){{\footnotesize $\alpha(A,S)$}}
        node[workflow,label={below:{\scriptsize 4. Influence}}] at (6.6,-.6) (asm)
        {{\footnotesize $\iota(b,a)$}}
        ;

        \path[draw,very thick,->,>=stealth]
        (aba) edge (qbaf)
        (qbaf) edge (arg);
        \draw[very thick,->,>=stealth,dash pattern=on 3pt off 1.5pt] (arg)  -- (sem);
        \draw[very thick,->,>=stealth,dash pattern=on 3pt off 1.5pt] (sem)  -- (asm);
        \draw[very thick,->,>=stealth,dash pattern=on 3pt off 1.5pt] (asm)  -- (arg);
        
    \end{tikzpicture}
    \caption{
    Gradual ABA semantics:
    Given ABAF $D$, 1.\ compute the  BSAF $F_D$, 
    2.\ the attacker's and supporter's strength, 3.\ aggregate, and 4.\ apply the influence function to update the assumption's strength. Repeat Steps 2 to 4 until convergence (dashed lines).}
    \label{fig:best route}
\end{figure}
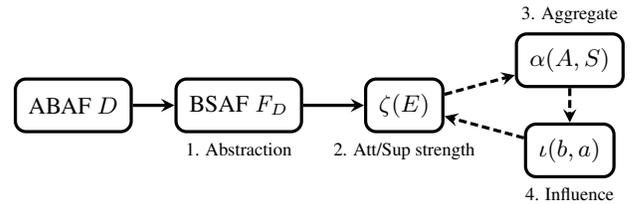
\section{Gradual ABA Semantics}\label{sec:grad ABA}
We introduce gradual semantics for ABA. Thereby, we will utilise BSAFs as they are a syntactical abstraction of ABAFs that concisely represents the interactions between assumptions; all other elements that contribute only indirectly are neglected in the abstraction.

First, let us define quantitative ABAFs.
\begin{definition}
    A quantitative ABAF is a tuple $(\lit,\asm,\rules,\contraryempty,\basescore)$ where  $(\lit,\asm,\rules,\contraryempty)$ is an ABAF and $\basescore:\mathcal{A}\rightarrow \mathbb{R}$ assigns a base score to each $a\in\mathcal{A}$. 
\end{definition}
In this work, we focus on functions $\basescore:\mathcal{A}\rightarrow [0,1]$.
Throughout the paper, we drop `quantitative' and simply write ABAF. 

We introduce \textit{modular} gradual ABA semantics, providing the flexibility to adapt the semantics to different use cases and reuse well-established methods from QBAFs. Our proposed gradual ABA semantics are based on the following procedure, cf.~Figure~\ref{fig:best route}. 
For an ABAF $D\!=\!(\lit,\asm,\rules,\contraryempty,\basescore)$, 
\begin{enumerate}
    \item first, compute the attacks and supports in $D$ to obtain the BSAF representation $\BF_D=(\mathcal{A},\aatts_D,\asupps_D)$;
    \item compute the strength of each set-attack (-support) by applying a \emph{set aggregation function} $\setag$ to the multi-set $C$ corresponding to the attacking (supporting, respectively) set;
    \item for each assumption $a\in \mathcal{A}$, use an aggregation function $\alpha$ to update the strength of $a$, based on the multi-sets $A$ and $S$ corresponding to the strengths of the attackers $\Att(a)$ and supporters $\Sup(a)$ of $a$, computed in Step 2;
    \item apply the influence function $\iota(b,w)$, with $b=\basescore(a)$ the base score of $a$ and $w=\alpha(A,S)$ the result from Step~3,
    \item repeat Steps 2-4, using the updated base scores of the assumptions computed in Step 4. 
\end{enumerate}
We will first identify suitable set aggregation functions $\setag(S)$ in Section~\ref{subsec:aggregate sets BSAF} and
discuss their interplay with aggregation functions in Section~\ref{sec: aggregating sets atts sups}.
We define gradual ABA semantics and analyse their behaviour in Section~\ref{subsec:bsaf semantics}. 

\newcommand{\Maxx}{\textit{Max}}
\subsection{Set Aggregation Functions}\label{subsec:aggregate sets BSAF}
We now identify suitable set-aggregation functions to compute the strength of the attacking and supporting sets.
\begin{definition}
    A set aggregation function has the form $\setag:\mathbb{R}^d\rightarrow \mathbb{R}$ whose 
    parameters 
    correspond to the strength values of an (attacking or supporting) set of assumptions.
    By $\maxval_\setag$, we denote the maximal value in $\setag(\mathbb{R}^d)$.
\end{definition}
Since the order of the assumptions should not play a role, we consider multi-sets $S$ as input of $\setag$ and simply write $\setag(S)$.
We stipulate $\setag(S)\geq 0$ for all $S$; moreover, for most set aggregation functions that we consider, it holds that $\maxval_\setag=1$.

What behaviour do we expect from a set aggregation function $\setag$?
Let us sketch our expectations in an example. 
\begin{example}\label{ex:gradual ABA ex}
Consider an assumption $a$ which is attacked by three sets. For simplicity, we state the attacking sets in terms of their weights:
$A_1=\{0.1,0.2,0.9\}$, $A_2=\{0.9\}$,
and $A_3=\{0,0.4\}$.
We discuss some observations. 
\begin{itemize}
    \item The weight of $A_2$ depends only on a single value. It thus makes intuitively sense to set $\setag(A_2)=0.9$. 
    \item Comparing $A_1$ and $A_2$, we observe that $A_2$ requires fewer assumptions to attack $a$, and all weights in $A_1$ are smaller or equal to the weight in $A_2$. Intuitively, $A_2$ should thus be stronger than $A_1$: the fewer assumptions needed the more effective the attack (similar in spirit to Occam's Razor).
    \item Among the given attacks, $A_3$ contains the lowest weight (weight $0$). 
    Intuitively, an attack should not be stronger than each weakest link (similar in spirit to~\cite{Spaans2021}); thus, $A_3$ is the weakest attack in this setting.
\end{itemize}
\end{example}
Below, we formalise desired principles.
For a set aggregation function $\setag$, we let $\nonmax{S}=\{w\in S\mid w\neq \maxval_\setag\}$ .
\begin{definition}\label{def:setag desiderata}
A set aggregation function $\setag$ satisfies
\begin{description}
     \item[Occam's Razor] iff $S'\subseteq S$ implies $\setag(S)\leq \setag(S')$;\hfill \ocr
     \item[Factuality] iff $\setag(\emptyset)=\maxval_\setag$; \hfill \fact
     \item[Neutrality] iff $\setag(S)=\setag(\nonmax{S})$; \hfill \neut
     \item[Identity] iff $\setag(\{s\})= s $; \hfill \ide
     \item[Weakest link limiting] iff $\setag(S)\leq \min S\!\cup\! \{1\}$; \hfill \wl
    \item[Void] iff
    $\setag(S)=0$ whenever $0\in S$. \hfill \void
 \end{description}
\end{definition}
Void and weakest link limiting appear in the context of argument strength aggregation~\cite{Spaans2021}, formalising that an argument cannot be stronger than its weakest link.
Identity appears also in~\cite{Spaans2021,DBLP:journals/argcom/RossitMDM21}.

\begin{table}[t]
     \centering
\caption{Set aggregation functions}\label{tab:set-aggregation functions}
    \begin{tabular}{l l}
          $\setag_\Sigma(S)=\Sigma_{s\in S}s$ \textbf{(Sum)} &
          $\setag_\bot(S) = \min S\!\cup\! \{1\}$  \textbf{(Min)}\vspace{2pt} \\
         $\setag_\Pi(S) = \Pi_{s\in S} s $  \textbf{(Prod)} & 
         $\setag_\top(S) = \max S\!\cup\! \{0\}$  \textbf{(Max)}   
    \end{tabular}
\end{table}
We study four basic aggregation methods: sum $\setag_\Sigma$, product $\setag_\Pi$, minimum $\setag_\bot$, and maximum $\setag_\top$.
Intuitively, product $\setag_\Pi$ and minimum $\setag_\bot$ are more cautious, they resemble conjunctive aggregation methods, while sum $\setag_\Sigma$ and maximum $\setag_\top$ follow a more credulous approach by adopting disjunctive aggregation methods.
 The functions are stated in Table~\ref{tab:set-aggregation functions}.
We examine our functions based on the above properties. While $\setag_\Pi$ and $\setag_\bot$ satisfy all desired properties, our findings show that $\setag_\Sigma$ and $\setag_\top$ are less suited to the purpose.

\begin{restatable}{proposition}{PropSetagSatisfiesProps}\label{prop:SatDesiderata setag}
    $\setag_\Pi$ and $\setag_\bot$ satisfy all considered properties;
    $\setag_\Sigma$ and $\setag_\top$ satisfy \ide\ only. 
\end{restatable}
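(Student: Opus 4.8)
The plan is to verify, for each of the four set aggregation functions in Table~\ref{tab:set-aggregation functions}, which of the six properties in Definition~\ref{def:setag desiderata} hold, proceeding property by property within each function. Since the claim asserts that $\setag_\Pi$ and $\setag_\bot$ satisfy all six while $\setag_\Sigma$ and $\setag_\top$ satisfy only \ide, the work splits naturally into a set of short positive checks and a set of explicit counterexamples. Before starting, I would pin down the value $\maxval_\setag$ for each: for $\setag_\Pi$, $\setag_\bot$, $\setag_\top$ the supremum over multi-sets of elements in $[0,1]$ is $1$ (attained, e.g., at $\emptyset$ for $\setag_\Pi$ and $\setag_\bot$, and at $\{1\}$ for $\setag_\top$), whereas for $\setag_\Sigma$ it is unbounded on $\mathbb{R}^d$; I would note that if one restricts inputs to $[0,1]$ this does not change the counterexamples below, and in any case $\maxval_{\setag_\Sigma}$ fails to be $1$, which already breaks \fact\ and \neut.

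For the positive direction I would argue as follows. For $\setag_\Pi$: \ocr\ holds because dropping a factor $w\in S\setminus S'$ with $w\in[0,1]$ can only increase the product (monotonicity of products of numbers in $[0,1]$); \fact\ holds since the empty product is $1=\maxval_{\setag_\Pi}$; \neut\ holds since removing factors equal to $1$ leaves the product unchanged; \ide\ is immediate as a one-element product is the element itself; \wl\ holds because $\Pi_{s\in S}s \le \min S$ when all $s\in[0,1]$ (each factor is $\le 1$, so the product is $\le$ any single factor), and $\le 1$ trivially, hence $\le \min S\cup\{1\}$; \void\ holds since a zero factor forces the product to $0$. For $\setag_\bot$: \ocr\ holds since the minimum over a superset is $\le$ the minimum over a subset, and adding $1$ to both sides preserves this; \fact\ holds as $\setag_\bot(\emptyset)=\min\{1\}=1$; \neut\ holds because elements equal to $1$ cannot be the unique minimiser unless everything is $1$, and the explicit $\{1\}$ in the min makes removing them harmless; \ide\ is $\min\{s,1\}=s$ for $s\in[0,1]$; \wl\ is literally the definition; \void\ holds since $0\in S$ makes $\min S\cup\{1\}=0$. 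Each of these is a one-line observation; I would present them compactly, perhaps grouped.

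For the negative direction I would give concrete counterexamples echoing Example~\ref{ex:gradual ABA ex}. For $\setag_\Sigma$: \ide\ holds trivially ($\setag_\Sigma(\{s\})=s$); \fact\ fails since $\maxval_{\setag_\Sigma}\neq 1$ (it is not even finite), so $\setag_\Sigma(\emptyset)=0\neq\maxval_{\setag_\Sigma}$; \neut\ fails for the same reason — with $\nonmax{S}$ defined relative to $\maxval_{\setag_\Sigma}$, which no realistic element equals, $\nonmax{S}=S$ is vacuous but the property is still ill-behaved; more robustly, \ocr\ fails since $S'=\{0.5\}\subseteq S=\{0.5,0.5\}$ yet $\setag_\Sigma(S)=1>0.5=\setag_\Sigma(S')$; \wl\ fails since $\setag_\Sigma(\{0.5,0.5\})=1>0.5=\min\cup\{1\}$; \void\ fails since $\setag_\Sigma(\{0,0.5\})=0.5\neq 0$. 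For $\setag_\top$: \ide\ holds ($\max\{s,0\}=s$ for $s\ge 0$); \ocr\ fails since $S'=\{0.1\}\subseteq S=\{0.1,0.9\}$ but $\setag_\top(S)=0.9>0.1=\setag_\top(S')$; \fact\ fails since $\setag_\top(\emptyset)=\max\{0\}=0\neq 1=\maxval_{\setag_\top}$; \neut\ fails since $\setag_\top(\{0.3,1\})=1$ while $\nonmax{\{0.3,1\}}=\{0.3\}$ gives $0.3$; \wl\ fails via $\setag_\top(\{0.1,0.9\})=0.9>0.1$; \void\ fails via $\setag_\top(\{0,0.9\})=0.9\neq 0$.

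I expect the main obstacle to be mostly bookkeeping rather than mathematical depth: being careful and consistent about the domain of inputs (the general signature is $\mathbb{R}^d\to\mathbb{R}$, but the paper immediately restricts base scores to $[0,1]$ and stipulates $\setag(S)\ge 0$), since several arguments — e.g. \ocr\ and \wl\ for $\setag_\Pi$, and the failure of \fact/\neut\ for $\setag_\Sigma$ — hinge on whether values lie in $[0,1]$. The subtle point worth stating explicitly is how \neut\ interacts with $\maxval_\setag$: for $\setag_\Sigma$ the set $\nonmax{S}$ is essentially always all of $S$ (no finite weight equals the unbounded max), so \neut\ is technically vacuously satisfiable unless one reads it together with \fact; I would therefore lean on the cleaner, unambiguous failures (\ocr, \wl, \void) to disqualify $\setag_\Sigma$ and $\setag_\top$, and mention \fact/\neut\ only as secondary remarks.
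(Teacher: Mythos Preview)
Your proposal is correct and follows essentially the same approach as the paper: a property-by-property check, with direct verifications for $\setag_\Pi$ and $\setag_\bot$ and explicit counterexamples for $\setag_\Sigma$ and $\setag_\top$. The paper's proof is organised function-by-function rather than property-by-property, but the individual arguments and counterexamples are the same in spirit (e.g.\ the paper uses $\{0.2,0.5\}$ and $\{0.2,1\}$ for $\setag_\top$ where you use $\{0.1,0.9\}$ and $\{0.3,1\}$).

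The one substantive difference is that you flag the ambiguity in $\maxval_{\setag_\Sigma}$, whereas the paper simply writes ``the empty sum is $0\neq 1=\maxval_{\setag_\Sigma}$'' and ``$a+1>a$, contradicting \neut'', tacitly treating $\maxval_{\setag_\Sigma}=1$. Under your stricter reading (unbounded $\maxval$), \neut\ would be vacuously satisfied for $\setag_\Sigma$, which would contradict the ``\ide\ only'' claim; so to actually \emph{prove} the proposition as stated you must adopt the paper's convention that the relevant maximal value is $1$ (consistent with the restriction $\basescore:\mathcal{A}\to[0,1]$). Your hedge of ``lean on the cleaner failures'' is fine expositionally but does not by itself discharge \neut\ and \fact\ for $\setag_\Sigma$---just state the convention and proceed as the paper does.
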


We aim to introduce notions of monotonicity and balance
for the composition of aggregation and set aggregation functions,
similar in spirit to monotonicity and balance for influence and aggregation functions~\cite{DBLP:conf/atal/Potyka19,PotykaB24}.
Towards a formalisation, we introduce technical notions that allow us to compare multi-sets.\footnote{A discussion on why dominance, used to compare multi-sets in~\cite{PotykaB24}, is not suitable here is in \ifappendix Appendix~\ref{appendix:prelims}\else Appendix A~\cite{arxivABAgrad}\fi.}
Intuitively, a multi-set $A$ is \emph{superior} to $S$ if $A$ is smaller 
and 
wins the pairwise comparison of their elements.
In the comparison, 
 we disregard elements with maximal strength, we treat them as facts, similar to \citeauthor{prakken2018probabilistic}~\shortcite{prakken2018probabilistic} in the context of probabilistic argumentation.  
This guarantees that superiority is compatible with our desiderata in Definition~\ref{def:setag desiderata}.%
\begin{definition}\label{def:superiority}
    Let $A,S$ be two multi-sets. 
    $A$ is superior to
    $S$, in symbols $A\geqsup S$, iff one of the following holds:
    \begin{enumerate}
        \item $\nonmax{A}=\nonmax{S}=\emptyset$; or
        \item 
        for each $S'\subseteq \nonmax{S}$, $|S'|=|\nonmax{A}|$, there is a bijective function $f:\nonmax{A}\rightarrow S'$, such that for each $w\in \nonmax{A}$, it holds that $w \geq f(w)$.
    \end{enumerate}
\end{definition}
We  can simplify the definition slightly, as we show next.
Below, we let $\Maxx_k(S)$ 
denote the multi-set that contains the $k$ largest element of a multi-set $S$.
\begin{restatable}{lemma}{LemmaEasyDefSuper}
    Let $A,S$ be two multi-sets. 
    $A\geqsup S$ iff 
    $\nonmax{A}=\nonmax{S}=\emptyset$; 
    or
    there is a bijection $f:\nonmax{A}\rightarrow \Maxx_k(\nonmax{S})$, $k=\vert \nonmax{A}\vert$, such that for each $w\in \nonmax{A}$, it holds that $w \geq f(w)$.
\end{restatable}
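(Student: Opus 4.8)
The statement to prove is the equivalence of two characterisations of superiority $A \geqsup S$: the original Definition~\ref{def:superiority} (for \emph{every} size-$k$ subset $S' \subseteq \nonmax{S}$ there is a dominating bijection $f:\nonmax{A}\to S'$) versus a single witness against the $k$ largest elements $\Maxx_k(\nonmax{S})$.

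\medskip

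\noindent\textbf{Proof proposal.}
The plan is to fix notation $a := \nonmax{A}$, $b := \nonmax{S}$, $k := |a|$, and to dispose of the degenerate case first: if $a = b = \emptyset$ both sides hold by their respective clause~1, and if exactly one of $a,b$ is empty then neither side can hold (the left side's clause~2 fails because there is no bijection between a nonempty and an empty set of the required sizes, or because $|b| < k$ makes the quantifier over $S'$ either vacuous-but-then-clause-2-requires-a-bijection-that-cannot-exist, which needs a careful reading — see the obstacle paragraph). So assume $a, b$ are both nonempty and, for clause~2 of the right-hand side to even be stated, $|b| \geq k$ (otherwise $\Maxx_k(b)$ is undefined; I would note that if $|b| < k$ then the left side also fails since no $S' \subseteq b$ of size $k$ exists, making clause~2's universally quantified statement vacuously true — this is a subtlety I flag below).

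\medskip

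\noindent For the substantive direction, assume $|b| \geq k$. The ($\Leftarrow$) direction is immediate: if there is a bijection $f : a \to \Maxx_k(b)$ with $w \geq f(w)$ for all $w \in a$, then for \emph{any} $S' \subseteq b$ with $|S'| = k$, I build the required bijection $g : a \to S'$ as follows. Sort $a$ as $w_1 \geq \dots \geq w_k$, sort $\Maxx_k(b)$ as $m_1 \geq \dots \geq m_k$, and sort $S'$ as $s_1 \geq \dots \geq s_k$. The hypothesis, composed with a rearrangement argument, gives $w_i \geq m_i$ for all $i$ (pairing sorted-to-sorted is always achievable when \emph{some} dominating bijection exists — a standard exchange/Hall-type lemma I would state and prove in one line). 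Since $\Maxx_k(b)$ consists of the $k$ largest elements of $b$, we have $m_i \geq s_i$ for every $i$ (the $i$-th largest of any $k$-subset is at most the $i$-th largest of the top-$k$). Hence $w_i \geq s_i$, and $g(w_i) := s_i$ works.

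\medskip

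\noindent For ($\Rightarrow$), assume the left side: for every size-$k$ subset $S' \subseteq b$ there is a dominating bijection. Instantiate with $S' := \Maxx_k(b)$ (a legitimate size-$k$ subset since $|b| \geq k$); the resulting bijection is exactly the witness the right side demands. This direction is essentially trivial once the degenerate cases are handled. I expect the \textbf{main obstacle} to be purely the bookkeeping around the empty/undersized cases: precisely, showing that when $1 \leq |a|$ but $|b| < k$, Definition~\ref{def:superiority} clause~2 is \emph{not} satisfied (so that the equivalence is not broken by a vacuous-quantifier artefact). Reading clause~2 carefully, it says ``for each $S' \subseteq \nonmax{S}$ with $|S'| = |\nonmax{A}|$, there is a bijection \dots'': if no such $S'$ exists the statement is vacuously true, which would make $A \geqsup S$ hold — but then the simplified RHS is ill-formed. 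I would resolve this by observing that the lemma is implicitly stated under the reading that both clauses require $|\nonmax{A}| \leq |\nonmax{S}|$ whenever $\nonmax{A} \neq \emptyset$ (or by noting the paper only ever applies superiority in that regime), and the one-line rearrangement lemma ($w_i \geq s_i$ pairing from existence of any dominating bijection, via the bubble-sort/exchange argument) is the only genuine mathematical content. The remaining steps are routine sorting manipulations.
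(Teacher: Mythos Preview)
Your proof is correct and follows essentially the same structure as the paper's: the $(\Rightarrow)$ direction is the trivial instantiation $S' := \Maxx_k(\nonmax{S})$ in both, and the $(\Leftarrow)$ direction in both reduces an arbitrary $S'$ to $\Maxx_k(\nonmax{S})$. The only minor difference is that the paper's $(\Leftarrow)$ is slightly slicker: instead of sorting all three multi-sets and invoking a rearrangement lemma to obtain $w_i \geq m_i$, it simply observes that there is a dominating bijection $f':\Maxx_k(\nonmax{S})\to S'$ (since $\Maxx_k$ contains the $k$ largest elements) and composes $f'\circ f$ directly, never needing to re-sort $\nonmax{A}$. Your flagging of the vacuous-quantifier edge case when $|\nonmax{S}| < |\nonmax{A}|$ is more careful than the paper, which silently assumes $\Maxx_k(\nonmax{S})$ has size $k$.
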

\begin{example}
Consider $A=\{0.2,0.2,0.45,0.95,1\}$
and $S=\{0.001,0.01,0.2,0.4,0.9,1,1,1,1\}$. The following bijection verifies that $A$ is superior to $S$, as it maps all elements in $\nonmax{A}=\{0.2,0.2,0.45,0.95\}$
onto elements in  $\Maxx_4(\nonmax{S})=\{0.01,0.2,0.4,0.9\}$ so that $w\geq f(w)$:

\begin{center}
\begin{tabular}{l|c c c c c c c c c c c c c}
         $\nonmax{A}$ & 0.95 & 0.45 & 0.2 & 0.2 \\
         $f(\nonmax{A})$ & 0.9 & 0.4 & 0.2 & 0.01 
\end{tabular}    
\end{center}
\end{example}
Next, we introduce \emph{$\nonmaxm$-equivalence} and show that superiority and $\nonmaxm$-equivalence are closely related: if two multi-sets are mutually superior then they are $\nonmaxm$-equivalent.
\begin{definition}
    Two multi-sets $A,S$ are \emph{$\nonmaxm$-equivalent}, in symbols $A \eqsup S$, iff $\nonmax{A}=\nonmax{S}$.
\end{definition}
\begin{restatable}{proposition}{propBalanceSup}
\label{prop:balance and geqsup}
    ${A} \eqsup {S}$ iff $A\geqsup S$ and ${S}\geqsup{A}$.
\end{restatable}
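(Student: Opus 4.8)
The plan is to prove the two directions of the biconditional $A \eqsup S \iff (A\geqsup S \text{ and } S\geqsup A)$ directly from the definitions, using the simplified characterisation of superiority from the preceding Lemma wherever convenient.

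For the forward direction, assume $A\eqsup S$, i.e. $\nonmax{A}=\nonmax{S}$. If both are empty, then condition (1) of Definition~\ref{def:superiority} holds for both $A\geqsup S$ and $S\geqsup A$, and we are done. Otherwise, since $\nonmax{A}=\nonmax{S}$ as multi-sets, in particular $|\nonmax{A}|=|\nonmax{S}|=:k$, so the only subset $S'\subseteq\nonmax{S}$ of size $k$ is $\nonmax{S}=\nonmax{A}$ itself; the identity map $f=\mathrm{id}$ is then a bijection $\nonmax{A}\to\nonmax{S}$ with $w\geq f(w)$ (in fact $w=f(w)$) for every $w$. Hence $A\geqsup S$, and by the symmetric argument $S\geqsup A$.

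For the converse, assume $A\geqsup S$ and $S\geqsup A$; I want to conclude $\nonmax{A}=\nonmax{S}$. First, each superiority relation forces an equality of cardinalities of the $\nonmax{}$-parts: if $\nonmax{A}=\nonmax{S}=\emptyset$ we are immediately done, and otherwise condition (2) together with its mirror gives bijections in both directions between $\nonmax{A}$ and equal-sized subsets of the other, which pins down $|\nonmax{A}|=|\nonmax{S}|=:k$ (I should be slightly careful here: condition (2) for $A\geqsup S$ only asserts the existence of size-$|\nonmax{A}|$ subsets $S'$ of $\nonmax{S}$, which already requires $|\nonmax{S}|\geq|\nonmax{A}|$; the mirror gives the reverse inequality). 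Now invoke the simplified Lemma: $A\geqsup S$ yields a bijection $f:\nonmax{A}\to\Maxx_k(\nonmax{S})=\nonmax{S}$ with $w\geq f(w)$ for all $w$, and $S\geqsup A$ yields a bijection $g:\nonmax{S}\to\nonmax{A}$ with $v\geq g(v)$ for all $v$. Then $f\circ g:\nonmax{S}\to\nonmax{S}$ is a bijection of a finite multi-set satisfying $v\geq g(v)\geq f(g(v))$ for every $v$, so summing over all $v\in\nonmax{S}$ gives $\sum v \geq \sum f(g(v)) = \sum v$, forcing equality $v=g(v)=f(g(v))$ elementwise. From $v=g(v)$ for all $v$ (as multisets, i.e. the bijection $g$ is weight-preserving) we get that $\nonmax{A}=g(\nonmax{S})$ equals $\nonmax{S}$ as multi-sets, which is exactly $A\eqsup S$.

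The main obstacle I anticipate is the bookkeeping around multi-sets: one must be careful that ``bijection $f:\nonmax{A}\to S'$ with $w\geq f(w)$'' is interpreted at the level of multi-set elements (with multiplicity), and that the telescoping/summation argument is valid — it is, because all quantities are finite real sums — but the argument should explicitly note that a weight-preserving bijection between finite multi-sets witnesses multi-set equality. A secondary subtlety is handling the degenerate case where one of $\nonmax{A}$, $\nonmax{S}$ is empty and the other is not: I should observe that $A\geqsup S$ with $\nonmax{A}=\emptyset$ forces (via condition (2) with $|\nonmax{A}|=0$, or more cleanly via \textbf{(ID)}-flavoured reasoning) $\nonmax{S}=\emptyset$ as well, so this case cannot actually arise under mutual superiority — or simply note it is covered by the cardinality equality established at the outset. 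Everything else is routine once the Lemma is applied.
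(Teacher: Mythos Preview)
Your proposal is correct and follows essentially the same structure as the paper's proof: the forward direction uses the identity bijection, and the converse first establishes $|\nonmax{A}|=|\nonmax{S}|$ via the Lemma before concluding multiset equality. Your summation argument for the last step (composing the two order-reversing bijections and summing to force equality) makes rigorous what the paper compresses into the terse assertion ``$f,f'=\mathrm{id}$''; your version is in fact more careful here.
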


\subsection{Aggregating Set-Attacks and -Supports}\label{sec: aggregating sets atts sups}
We investigate the interplay between set aggregation and aggregation functions.
Thereby, our objects of interest are \emph{multi-sets of multi-sets} 
corresponding to sets of attackers and supporters, denoted by $\multiA$ and $\multiS$, respectively.

We define dominance and balance below. 
Intuitively, $\multiA$ dominates $\multiS$
if for each $S\in \multiS$, there is some (distinct) superior set $A\in \multiA$.
Thereby, we ignore sets that contain the number $0$ (by the weakest link limiting property, we can disregard them). 
Below, we let 
$\Pos(\multiA)=\{A\in \multiA\mid 0\notin A\}$.
\begin{definition}
    Let $\multiA,\multiS$ be sets of multi-sets. 
    \begin{description}
        \item[Dominance]     
        $\multiA$ dominates
        $\multiS$, in symbols $\multiA\geqset \multiS$, 
    iff  \hfill \dom
    \begin{enumerate}
        \item $\Pos(\multiA)=\Pos(\multiS)=\emptyset$;
        or
        \item there is $\multiA'\subseteq \Pos(\multiA)$ and 
    a bijection $f:\multiA' \rightarrow \Pos(\multiS)$ 
    s.t.\ $A\geqsup f(A)$ for all $A\in \multiA'$.
    \end{enumerate} 
    \item[Balance] $\multiA$ and $\multiS$ are balanced, in symbols $\multiA\eqset \multiS$, iff 
    $\multiA\geqset \multiS$ and $\multiS\geqset\multiA$.
    \hfill \bal
    \end{description}
\end{definition}
In contrast to superiority, dominance requires that the dominating set $\multiA$ contains at least as many elements as the dominated set $\multiS$ does (disregarding sets that contain $0$).

For a set of sets $\multiA$, we write $\setag(\multiA)=\{\setag(A)\mid A\in\multiA\}$.

\begin{definition}
    (Set) aggregation functions $\alpha$ and $\setag$ satisfy $(\alpha,\setag)$-monotonicity iff
    \begin{itemize}
        \item $\multiA \geqset \multiS\ \Rightarrow\  \alpha(
        \setag(\multiA),
        \setag(\multiS)
        )\leq 0$
        \item $\multiS\geqset \multiA\ \Rightarrow\  \alpha(
        \setag(\multiA),
        \setag(\multiS)
        )\geq 0$
        \item $\multiA\geqset \multiA'\ \Rightarrow\ \alpha(
        \setag(\multiA),
        \setag(\multiS)
        )\leq \alpha(
        \setag(\multiA'),
        \setag(\multiS)
        )$
        \item $\multiS\geqset \multiS'\ \Rightarrow\  \alpha(
        \setag(\multiA),
        \setag(\multiS)
        )\leq \alpha(
        \setag(\multiA),
        \setag(\multiS')
        )$
    \end{itemize}
\end{definition}

\begin{definition}
A set aggregation function $\setag$ satisfies
\begin{description}
    \item[$\succeq$-preservation] iff
    $\multiA\geqset \multiS\ \Rightarrow\ \setag(\multiA)\geqsup \setag(\multiS)$.
    \hfill \spres
\end{description}
\end{definition}

\begin{restatable}{proposition}{PropSatisfySpres}\label{prop:satisfy spres}
    $\setag_\Pi$ and $\setag_\bot$ satisfy $\spres$.
\end{restatable}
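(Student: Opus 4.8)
The plan is to prove the two claims separately, and for each to unwind the definition of dominance ($\multiA \geqset \multiS$) into the existence of a suitable injection-with-superiority, and then push that structure through the set aggregation function. Recall that $\spres$ requires: $\multiA \geqset \multiS \Rightarrow \setag(\multiA) \geqsup \setag(\multiS)$, where $\setag(\multiA) = \{\setag(A)\mid A\in\multiA\}$ is a multi-set of reals, and superiority $\geqsup$ on such multi-sets is the one from Definition~\ref{def:superiority} (with the simplified characterisation from the lemma). So the target is: starting from the witness $(\multiA', f)$ for dominance, produce a witness for superiority of the real-valued multi-sets $\setag(\multiA)$ over $\setag(\multiS)$.

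First I would dispose of the degenerate case. If $\Pos(\multiA) = \Pos(\multiS) = \emptyset$, then every set in $\multiA$ and in $\multiS$ contains $0$; since both $\setag_\Pi$ and $\setag_\bot$ are $\void$ (by Proposition~\ref{prop:SatDesiderata setag}), $\setag$ maps every such set to $0$. Hence $\setag(\multiA)$ and $\setag(\multiS)$ are multi-sets whose elements are all $0$. For both functions $\maxval_\setag = 1$ (by \fact, since $\setag(\emptyset)=\maxval_\setag$ and here that value is $1$ — this needs a one-line check that $m_{\setag_\Pi}=m_{\setag_\bot}=1$), so $0\neq \maxval_\setag$, meaning $\nonmax{(\setag(\multiA))}$ consists of zeros and likewise for $\setag(\multiS)$. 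Here I need to be a little careful: superiority clause~1 asks $\nonmax{A}=\nonmax{S}=\emptyset$, which fails if there are zeros around; so I would instead fall through to clause~2 and exhibit the bijection mapping each $0$ to a $0$ (or handle the case $\multiA$ or $\multiS$ literally empty directly). This is bookkeeping, not substance.

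For the main case, take the witness from $\multiA \geqset \multiS$: a subset $\multiA' \subseteq \Pos(\multiA)$ and a bijection $f : \multiA' \to \Pos(\multiS)$ with $A \geqsup f(A)$ for all $A \in \multiA'$. I want to show $\setag(\multiA) \geqsup \setag(\multiS)$. The natural candidate witness on the real side is the image bijection $\setag(f) : \setag(\multiA') \to \setag(\Pos(\multiS))$, $\setag(A) \mapsto \setag(f(A))$ (taking a little care with multiplicities — pass to indexed multi-sets if needed). Two facts must be combined. (i) Monotonicity of $\geqsup$ under $\setag$ at the level of individual sets: if $A \geqsup A''$ then $\setag(A) \geq \setag(A'')$. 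This is exactly where I expect the real work to sit, and it is where the choice of $\setag_\Pi$ / $\setag_\bot$ matters — for $\setag_\bot = \min(S\cup\{1\})$ it follows from the simplified superiority characterisation (the $k$ largest non-maximal elements of $A$ pairwise dominate those of $S$, so in particular $\min$ is at least as large, using that maximal elements equal $1$ and don't lower the min); for $\setag_\Pi$ one argues the product over $\nonmax{A}$ dominates the product over the top-$k$ of $\nonmax{S}$ factorwise, and the remaining factors of $S$ (the smaller non-maximal ones, all $\le 1$) only shrink $\setag_\Pi(S)$ further, while the maximal factors $=1$ contribute nothing. (ii) The sets in $\Pos(\multiS)$ not hit by $f$ — i.e. $\Pos(\multiS)\setminus\mathrm{ran}(f)$, and also the $0$-containing sets in $\multiS\setminus\Pos(\multiS)$ — get mapped by $\setag$ to values that are "$\le \maxval_\setag$" and hence, being non-maximal (or maximal, in which case $\neut$/Neutrality lets us discard them), do not obstruct superiority: superiority only constrains $\nonmax{\setag(\multiS)}$, and any extra non-maximal elements on the $S$-side just enlarge the pool from which $\Maxx_k$ is chosen, which is harmless. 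The $0$-containing sets of $\multiS$ map to $0 \le \min \setag(\multiA)$ by \void, so they're dominated by anything; and if $\multiA$ has $0$-containing sets they map to $0$, but these lie in $\nonmax{\setag(\multiA)}$ and I'd need them matched — here I should invoke that such sets are not in $\Pos(\multiA)$ and handle them by observing that $0\in A$ forces, via the structure of dominance, a matching $0$-or-smaller on the $\multiS$ side; alternatively restrict attention to $\Pos$ throughout and note superiority is preserved since $\void$ collapses the rest to $0$'s that pair among themselves.

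The main obstacle, then, is step (i): the lemma-level statement that $A \geqsup A'' \Rightarrow \setag(A) \geq \setag(A'')$ for these two specific $\setag$, done cleanly using the simplified superiority characterisation rather than the raw Definition~\ref{def:superiority}. Everything else — the empty/degenerate cases, discarding maximal elements via \neut, discarding $0$-containing sets via \void and \wl, and assembling the global bijection from the local one $f$ — is routine combinatorics on multi-sets, of the same flavour as the proof of the simplification lemma for $\geqsup$. I would state (i) as an internal claim, prove it for $\setag_\bot$ and $\setag_\Pi$ in two short paragraphs, and then finish by the bijection-composition argument above.
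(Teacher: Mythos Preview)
Your proposal is correct and takes the same approach as the paper: both handle the degenerate $\Pos=\emptyset$ case via \void\ and then reduce the main case to the pointwise claim $A \geqsup S \Rightarrow \setag(A) \geq \setag(S)$, proved for each of $\setag_\Pi$ and $\setag_\bot$ using \neut\ together with the direct factorwise (respectively, minimum) comparison coming from the simplified characterisation of~$\geqsup$. The paper's proof in fact stops right after establishing this pointwise inequality and leaves the assembly of the global superiority witness for $\setag(\multiA)$ over $\setag(\multiS)$ entirely implicit, so your explicit discussion of step~(ii) already goes beyond what the paper writes out.
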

Note that $\setag_\Sigma$ and $\setag_{\top}$ violate $\spres$. 
For $\setag_\Sigma$, this can be seen, for instance, by taking $A=\{0.2\}$ and $S=\{0.1,0.1,0.1\}$. Then $\{A\}\geqset \{S\}$ since $A\geqsup S$. However, $\setag_\Sigma(A)=0.2$ and $\setag_\Sigma(S)=0.3$.

We give conditions under which aggregation and set aggregation functions satisfy the combined monotonicity.%
\begin{restatable}{proposition}{PropSatisfactionSETAGALPHAmon}
Set aggregation function $\setag$ and aggregation function $\alpha$ satisfy $(\alpha,\setag)$-monotonicity whenever $\alpha$ satisfies monotonicity and
$\setag$ satisfies $\succeq$-preservation.
\end{restatable}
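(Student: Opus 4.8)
The plan is to verify each of the four clauses in the definition of $(\alpha,\setag)$-monotonicity separately, in each case chaining two facts. First, $\succeq$-preservation of $\setag$ turns a dominance statement $\geqset$ between \emph{sets of multi-sets} into a superiority statement $\geqsup$ between the ordinary multi-sets $\setag(\multiA)$ and $\setag(\multiS)$ (which lie in $[0,\maxval_\setag]\subseteq\mathbb{R}$). Then monotonicity of $\alpha$ — which is phrased with respect to $\geqsup$ (and $\eqsup$) on such multi-sets, this being precisely the reason $\geqsup$ was introduced in Definition~\ref{def:superiority} — yields the required inequality on $\alpha$ applied to those multi-sets.

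In detail: for the first clause, assume $\multiA\geqset\multiS$; by $\succeq$-preservation $\setag(\multiA)\geqsup\setag(\multiS)$, and the corresponding clause of $\alpha$-monotonicity then gives $\alpha(\setag(\multiA),\setag(\multiS))\leq 0$. The second clause is the mirror image: from $\multiS\geqset\multiA$ we get $\setag(\multiS)\geqsup\setag(\multiA)$ and hence $\alpha(\setag(\multiA),\setag(\multiS))\geq 0$. For the third clause, $\multiA\geqset\multiA'$ gives $\setag(\multiA)\geqsup\setag(\multiA')$, and the ``stronger attackers lower the aggregate'' clause of $\alpha$-monotonicity gives $\alpha(\setag(\multiA),\setag(\multiS))\leq\alpha(\setag(\multiA'),\setag(\multiS))$ for the arbitrary support side $\setag(\multiS)$; the fourth clause is identical with the roles of attackers and supporters swapped. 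The degenerate case of dominance (the clause $\Pos(\multiA)=\Pos(\multiS)=\emptyset$) needs no separate treatment, since $\succeq$-preservation is assumed to hold for all inputs and Proposition~\ref{prop:satisfy spres} has already accounted for it.

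There is no genuine obstacle here: the statement is the clean compositional consequence of its two hypotheses, with all of the combinatorial content already discharged in Proposition~\ref{prop:satisfy spres} and in the definitions of $\geqsup$ and $\geqset$. The one point requiring care is purely bookkeeping — ensuring that each of the four clauses of $\alpha$-monotonicity is instantiated with precisely the multi-sets $\setag(\multiA)$ and $\setag(\multiS)$ produced by $\succeq$-preservation, and paired in the matching direction (the attacker side for the third clause, the supporter side for the fourth); Proposition~\ref{prop:balance and geqsup}, relating mutual superiority to $\nonmaxm$-equivalence, is available should one prefer to route the sign clauses through $\eqsup$.
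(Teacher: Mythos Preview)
Your overall strategy mirrors the paper's two-line proof: transport $\geqset$ to a relation on the ordinary multi-sets $\setag(\multiA)$, $\setag(\multiS)$ via $\succeq$-preservation, then invoke monotonicity of $\alpha$ clause by clause.

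There is, however, a type mismatch you paper over. You write that ``monotonicity of $\alpha$ \ldots\ is phrased with respect to $\geqsup$'', but in the paper monotonicity of an aggregation function (Appendix~\ref{appendix:prelims}) is stated with respect to the \emph{dominance} relation $\geqels$ of Definition~\ref{def:dominance multiset 1}, not the superiority relation $\geqsup$ of Definition~\ref{def:superiority}. The two relations are incomparable: for instance $\{0.2\}\geqsup\{0.1,0.1,0.1\}$ while $\{0.2\}\not\geqels\{0.1,0.1,0.1\}$, and indeed $\alpha_\Sigma(\{0.2\},\{0.1,0.1,0.1\})=0.1>0$, so a $\geqsup$-version of the first monotonicity clause already fails for $\alpha_\Sigma$. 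Hence from $\setag(\multiA)\geqsup\setag(\multiS)$ you cannot directly feed into the $\geqels$-based clauses of $\alpha$-monotonicity. The paper's own proof also elides this point; what is actually needed is the observation that, because $\geqset$ requires $|\Pos(\multiA)|\geq|\Pos(\multiS)|$ and a bijection onto $\Pos(\multiS)$, the pointwise inequalities $\setag(A)\geq\setag(f(A))$ established in the proof of Proposition~\ref{prop:satisfy spres} (together with \void, so that sets outside $\Pos$ map to $0$) in fact assemble into $\setag(\multiA)\geqels\setag(\multiS)$ rather than $\geqsup$. Your argument does not supply this bridge.
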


\subsection{Introducing Gradual ABA Semantics}\label{subsec:bsaf semantics}
We are now ready to define and study gradual ABA semantics.
As depicted in Figure~\ref{fig:best route}, a key part of the gradual evaluation of an ABAF $D=(\lit,\asm,\rules,\contraryempty)$ involves its BSAF representation $\BF_D=(\mathcal{A},\aatts_D,\asupps_D)$. 
In this section, we will thus introduce gradual BSAF semantics first and define gradual ABAF semantics based on the BSAF semantics.

\begin{definition}
    A quantitative BSAF is a tuple $F=(\aargs,\aatts,\asupps,\basescore)$ where $(\aargs,\aatts,\asupps)$ is a BSAF and $\basescore:\aargs\rightarrow \mathbb{R}$ is a function that assigns a base score to each $a\in\aargs$.
\end{definition}
In this paper, we focus on functions $\basescore:\mathcal{A}\rightarrow [0,1]$.
We drop `quantitative' and simply write BSAF.

\begin{definition}
    Let $F=(\aargs,\aatts,\asupps,\basescore)$  be a BSAF with $|\aargs|=n$,
    let $\setag$ be a set aggregation function, 
    $\alpha$ an aggregation function, and $\iota$ an influence function. 
    The \emph{strength evolution process} is a function $s:\mathbb{R}^n\rightarrow \mathbb{R}^n$ defined as follows.
    For all $a\in \aargs$, 
    $t\in \mathbb{N}$,
    \begin{align*}
        s(0)_a=&\ \basescore(a)\\
        s(t+1)_a= &\ \iota\left(\basescore(a),
        \alpha(A_t^a,S_t^a\right)
    \end{align*}
    where $A_t^a = \setag(\{s(t)_B\mid B\in \Att(a)\})$ and $S_t^a=\setag(\{s(t)_B\mid B\in \Sup(a)\})$ are multi-sets containing the aggregated strengths of the attackers and supporters of $a$ at time $t$, respectively.

    The modular $(\setag, \alpha,\iota)$-semantics $\sigma_\BF$
    is defined by the limit  $$\sigma_\BF(a)=\lim_{t\rightarrow \infty}s(t)_a.$$
\end{definition}
We say a semantics $\sigma$ is well-defined if $\sigma(a)$ converges (admits a unique fixed point) for each $a\in \aargs$. 

Below, we define gradual ABA semantics.
\begin{definition}
    Let $D=(\lit,\asm,\rules,\contraryempty,\basescore)$ be an ABAF and $F_D$ be the corresponding BSAF. We define $\sigma_{D}:=\sigma_{\BF_D}$.
\end{definition}
We drop the subscripts $\BF$, $D$, $\BF_D$ when it is clear from the context.
In the remainder of the paper, we identify ABAF and BSAF semantics. All results hold for both semantics.
\begin{remark}
\label{re:assiciate ABAF and BSAF}
In the remainder of this section, we associate an ABAF $D=(\lit,\asm,\rules,\contraryempty,\basescore)$ with the corresponding BSAF $\BF_D=(\mathcal{A},\aatts_D,\asupps_D,\basescore)$. 
To avoid confusion between ABA arguments and BSAF arguments, we write ``assumptions'' to refer to the (formally abstract) arguments in $\BF_D$. 
\end{remark}

The triple $(\setag, \alpha,\iota)$ which uniquely defines a modular semantics $\sigma$ is called the kernel of $\sigma$. Below, we adapt the notion of an elementary kernel definition by Potyka~(\citeyear{DBLP:conf/atal/Potyka19}).
\begin{definition}
    A kernel is a triple $(\alpha,\iota,\setag)$ consisting of an aggregation, influence, and a set-aggregation function. 
    The kernel is elementary iff all functions are Lipschitz-continuous, satisfy monotonicity and balance, $\alpha$ satisfies neutrality and $\setag$ satisfies void and $\succeq$-preservation.\footnote{Neutrality, monotonicity, and balance for aggregation and influence functions are defined in \ifappendix Appendix~\ref{appendix:prelims}; \else Appendix A; \fi Lipschitz-continuity is defined in \ifappendix Appendix~\ref{appendix:convergence}. \else Appendix C~\cite{arxivABAgrad}. \fi}
\end{definition}

\newcommand{\indep}{\textbf{(IND)}}
\newcommand{\anon}{\textbf{(A)}}
\newcommand{\dir}{\textbf{(DIR)}}
\newcommand{\IAM}{\textbf{(IAM)}}
\newcommand{\ISM}{\textbf{(ISM)}}
\newcommand{\RM}{\textbf{(RM)}}
\newcommand{\IB}{\textbf{(IB)}}
\newcommand{\RB}{\textbf{(RB)}}
\newcommand{\dual}{\textbf{(D)}}
\newcommand{\open}{\textbf{(O)}}

We study the central properties of gradual semantics~\cite{PotykaB24}
in the context of gradual ABA semantics.
Below, we define set-operations component-wise, e.g., $\BF\cup \BF'=(\aargs\cup \aargs',\aatts\cup \aatts',\asupps\cup \asupps',\basescore\cup \basescore')$ for two BSAFs $\BF=(\aargs,\aatts,\asupps,\basescore)$,  
$\BF'=(\aargs',\aatts',\asupps',\basescore')$. 

\begin{definition}
Let $\BF=(\aargs,\aatts,\asupps,\basescore)$,  
$\BF'=(\aargs',\aatts',\asupps',\basescore')$ be two BSAFs. Provided $\sigma_F$ and $\sigma_{F'}$ are well-defined, a gradual BSAF semantics $\sigma$ satisfies,
\begin{description}
     \item[Anonymity] iff
     $\sigma_F(a)=\sigma_{F'}(f(a))$
     for every edge- and label-preserving graph isomorphism $f:F\rightarrow F'$;
     \hfill \anon
     \item[Independence] iff $F\cap F'=\emptyset$ implies $\sigma_{F\cup F'}$ is well-defined and $\sigma_F(a)=\sigma_{F\cup F'}(a)$ for all $a\in A$;
     \hfill \indep
     \item[Directionality] iff $A=A'$, $\basescore=\basescore'$, and $A'=A\cup \{(T,h)\}$, then for all $c\in A$, so that there is no directed path from any $t\in T$ to $c$, $\sigma_F(c)=\sigma_{F'}(c)$.
     \hfill \dir

     \item[Individual A-Monotonicity] iff $\Att(a)\geqset \Sup(a)$ implies $\sigma(a)\leq \basescore(a)$;
     \hfill \IAM     
     \item[Individual S-Monotonicity] iff $\Sup(a)\geqset \Att(a)$ implies $\sigma(a)\geq \basescore(a)$;
     \hfill \ISM
     \item[Relative Monotonicity] iff $\basescore(a)\!\leq\! \basescore(b)$, $\Att(a)\!\geqset\! \Att(b)$, and $\Sup(b)\geqset \Sup(a)$ implies $\sigma(a)\leq \sigma(b)$;
     \hfill \RM
     \item[Individual Balance] iff $\Att(a)\eqset \Sup(a)$ implies $\sigma(a)=\basescore(a)$;
     \hfill \IB
     \item[Relative Balance] iff $\basescore(a) \!=\! \basescore(b)$, $\Att(a)\!\eqset\! \Att(b)$, and $\Sup(b)\!\eqset\! \Sup(a)$ implies $\sigma(a) = \sigma(b)$;
     \hfill \RB
 \end{description}
\end{definition}
We show that anonymity, independence and directionality hold for all modular semantics for BSAFs. For semantics with elementary modular kernel, all remaining properties are satisfied. The theorem below summarises our observations. 

\begin{restatable}{theorem}{ThmProperties}
    Each modular semantics satisfies  \indep, \anon, \dir. Each modular semantics with an elementary kernel additionally satisfies $\IAM$, \ISM, \RM, \IB, \RB.%
\end{restatable}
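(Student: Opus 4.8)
The theorem bundles together eight properties, so the plan is to split the argument into two tiers: the three structural properties (\indep, \anon, \dir) which hold for \emph{every} modular semantics, and the five monotonicity/balance properties which additionally require an elementary kernel. I would handle each tier separately, and within each tier reduce the BSAF-level statement to a statement about the single-step update operator $s(\cdot)$ together with a convergence/limit argument.

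For the structural tier I would proceed as follows. \anon\ is immediate: an edge- and label-preserving isomorphism $f:F\to F'$ maps $\Att_F(a)$ to $\Att_{F'}(f(a))$ and $\Sup_F(a)$ to $\Sup_{F'}(f(a))$ bijectively while preserving base scores, so a straightforward induction on $t$ shows $s_F(t)_a = s_{F'}(t)_{f(a)}$ for all $t$, and the claim follows by taking limits. \indep\ follows similarly: if $F\cap F'=\emptyset$ then the neighbourhoods $\Att(a),\Sup(a)$ of any $a\in A$ inside $F\cup F'$ coincide with those inside $F$ (no new attacks or supports into $a$ are added), so again $s_{F\cup F'}(t)_a = s_F(t)_a$ by induction on $t$; well-definedness of $\sigma_{F\cup F'}$ reduces to well-definedness of $\sigma_F$ and $\sigma_{F'}$ since the update operator on the disjoint union is the product of the two operators. \dir\ is the only slightly delicate one: adding an edge $(T,h)$ cannot affect any $c$ that is not reachable from $T$; formally I would argue by induction on $t$ that for every $c$ with no directed path from any $t\in T$ to $c$ we have $s_F(t)_c = s_{F'}(t)_c$, using that the set of vertices reachable-from-$T$ is closed under taking attackers/supporters-of, hence its complement (the ``upstream'' part containing $c$) evolves identically in $F$ and $F'$.

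For the monotonicity/balance tier, the key observation is that an elementary kernel makes the one-step operator well-behaved, and I would cite the machinery already set up in the excerpt. Convergence (well-definedness) of semantics with an elementary kernel follows from Lipschitz-continuity of $\alpha,\iota,\setag$ via a Banach-fixed-point / contraction argument (this is the role of the Lipschitz hypothesis in the definition of elementary, paralleling Potyka's treatment) — I would invoke the convergence result from the appendix. Given convergence, let $a^\ast=\sigma(a)$ be the fixed point, so $a^\ast = \iota(\basescore(a),\alpha(\setag(\multiA),\setag(\multiS)))$ where $\multiA$ collects the limit-strength multi-sets of the attacking sets and $\multiS$ those of the supporting sets. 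For \IAM: from $\Att(a)\geqset\Sup(a)$ and Proposition~\ref{prop:satisfy spres} ($\setag$ satisfies \spres, part of elementarity) we get $\setag(\multiA)\geqsup\setag(\multiS)$; then $(\alpha,\setag)$-monotonicity (which holds because $\alpha$ is monotone and $\setag$ satisfies \spres, by the proposition on $(\alpha,\setag)$-monotonicity) yields $\alpha(\setag(\multiA),\setag(\multiS))\le 0$; finally monotonicity of $\iota$ in its second argument, together with the normalisation $\iota(b,0)=b$, gives $a^\ast\le\basescore(a)$. \ISM\ is symmetric. \IB\ combines \IAM\ and \ISM\ (from $\Att(a)\eqset\Sup(a)$, i.e.\ mutual dominance, one gets both $\sigma(a)\le\basescore(a)$ and $\sigma(a)\ge\basescore(a)$), alternatively via Proposition~\ref{prop:balance and geqsup} and the balance clause for $(\alpha,\setag)$-monotonicity plus balance of $\iota$. \RM\ is the comparative version: comparing the fixed-point equations for $a$ and $b$, one would use $\Att(a)\geqset\Att(b)$ and $\Sup(b)\geqset\Sup(a)$ with the two ``relative'' clauses of $(\alpha,\setag)$-monotonicity to conclude $\alpha(\setag(\multiA_a),\setag(\multiS_a))\le\alpha(\setag(\multiA_b),\setag(\multiS_b))$, then monotonicity of $\iota$ in both arguments with $\basescore(a)\le\basescore(b)$; \RB\ is its equality analogue.

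\textbf{Main obstacle.} The genuinely nontrivial point is that the monotonicity/balance conclusions are about the \emph{limits} $\sigma(a)$, not about a single update step, so one cannot simply quote $(\alpha,\setag)$-monotonicity pointwise — the strengths of the attackers/supporters appearing in $\multiA,\multiS$ are themselves limits, and one needs the ordering hypotheses ($\geqset$, $\eqset$) to be stable under the iteration. The cleanest route is to prove convergence first (via Lipschitz/contraction), so that the fixed-point characterisation $a^\ast=\iota(\basescore(a),\alpha(\setag(\multiA),\setag(\multiS)))$ is available with $\multiA,\multiS$ built from the actual limit strengths, and then apply the combined-monotonicity propositions directly to that single fixed-point equation. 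A subtle side-issue for \RM/\RB\ is that $a$ and $b$ may live in the same framework with interdependent strengths; here one should note that the hypotheses only constrain the \emph{immediate} attackers/supporters and base scores, and the relevant monotonicity clauses are exactly designed to be applied to the fixed-point equations in isolation, so no global coupling argument is needed beyond convergence. I would therefore expect roughly 70\% of the proof effort to go into the convergence lemma and the careful statement of the fixed-point equation, with the eight properties themselves then falling out quickly from the already-established propositions.
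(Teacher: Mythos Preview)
Your approach to all eight properties is essentially the same as the paper's: the structural properties are dispatched by observing that the update operator depends only on local neighbourhoods (the paper is terser than your induction-on-$t$ sketch, but the content is identical), and the monotonicity/balance properties are obtained by writing down the fixed-point equation $\sigma(a)=\iota(\basescore(a),\alpha(\setag(\Att(a)),\setag(\Sup(a))))$ and then applying $(\alpha,\setag)$-monotonicity followed by monotonicity/balance of $\iota$, exactly as you propose. \IB\ and \RB\ are derived from \IAM/\ISM\ and \RM\ in both your plan and the paper's proof.

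There is, however, a genuine misreading in your ``main obstacle'' discussion. You propose to spend roughly 70\% of the effort on a convergence lemma, claiming that Lipschitz-continuity of $\alpha,\iota,\setag$ yields a contraction via Banach's theorem. Two problems: (i) Lipschitz-continuity alone does \emph{not} give a contraction --- you need the composite Lipschitz constant to be strictly less than $1$, and the paper's own convergence results only establish this under additional structural hypotheses (acyclicity, or bounds on in-degree and attacker size relative to the parameter $k$); elementary kernels do \emph{not} converge in general. (ii) More importantly, convergence is not something you need to prove here at all: the properties \IAM, \ISM, \RM, \IB, \RB\ are defined in the paper with the proviso ``Provided $\sigma_F$ and $\sigma_{F'}$ are well-defined'', so well-definedness is a standing hypothesis and the fixed-point equation is simply available. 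The paper's proof accordingly just writes ``since $\sigma$ is a fixed point of the strength evolution process'' and moves on. Drop the convergence detour and your proof is complete and correct.
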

\begin{proof}[Proof (sketch).]
We present the proof of \IAM; the remaining proofs are similar and can be found in the appendix.
 Let $\sigma$ denote a semantics with an elementary kernel and let $A_a=\setag(\Att(a))$ and $S_a=\setag(\Sup(a))$.
Then, since $\sigma$ is a fixed point of the strength evolution process, $$\sigma(a)=\iota(\basescore(a),
        \alpha(A_a,S_a)).$$
        By assumption, $\Att(a)\geqset \Sup(a)$.
        Since $\alpha$ is monotone and $\setag$ satisfies $\succeq$-preservation, it holds that 
        they satisfy $(\alpha,\setag)$-monotonicity. We thus obtain $\alpha(A_a,S_a)\leq 0$. Hence, $\iota(\basescore(a), \alpha(A_a,S_a)) \leq \basescore(a)$ by monotonicity and balance of $\iota$.
        We obtain $\sigma(a)\leq \basescore(a)$.\qedhere
\end{proof}

QBAF semantics are a special case of BSAF semantics, in the following sense. 
\begin{restatable}{proposition}{propQBAFtoBSAF}\label{prop:same as QBAF kernek}
    Let $\BF=(\aargs,\aatts,\asupps,\basescore)$ be a BSAF with $|\aset|=1$ for all $(\aset,a)\in \aatts\cup \asupps$. 
    Let $\sigma$ be a modular semantics with kernel $(\setag,\alpha,\iota)$ where $\setag$ satisfies \ide. Then, 
     $\sigma_\BF$ corresponds to a QBAF semantics with kernel $(\alpha,\iota)$.
\end{restatable}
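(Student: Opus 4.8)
The plan is to first pin down which QBAF the semantics $\sigma_\BF$ is claimed to coincide with, and then verify that the two strength evolution processes agree term by term. Since every $(\aset,a)\in\aatts\cup\asupps$ satisfies $|\aset|=1$, the underlying BSAF is just a re-encoding of a BAF: I would put $\aatts_Q=\{(b,a)\mid(\{b\},a)\in\aatts\}$ and $\asupps_Q=\{(b,a)\mid(\{b\},a)\in\asupps\}$, and let $Q=(\aargs,\aatts_Q,\asupps_Q,\basescore)$ be the QBAF with the same base score. For every $a\in\aargs$ the map $\{b\}\mapsto b$ is then a bijection between the attacking sets of $a$ in $\BF$ and the attackers of $a$ in $Q$ (and likewise for supports); being injective, it does not merge any multiplicities.

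Next I would prove by induction on $t\in\mathbb{N}$ that the strength evolution process $s$ of the $(\setag,\alpha,\iota)$-semantics on $\BF$ and the strength evolution process $s'$ of the $(\alpha,\iota)$-semantics on $Q$ satisfy $s(t)_a=s'(t)_a$ for all $a\in\aargs$. The base case holds because both equal $\basescore(a)$. For the step, assume $s(t)_b=s'(t)_b$ for all $b$. In the BSAF process the attacker input is $A_t^a=\setag\bigl(\{\,s(t)_B\mid B\in\Att(a)\,\}\bigr)$, where each $s(t)_B$ is the multiset of strengths of the members of $B$; since each $B\in\Att(a)$ is a singleton $\{b\}$, this multiset is $\{s(t)_b\}$, and by \ide\ we get $\setag(\{s(t)_b\})=s(t)_b$. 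Hence $A_t^a=\{\,s(t)_b\mid\{b\}\in\Att(a)\,\}$, which, by the bijection above and the induction hypothesis, is exactly the multiset of attacker strengths used by the QBAF process on $Q$ at time $t$; the same argument gives that $S_t^a$ equals the QBAF supporter-strength multiset. Since both processes then set the new value to $\iota\bigl(\basescore(a),\alpha(A_t^a,S_t^a)\bigr)$, we conclude $s(t+1)_a=s'(t+1)_a$.

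Taking limits as $t\to\infty$ on both sides yields $\sigma_\BF(a)=\lim_{t}s(t)_a=\lim_{t}s'(t)_a$, so $\sigma_\BF$ is precisely the modular QBAF semantics with kernel $(\alpha,\iota)$ applied to $Q$; in particular one side converges iff the other does, so well-definedness transfers. The proposition is essentially a sanity check that \ide\ makes set aggregation transparent on singletons, and I do not expect a genuine obstacle. The main difficulty, such as it is, is purely bookkeeping: checking that the singleton-to-element identification preserves the multiset structure of $\Att(a)$ and $\Sup(a)$ (it does, since $\{b\}\mapsto b$ is injective), and handling the degenerate case $\Att(a)=\emptyset$ or $\Sup(a)=\emptyset$, where $A_t^a$ (resp. $S_t^a$) is the empty multiset in both processes, so the induction goes through unchanged.
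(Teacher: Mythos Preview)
Your proposal is correct and follows the same approach as the paper, which simply notes in one line that $\setag$ acts as the identity on singletons by \ide. Your induction on $t$ is a faithful and more detailed unfolding of that observation.
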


We investigate the convergence of modular semantics with elementary kernels. 
First, if the BSAF is acyclic then convergence is guaranteed.
We define cycles with respect to the \emph{primal graph}~\cite{DvorakKUW24} 
where each hyperedge $(\aset,a)$ is interpreted as simple edges $(e,a)$ for each $e\in \aset$.
A BSAF is acyclic if its corresponding primal graph is. 
\begin{restatable}{proposition}{propConvergeneacyclic}
    Let $\BF=(\aargs,\aatts,\asupps,\basescore)$ be an acyclic BSAF.
    Then each elementary modular semantics converges.
\end{restatable}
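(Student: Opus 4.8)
The plan is to prove convergence of an elementary modular semantics on an acyclic BSAF by a topological-order induction, showing that the strength evolution process stabilises layer by layer in finitely many steps, so that the limit exists and is in fact reached after a finite number of iterations.

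First I would fix the BSAF $\BF=(\aargs,\aatts,\asupps,\basescore)$ and consider its primal graph, which by assumption is acyclic. Since $\aargs$ is finite, there is a topological ordering $a_1,\dots,a_n$ of the assumptions such that whenever $a_j$ appears in some attacking or supporting set $\aset$ with $(\aset,a_i)\in\aatts\cup\asupps$, we have $j<i$. I would prove by strong induction on the position $i$ in this order that there exists $t_i\in\mathbb{N}$ such that $s(t)_{a_i}$ is constant for all $t\ge t_i$; call this stable value $\sigma(a_i)$. For the base case, a source node $a_1$ has no incoming hyperedges, so $A_t^{a_1}=S_t^{a_1}=\emptyset$ for all $t$, hence $s(t+1)_{a_1}=\iota(\basescore(a_1),\alpha(\emptyset,\emptyset))$ is already constant from $t=1$ on (here I would use that $\setag$ applied to the empty family of sets is the empty multi-set, and $\alpha(\emptyset,\emptyset)$ is a fixed number). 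For the inductive step at $a_i$, every assumption occurring in any set of $\Att(a_i)\cup\Sup(a_i)$ sits strictly earlier in the order, so by the induction hypothesis there is a common time $T=\max\{t_j : a_j \text{ occurs in some set attacking/supporting } a_i\}$ after which all those strengths are frozen at their limit values; consequently the multi-sets $\{s(t)_B : B\in\Att(a_i)\}$ and $\{s(t)_B : B\in\Sup(a_i)\}$ are constant for $t\ge T$, hence so are $A_t^{a_i}=\setag(\dots)$ and $S_t^{a_i}=\setag(\dots)$, and therefore $s(t+1)_{a_i}=\iota(\basescore(a_i),\alpha(A_t^{a_i},S_t^{a_i}))$ is constant for $t\ge T+1$. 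This closes the induction, and taking $t^*=\max_i t_i$ we conclude $s(t)$ is eventually constant, so the limit defining $\sigma$ exists and is a fixed point; thus each $\sigma(a)$ converges and the semantics is well-defined.

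The only subtlety I would be careful about is matching the strength evolution process in the BSAF definition exactly: the aggregated attacker multi-set is $A_t^a=\setag(\{s(t)_B \mid B\in\Att(a)\})$, i.e.\ $\setag$ is applied to each set $B$ (viewed as the multi-set of strengths of its members at time $t$) and the results collected into a multi-set. What makes the argument go through cleanly is precisely that $\setag(B)$ depends only on the strengths of the members of $B$, all of which are topologically earlier than $a$; nothing about the elementary-kernel hypotheses (Lipschitz-continuity, monotonicity, balance, neutrality, void, $\succeq$-preservation) is actually needed for acyclic convergence beyond the bare fact that $\alpha$, $\iota$, $\setag$ are functions — the acyclicity does all the work. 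I would remark on this, since it shows the proposition is genuinely the ``easy'' convergence case, in contrast to the cyclic case which presumably requires the contraction/Lipschitz machinery of Appendix C.

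The main obstacle, such as it is, is purely bookkeeping: one must be sure that the topological order exists for the \emph{primal} graph (immediate, since acyclic + finite) and that ``$a_j$ occurs in some set attacking/supporting $a_i$'' is exactly the edge relation of the primal graph, so that ``earlier in the order'' is the correct hypothesis to invoke. A second minor point is handling arguments that have incoming hyperedges but from sets that happen to be empty, or the degenerate case $\Att(a)=\Sup(a)=\emptyset$ for a non-source node — these are subsumed by the induction since the relevant max is over an empty index set, giving $T=0$. I would present the proof at the level of detail above and leave the elementary arithmetic of ``max of finitely many naturals'' implicit.
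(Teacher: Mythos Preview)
Your proposal is correct and takes essentially the same approach as the paper: a topological-order argument on the primal graph showing strengths stabilise layer by layer. Your write-up is considerably more detailed than the paper's brief sketch, and your observation that the elementary-kernel hypotheses are not actually used (only acyclicity matters) is accurate and worth keeping.
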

We establish convergence guarantees for the general case.
A function $f$ converges with exponential speed to a fixed point $x^*$ if $\vert f^n(x)-x^*\vert \leq cd^n$ for some $c\!\geq\! 0$ and $0\!\leq\! d\!<\!1$.
\begin{restatable}
{proposition}{convergenceP}
Let $\BF=(\aargs,\aatts,\asupps,\basescore)$ be a BSAF; let $(\setag,\alpha,\iota)$ be an elementary kernel with 
Lipschitz-constants $\lambda_\setag$, $\lambda_\alpha$, and $\lambda_\iota$, respectively. 
Let $\lambda_D=\lambda_\setag\cdot \lambda_\alpha\cdot \lambda_\iota$.
If $\lambda_D< 1$ then the strength evolution process $s_D$ converges with exponential speed to a fixed point $x^*$.
\end{restatable}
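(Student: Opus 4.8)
The plan is to show that the strength evolution operator $s_D:\mathbb{R}^n\to\mathbb{R}^n$ is a contraction with respect to a suitable norm and then invoke the Banach fixed point theorem, which yields both existence of a unique fixed point $x^*$ and exponential convergence of the iterates $s_D^n(x)$ to it. First I would fix notation: write the update for assumption $a$ as $s_D(x)_a = \iota\bigl(\basescore(a),\alpha(\setag(\{x_B\mid B\in\Att(a)\}),\setag(\{x_B\mid B\in\Sup(a)\}))\bigr)$, where for a set $B$ the symbol $x_B$ denotes the tuple of strengths of the assumptions in $B$. The key is to estimate $|s_D(x)_a - s_D(y)_a|$ in terms of $\|x-y\|_\infty$ by composing the Lipschitz estimates of the three component functions.

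The core chain of inequalities would proceed as follows. Fix $a$ and two strength vectors $x,y$. By Lipschitz-continuity of $\iota$ with constant $\lambda_\iota$ (and using that $\iota$'s first argument $\basescore(a)$ is unchanged), $|s_D(x)_a - s_D(y)_a| \le \lambda_\iota\,|\alpha(A^x,S^x) - \alpha(A^y,S^y)|$ where $A^x = \setag(\{x_B\mid B\in\Att(a)\})$, etc. Next, Lipschitz-continuity of $\alpha$ with constant $\lambda_\alpha$ gives $|\alpha(A^x,S^x) - \alpha(A^y,S^y)| \le \lambda_\alpha\cdot\bigl(\max_{B\in\Att(a)}|\setag(x_B) - \setag(y_B)| \;\vee\; \max_{B\in\Sup(a)}|\setag(x_B)-\setag(y_B)|\bigr)$ — i.e. $\lambda_\alpha$ times the $\infty$-distance between the input multi-sets, which requires a fixed pairing of the entries of $A^x$ with those of $A^y$ (the natural one indexed by the attacking/supporting set $B$). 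Then Lipschitz-continuity of $\setag$ with constant $\lambda_\setag$ gives $|\setag(x_B) - \setag(y_B)| \le \lambda_\setag\,\|x_B - y_B\|_\infty \le \lambda_\setag\,\|x-y\|_\infty$. Chaining, $|s_D(x)_a - s_D(y)_a| \le \lambda_\setag\lambda_\alpha\lambda_\iota\,\|x-y\|_\infty = \lambda_D\,\|x-y\|_\infty$, and taking the max over $a$ yields $\|s_D(x)-s_D(y)\|_\infty \le \lambda_D\,\|x-y\|_\infty$. Since $\lambda_D < 1$, $s_D$ is a contraction on the complete metric space $(\mathbb{R}^n,\|\cdot\|_\infty)$ (or on the closed invariant set $[0,1]^n$, since the influence functions map into $[0,1]$), so by Banach's theorem there is a unique fixed point $x^*$ and $\|s_D^n(x) - x^*\|_\infty \le \frac{\lambda_D^n}{1-\lambda_D}\|s_D(x)-x\|_\infty$, which is exponential speed with $c = \frac{\|s_D(x)-x\|_\infty}{1-\lambda_D}$ and $d = \lambda_D$; per-coordinate this gives the stated bound for each $\sigma(a)$.

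The main obstacle is the middle step: making precise what "Lipschitz-continuity of $\alpha$" means when $\alpha$ takes multi-sets of varying sizes as arguments, and ensuring the pairing used in the estimate is legitimate. Concretely, $\alpha:\mathbb{R}^{m+k}\to\mathbb{R}$ is order-independent, and here the multi-set $A^x$ has one entry $\setag(x_B)$ per $B\in\Att(a)$ — so $A^x$ and $A^y$ have the same cardinality and there is a canonical index set ($\Att(a)$ itself) along which to compare them entrywise. I would need to state (or pull from the appendix's definition of Lipschitz-continuity for aggregation functions) that $|\alpha(A,S)-\alpha(A',S')|\le\lambda_\alpha\max(\|A-A'\|_\infty,\|S-S'\|_\infty)$ for multi-sets of matching sizes under the index-respecting pairing, and similarly that $\setag$ is Lipschitz in the $\infty$-norm on tuples. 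Once these conventions are pinned down the argument is a routine composition; I would also remark that the acyclic case (previous proposition) and this one together cover convergence, and that when $\lambda_D\ge 1$ no such guarantee is claimed. A minor additional point to check is that $[0,1]^n$ is invariant under $s_D$ (so the fixed point and iterates stay in range), which follows since each $\iota\in\{\iota^k_{lin},\iota^k_q\}$ has codomain $[0,1]$ when the base score is in $[0,1]$; this lets us work in a compact, hence complete, space and makes the Banach argument fully rigorous.
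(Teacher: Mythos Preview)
Your proposal is correct and follows essentially the same approach as the paper: show that $s_D$ is a Lipschitz contraction with constant $\lambda_D=\lambda_\setag\lambda_\alpha\lambda_\iota<1$ and invoke Banach's fixed point theorem to get a unique fixed point with exponential convergence. The paper's proof is terser because it first recasts $\setag$, $\alpha$, $\iota$ as \emph{global} maps $\setag:\mathbb{R}^n\to\mathbb{R}^m$, $\alpha:\mathbb{R}^m\to\mathbb{R}^n$, $\iota:\mathbb{R}^n\to\mathbb{R}^n$ (with $m$ the total number of attacking/supporting sets in the framework) and then simply uses closure of Lipschitz functions under composition; this global formulation automatically supplies the canonical index-respecting pairing you worry about in your middle step, so your identified ``main obstacle'' dissolves once you adopt that viewpoint.
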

We give convergence guarantees for our proposed semantics.
Below, we write $\aatts(a)=\{(\aset,a)\mid E\subseteq \aargs\}$ to denote the set of all attackers of $a$; $\asupps(a)$ for supporters.
\begin{restatable}
    {proposition}{convergenceOurSems}
Let $\BF=(\aargs,\aatts,\asupps,\basescore)$ be a BSAF; 
let $d=\max \{|\aatts(a)\cup \asupps(a)| \mid a\in\aargs\}$ denote the maximum indegree of assumptions in $\BF$ and $h=\max \{|\aset|\mid (\aset,a)\in\aatts\cup \asupps\}$ the maximum size of an attacking or supporting set. 
     Let $\alpha_X$ for $X\in \{\Pi,\Sigma\}$, $\iota^k_Y$ for $Y\in \{lin,q\}$. 
     The strength evolution process $s_D$ is guaranteed to converge for
     \begin{itemize}
         \item for $\setag_{\Sigma}$, $Z\in\{\Pi,\Sigma\}$, whenever $hd < k$;
         \item for $\setag_{Z}$, $Z\in\{\top,\bot\}$, whenever $d < k$.
     \end{itemize}
\end{restatable}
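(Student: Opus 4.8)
The plan is to reduce this to the preceding general convergence proposition by bounding the composite Lipschitz constant $\lambda_D = \lambda_\setag\cdot\lambda_\alpha\cdot\lambda_\iota$ and showing $\lambda_D < 1$ under the stated hypotheses. First I would recall (from Appendix C, which we may cite) the Lipschitz constants of the individual modular components as functions on the relevant domains: the influence functions $\iota^k_{lin}$ and $\iota^k_q$ are Lipschitz in their second argument with constant proportional to $1/k$ (the $\frac{1}{k}$ factor in front of $\min\{0,w\}$ and $\max\{0,w\}$, and similarly the $h(w/k)$ terms in QE), so $\lambda_\iota \le c_\iota/k$ for a small constant; the aggregation functions $\alpha_\Sigma$ and $\alpha_\Pi$ are Lipschitz on $[0,1]^{m+k}$ with constant governed by the number of inputs, i.e., essentially the in-degree $d$, since each argument's attacker/supporter multi-sets have at most $d$ entries; and the set-aggregation functions have constants depending on the size $h$ of the attacking/supporting set: $\setag_\Sigma$ is Lipschitz with constant $h$ (it is a sum of up to $h$ coordinates), whereas $\setag_\top$ and $\setag_\bot$ are Lipschitz with constant $1$ (max and min are $1$-Lipschitz in sup-norm).

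Next I would assemble these. For $\setag_\Sigma$: each step maps a strength vector through $\setag_\Sigma$ (factor $\le h$), then $\alpha_Z$ (factor scaling like $d$, since at most $d$ aggregated set-strengths feed into $\alpha$ for any one assumption), then $\iota^k_Y$ (factor $\le c/k$). Multiplying, $\lambda_D$ scales like $h d / k$ up to the universal constants built into the normalisation of $\alpha$ and $\iota$ on $[0,1]$; choosing the bookkeeping so that those constants are absorbed, the condition $hd < k$ yields $\lambda_D < 1$. For $\setag_Z$ with $Z\in\{\top,\bot\}$: the set-aggregation factor is now $1$ rather than $h$, so $\lambda_D$ scales like $d/k$, and $d < k$ suffices. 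In both cases the preceding proposition (the exponential-convergence-under-$\lambda_D<1$ result) then delivers convergence of $s_D$, in fact with exponential speed.

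The main obstacle I anticipate is getting the constants exactly right rather than just up to an unspecified factor — the clean inequalities $hd<k$ and $d<k$ suggest the paper's normalisation of $\alpha_\Sigma,\alpha_\Pi$ and $\iota^k_{lin},\iota^k_q$ on $[0,1]$ makes each of those two components contribute a factor of exactly $d$ and exactly $1/k$ respectively (with the $\min/\max$ and squaring in $h$ not increasing the Lipschitz constant beyond $1$ per coordinate on $[0,1]$, so that $\lambda_\alpha \le d$ and $\lambda_\iota \le 1/k$), and $\setag_\Sigma$ contributes exactly $h$ while $\setag_\top,\setag_\bot$ contribute exactly $1$. So the careful part is the verification that, on the domain $[0,1]$ actually traversed by the strength evolution process (which one should also check is invariant, so the process stays in $[0,1]^n$ and the Lipschitz bounds on $[0,1]$ apply), these sharp constants hold; the QE influence function needs a short argument that $w\mapsto h(w/k)$ has derivative bounded by $1/k$ on the relevant range. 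Once the invariance of $[0,1]^n$ and the three sharp per-component Lipschitz bounds are in hand, the two bullet points follow immediately by multiplying and invoking the general convergence proposition, so I would structure the write-up as: (i) $[0,1]^n$-invariance; (ii) component Lipschitz constants; (iii) multiply and apply the previous proposition.
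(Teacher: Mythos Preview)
Your proposal is correct and follows essentially the same route as the paper: compute the product $\lambda_\setag\cdot\lambda_\alpha\cdot\lambda_\iota$, using $\lambda_{\setag_\Sigma}=\lambda_{\setag_\Pi}=h$ and $\lambda_{\setag_\top}=\lambda_{\setag_\bot}=1$ from Appendix~C together with $\lambda_{\alpha_\Sigma}=\lambda_{\alpha_\Pi}=d$ and $\lambda_{\iota^k_{lin}}=\lambda_{\iota^k_q}=\frac{1}{k}\max\{w,1-w\}\le\frac{1}{k}$ (cited from Potyka~2019), and then invoke the preceding contraction proposition. Your extra step on $[0,1]^n$-invariance is a reasonable piece of hygiene that the paper leaves implicit.
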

For $k=2$, convergence is guaranteed for $\setag_\top$ and $\setag_\bot$
if each element has at most one attacker or supporter.
If, in addition, the BSAF corresponds to a QBAF (all attacks and supports have size $1$), the result also holds for $\setag_\Sigma$ and $\setag_\Pi$.

\begin{example}
    Let us head back to the ABAF $D$ from our introductory Example ~\ref{exm:intro}. We evaluate $a$ wrt.\ $\setag_\Pi$ and $\setag_\bot$ in combination with the DF-QuAD semantics ($\alpha_\Pi$ and $\iota_{lin}^1$).
    \begin{itemize}
        \item For $\basescore_1$, the strength of the attack $(\{b,c\},a)$ is $\setag_\Pi(\{1,1\})=\setag_\bot(\{1,1\})=1$. We obtain $\sigma^1_D(a)=0$.
        \item For $\basescore_2$, we have $\setag_\Pi(\{0.1,0.2\})\!=\!0.02$ and $\setag_\bot(\{0.1,0.2\})\!=\!0.1$. Thus, $(\setag_\Pi,\alpha_\Pi,\iota_{lin}^1)$ yields $\sigma^2_D(a)=0.98$ and $(\setag_\bot,\alpha_\Pi,\iota_{lin}^1)$ yields $\sigma^3_D(a)=0.9$.
    \end{itemize}
    As anticipated, the difference in the base scores is reflected in the final strength. So, if rain ($b$) and the absence of friends ($c$) in unlikely, then the strength of $a$ does not decrease much.
\end{example}

\newcommand{\argbase}{\beta}
\tikzstyle{workflow}=[draw,very thick, rectangle, rounded corners, minimum height=0.7cm, minimum width=1cm, 
inner sep=5pt,align=center]
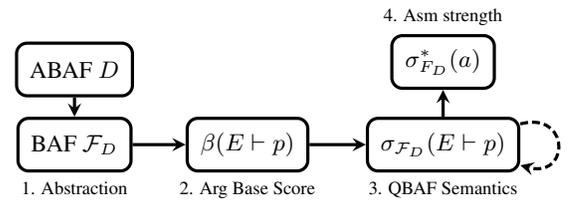
\begin{figure}[b]
    \centering
    \begin{tikzpicture}
        \path
        node[workflow] at (2.2,1) (aba){{\footnotesize ABAF $D$}}
        node[workflow,label={below:{\scriptsize 1. Abstraction}}] at (2.2,0) (qbaf){{\footnotesize BAF $\AF_D$}}
        node[workflow,label={below:{\scriptsize 2. Arg Base Score}}] at (4.5,0) (arg){{\footnotesize $\argbase(\aset \vdash p)$}}
        node[workflow,label={below:{\scriptsize 3. QBAF Semantics}}] at (7.1,0) (sem){{\footnotesize $\sigma_{\AF_D}(\aset\vdash p)$}}
        node[workflow,label={above:{\scriptsize 4. Asm strength}}] at (7.1,1.1) (asm)
        {{\footnotesize $\sigma^*_{F_D}(a)$}}
        ;

        \path[draw,very thick,->,>=stealth]
        (aba) edge (qbaf)
        (qbaf) edge (arg)
        (arg) edge (sem)
        (sem) edge (asm)
        (sem) edge[loop right, looseness=3.2, dash pattern=on 3pt off 1.5pt] (sem)
        ;
    \end{tikzpicture}
    \caption{
    QBAF baseline: Given ABAF $D$ and base score function $\basescore$, 1.\ compute the corresponding BAF $\AF_D$, 2.\ the argument's base score, 3.\ apply QBAF semantics until convergence (dashed loop), and 4.\ compute assumption strength. 
    }
    \label{fig:AF route}
\end{figure}
\section{A QBAF Baseline for ABA}\label{sec:instantiation}
In the realm of extension-based semantics, the direct (assumption-based) semantics for ABA are closely related to the (argument-based) semantics for AFs. 
It is well-known that the semantics correspond to each other~\cite{DBLP:journals/ai/BondarenkoDKT97}.
We set out to investigate if a similar correspondence can be established for gradual ABA as well.
To do so, we use argument base score functions together with the AF representation to instantiate a given ABAF as a QBAF. In this way, we can use QBAF semantics to evaluate an ABAF with weighted assumptions (see Figure~\ref{fig:AF route}).
Given an ABAF $D$ with base scores $\basescore$ and a QBAF semantics~$\sigma$,
\begin{enumerate}
    \item compute the abstract representation QBAF $\AF_D$;
    \item compute the argument's base score $\argbase(S\vdash p)$, given  $\basescore$;
    \item apply the QBAF semantics and compute $\sigma_{F_D}(S\vdash p)$;
    \item compute the strength of the assumptions, $\sigma^*_{F_D}(a)$.
\end{enumerate}
The standard instantiation is defined for flat ABAF only. Below, we extend Definition~\ref{def:AF instantiation} to the non-flat case.
\begin{definition}\label{def:new AF instantiation}
    Let $D=(\lit,\asm,\rules,\contraryempty)$ be an ABAF.
    By $\AF_D=(\allargs_D,\AFatts_{\allargs_D},\asupps_{\allargs_D})$, we denote the BAF corresponding to $D$ where
    $\AFatts_{\allargs_D}=\{(x,y)\mid \cl(x)\in \contrary{\asms(y)}\}$ and
    $\asupps_{\allargs_D}=\{(x,y)\mid \cl(x)\in \asms(y)\}$.
\end{definition}
While this instantiation does not preserve extension-based semantics~\cite{DBLP:conf/aaai/0001PRT24} it captures the syntactic attack and support relation between the ABA arguments. 

We discuss argument base score functions that extend the BAF instantiation to a QBAF in Section~\ref{subsec:arg strength}, while methods to extract the assumption strength are in Section~\ref{subsec:asm strength}.

\subsection{Base Score Function for ABA Arguments}\label{subsec:arg strength}

We now identify suitable base score functions.
We define a function $\argbase$ that assigns a base score to each argument $S\vdash p$.%
\begin{definition}
Let $D=(\lit,\asm,\rules,\contraryempty,\basescore)$ be an ABAF.
An \emph{argument base score function $\argbase:\allargs_D \rightarrow \mathbb{R}$} 
assigns a base score to each argument $S\vdash p$,
$S\subseteq \mathcal{A}$, $p\in\mathcal{L}$, $|S|=d$. 
\end{definition}
We consider \emph{syntax independence} (called anonymity in~\cite{Spaans2021} and related to rewriting in~\cite{Jedwabny_20}) which intuitively means that the base score of an argument is independent of its structure.%
\begin{definition} An argument base score function $\argbase$ satisfies
\begin{description} 
    \item[Syntax Independence] iff $\argbase(S\vdash_{t} p )=\argbase(S\vdash_{t'} p)$. \hfill\textbf{(SI)}
\end{description}
\end{definition}
We consider a class of argument base score functions that neglect the specific argument tree; the strength is entirely determined by the assumptions in the argument.
\begin{definition}
An assumption-based argument base score function $\argbase:\mathbb{R}^d \rightarrow \mathbb{R}$ 
assigns each argument $S\vdash_t p$, $|S|=d$, a base score based on the assumption's strengths.
\end{definition}
We focus on assumption-based functions. 
This choice has convenient implications: each such argument base score function satisfies syntax independence $(\textbf{SI})$ by design.

Most of the desirable properties from Definition~\ref{def:setag desiderata} in the context of set aggregation functions apply to the case of argument base score functions as well. 
Occam's razor, for instance, formalises that the fewer assumptions the stronger an argument.
As shown in Section~\ref{subsec:aggregate sets BSAF}, the product and minimum function satisfy all desired properties. 
We will thus consider the following argument base score functions. 
\begin{center}
    \begin{tabular}{r r}
          $\argbase_\Pi(\aset)\! =\! \prod_{e\in \aset} e $ \textbf{(Prod)} & 
          $\argbase_\bot(\aset)\! =\! \min \aset\!\cup\! \{1\}$   \textbf{(Min)} 
    \end{tabular}    
\end{center}

Both functions satisfy syntax independence by definition.%
 \begin{restatable}{proposition}{PropDesiderataArg}
     $\argbase_\Pi$ and $\argbase_\bot$ satisfy $\si$, $\ocr$, $\fact$, $\neut$, $\ide$, $\wl$, and $\void$.
 \end{restatable}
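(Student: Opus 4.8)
The plan is to verify each of the seven properties directly from the closed-form definitions of $\argbase_\Pi(\aset)=\prod_{e\in \aset}e$ and $\argbase_\bot(\aset)=\min \aset\cup\{1\}$, exploiting the fact that both coincide exactly with the set aggregation functions $\setag_\Pi$ and $\setag_\bot$ from Table~\ref{tab:set-aggregation functions}. Since Proposition~\ref{prop:SatDesiderata setag} already establishes that $\setag_\Pi$ and $\setag_\bot$ satisfy Occam's Razor \ocr, Factuality \fact, Neutrality \neut, Identity \ide, Weakest link limiting \wl, and Void \void, and since the desiderata of Definition~\ref{def:setag desiderata} are phrased purely in terms of the function's action on multi-sets (with $\maxval_\setag=1$ in both cases), the six properties $\ocr,\fact,\neut,\ide,\wl,\void$ transfer to $\argbase_\Pi$ and $\argbase_\bot$ immediately — I would just note this correspondence and cite Proposition~\ref{prop:SatDesiderata setag}. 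That leaves only syntax independence \si\ to treat separately.

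For \si, the argument is essentially definitional: both $\argbase_\Pi$ and $\argbase_\bot$ are assumption-based argument base score functions, i.e.\ their value on an argument $\aset\vdash_t p$ depends only on the multi-set of base scores of the assumptions in $\aset$, not on the derivation tree $t$ (nor on the claim $p$). Hence for any two trees $t,t'$ deriving the same claim from the same assumption set, $\argbase_X(\aset\vdash_t p)=\argbase_X(\aset\vdash_{t'}p)$ for $X\in\{\Pi,\bot\}$. This is exactly the observation already made in the paragraph preceding the proposition (``each such argument base score function satisfies syntax independence \si\ by design''), so the proof just records it.

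The one genuinely careful point — and the place where I expect the only real obstacle, though a minor one — is that Definition~\ref{def:setag desiderata} is stated for \emph{set aggregation functions} $\setag$ while the proposition concerns \emph{argument base score functions} $\argbase$, so I must make explicit that the same property names are being reused in the argument-base-score setting (as the paper signals with ``Most of the desirable properties from Definition~\ref{def:setag desiderata} \ldots apply to the case of argument base score functions as well''). Concretely, I would phrase the transfer as: interpreting $\ocr,\fact,\neut,\ide,\wl,\void$ as conditions on the underlying multi-set-valued map, $\argbase_\Pi$ agrees with $\setag_\Pi$ and $\argbase_\bot$ agrees with $\setag_\bot$ pointwise, so satisfaction is inherited verbatim from Proposition~\ref{prop:SatDesiderata setag}. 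No new computation is needed; the proof is a short bookkeeping argument combining that citation with the by-design remark for \si.
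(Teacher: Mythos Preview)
Your proposal is correct and follows essentially the same approach as the paper's own proof, which is a one-liner: ``\textbf{(SI)} is by definition; the remaining results follow from Proposition~\ref{prop:SatDesiderata setag}.'' Your write-up is more verbose in justifying the transfer of the desiderata from $\setag$ to $\argbase$, but the logical content is identical.
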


\newcommand{\asmstrength}{\delta}
\subsection{Assumption Strength Computation}
\label{subsec:asm strength}
We investigate suitable functions to determine the final strength of the assumptions, based on the final strength of the arguments that derive them. That is, to compute the strength of $a$, we take all arguments $S\vdash a$ into account. 
Since a single claim may be supported by several arguments, we have several choices to obtain the strength of the assumptions.%
\begin{definition}
Let $D=(\lit,\asm,\rules,\contraryempty,\basescore)$ be a ABAF, $\sigma$ be a QBAF semantics, and let $S\subseteq \allargs_D$ denote the set that contains all 
arguments with claim $a$, i.e., $S=\{E\vdash a\mid E\subseteq \mathcal{A}\}$, let $x=\{a\}\vdash a$ for each $a\in\asm$. We define 

\begin{center}
    \begin{tabular}{r r}
          $\sigma^*_{\mathrm{asm}}(S)=\sigma(x_a)$ \textbf{(Asm)} & 
          $\sigma^*_\bot(S) = \min S $  \textbf{(Min)} \vspace{2pt}\\
         $\sigma^*_{\mathrm{avg}}(S) = \frac{1}{|S|}\sum S$   \textbf{(Avg)} &
         $\sigma^*_\top(S) = \max S$ \textbf{(Max)}  \vspace{2pt}
    \end{tabular}    
\end{center}

\end{definition}

Note $\sigma^*_{avg}$ averages the outcome while the other functions represent choices to pick a representative strength. 

For flat ABAFs, all of the options coincide (for assumptions) since we have only one argument for each assumption.%
\begin{restatable}{proposition}{PropSameArg}
    Let $D$ be a flat ABAF and $\sigma$ a QBAF semantics. Then 
    $\sigma=\sigma'$ for all $\sigma,\sigma~\in\{\sigma^*_{\mathrm{asm}},\sigma^*_{\bot},\sigma^*_{\top},\sigma^*_{\mathrm{avg}}\}$.
\end{restatable}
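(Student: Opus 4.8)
The plan is to observe that flatness forces every assumption to be the claim of a \emph{unique} argument, namely its own assumption argument, after which the four extraction functions coincide simply because they all agree on singletons.

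First I would establish the following claim: if $D$ is flat and $a\in\asm$, then the only argument $E\vdash p$ with $p=a$ is the assumption argument $\{a\}\vdash a$. To see this, take any tree-derivation $E\vdash_{R,t} a$. Its root is labelled with $a$. If the root were an internal node, then by the definition of tree-derivation there would be a rule $r\in R$ with $\head(r)=a$, contradicting flatness, which requires $\head(r)\notin\asm$ for every $r\in\rules$. Hence the root is a leaf, so $t$ is the single-node tree, $R=\emptyset$, and the set of leaf labels is $\{a\}$ (it cannot be $\{\top\}$, since $a\neq\top$ as $a\in\asm\subseteq\lit$). Thus $E=\{a\}$ and $E\vdash a$ is precisely $x_a=\{a\}\vdash a$. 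Since arguments are identified with pairs $(E,p)$, distinct derivation trees still yield the same argument, so the set $S=\{E\vdash a\mid E\subseteq\asm\}$ appearing in the definition of the extraction functions satisfies $S=\{x_a\}$.

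Given $S=\{x_a\}$, the conclusion is immediate: $\sigma^*_{\mathrm{asm}}(S)=\sigma(x_a)$ by definition; $\sigma^*_\bot(S)=\min\{\sigma(x_a)\}=\sigma(x_a)$; $\sigma^*_\top(S)=\max\{\sigma(x_a)\}=\sigma(x_a)$; and $\sigma^*_{\mathrm{avg}}(S)=\tfrac1{1}\sigma(x_a)=\sigma(x_a)$. Hence all four functions return the same value on every assumption $a\in\asm$, which is exactly the statement. (The genuine content is confined to the flatness argument of the first step; once $S$ is shown to be a singleton, the remaining equalities are purely definitional, so there is no real obstacle.)
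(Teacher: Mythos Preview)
Your proof is correct and follows essentially the same approach as the paper: flatness forces $\{a\}\vdash a$ to be the unique argument with claim $a$, whence all four extraction functions trivially agree on the resulting singleton. Your version is simply more detailed, spelling out why the root of any derivation tree for $a$ must be a leaf and handling the $\top$ corner case explicitly, whereas the paper compresses this into a single sentence.
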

We note that convergence depends on the choice of the QBAF semantics, i.e., 
$\sigma^*$ converges iff $\sigma$ does.      

We investigate the relation of the QBAF baseline to gradual ABA semantics. 
Note that the topology of a BSAF representation will typically differ from the BAF instantiation (see, for instance, Example~\ref{exm:bsaf}).
We obtain equality in the special case in which no rules are given.  

\begin{restatable}{proposition}{PropCorrespAnrg}
Let $D=(\lit,\asm,\emptyset,\contraryempty,\basescore)$ be an ABAF, 
 $F_D$ denote the BSAF corresponding to $D$,
$\AF_D$ denote the QBAF corresponding to $D$,
$\sigma_D$ be a modular $(\setag,\alpha,\iota)$-semantics for $D$ and $\sigma_{\AF_D}$ denote the corresponding modular $(\alpha,\iota)$-semantics.
It holds that $\sigma_D=\sigma_{F_D}=\sigma_{\AF_D}$. 
\end{restatable}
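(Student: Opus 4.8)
The plan is to show that when $D$ has no rules, the BSAF representation $\BF_D$ and the BAF instantiation $\AF_D$ coincide (up to identifying assumption arguments with assumptions), and that both therefore induce the same strength evolution process, hence the same limit. First I would observe that if $\rules=\emptyset$, then the only tree-derivations are the trivial ones $\{a\}\vdash a$ for $a\in\asm$; there are no rule-based arguments. Consequently $\allargs_D=\asm$ (identifying $\{a\}\vdash a$ with $a$), and there is exactly one argument per assumption, namely the assumption argument itself. In particular no assumption is derivable via a nonempty rule set, so $\Sup_D(a)=\emptyset$ for every $a$, i.e.\ $\asupps_D=\emptyset$; likewise $\asupps_{\allargs_D}=\emptyset$ in $\AF_D$ since $\cl(x)=\asms(x)$ would be required with $x$ an assumption argument, which is ruled out because contraries are (by the standing assumption) not assumptions in the relevant sense — more directly, $\cl(x)\in\asms(y)$ with both $x=\{b\}\vdash b$ and $y=\{a\}\vdash a$ forces $b=a$, and $(a,a)$ is excluded as a genuine support edge (or is vacuous). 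For attacks: in $\BF_D$, $(\{b\},a)\in\aatts_D$ iff $\{b\}\vdash\contrary a$, which with $\rules=\emptyset$ is only possible if $\contrary a=b$; in $\AF_D$, $(x,y)\in\AFatts_{\allargs_D}$ with $x=\{b\}\vdash b$, $y=\{a\}\vdash a$ iff $\cl(x)=b\in\contrary{\asms(y)}=\{\contrary a\}$, i.e.\ $b=\contrary a$. So the attack relations match under the identification, and both frameworks are in fact the BSAF/BAF on node set $\asm$ with single-element attack sources.

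Next I would handle the semantic layer. Since every attack (and support) source in $\BF_D$ is a singleton, Proposition~\ref{prop:same as QBAF kernek} applies: provided $\setag$ satisfies \ide, the modular $(\setag,\alpha,\iota)$-semantics $\sigma_{\BF_D}$ coincides with the QBAF semantics with kernel $(\alpha,\iota)$ on the BAF $\BF_D$. I should note that this requires $\setag$ to satisfy \ide; by Proposition~\ref{prop:SatDesiderata setag} all four of $\setag_\Pi,\setag_\bot,\setag_\Sigma,\setag_\top$ do, so the hypothesis is harmless — but strictly the statement of the proposition as written does not carry this assumption explicitly, so I would either add it or remark that it holds for all set aggregation functions considered in the paper. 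Then, since $\BF_D$ and $\AF_D$ are the very same BAF (same nodes, same attacks, empty supports, same base score $\basescore$), the QBAF $(\alpha,\iota)$-semantics on them agree verbatim: the strength evolution processes are defined by identical recurrences, so their limits coincide. This gives $\sigma_{\BF_D}=\sigma_{\AF_D}$, and by definition $\sigma_D=\sigma_{\BF_D}$, yielding the chain $\sigma_D=\sigma_{F_D}=\sigma_{\AF_D}$.

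The main obstacle I anticipate is bookkeeping around the identification of assumption arguments $\{a\}\vdash a$ with the abstract node $a$, and making precise the claim "$\sigma_{\AF_D}$ denotes the corresponding modular $(\alpha,\iota)$-semantics" — one must check that the assumption-strength extraction step of the QBAF baseline (the $\sigma^*$ layer of Section~\ref{subsec:asm strength}) is trivial here, which follows because with $\rules=\emptyset$ every assumption has a unique argument deriving it, so all of $\sigma^*_{\mathrm{asm}},\sigma^*_\bot,\sigma^*_\top,\sigma^*_{\mathrm{avg}}$ collapse to $\sigma$ evaluated at that single argument (this is exactly Proposition~\ref{prop:same as QBAF kernek}'s flat-case analogue, cf.\ Proposition~\ref{prop:same as QBAF kernek} and the flat-ABAF collapse result). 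Apart from that, each step is a direct unfolding of definitions; the only genuinely needed external input is Proposition~\ref{prop:same as QBAF kernek} (singleton-source BSAF semantics equal QBAF semantics) together with the observation that $\AF_D$ and $\BF_D$ are literally the same object when $\rules=\emptyset$.
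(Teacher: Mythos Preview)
Your proposal is correct and takes essentially the same approach as the paper: both argue that with $\rules=\emptyset$ the BSAF $F_D$ and the BAF instantiation $\AF_D$ coincide (same nodes, same contrary-induced attacks, no nontrivial supports), whence the semantics agree. You are more thorough than the paper's two-sentence proof, explicitly invoking Proposition~\ref{prop:same as QBAF kernek} for the $\setag$ layer, flagging the implicit \ide\ assumption on $\setag$, and noting that the $\sigma^*$ extraction step of the baseline collapses---points the paper does not spell out.
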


\section{Experimental Evaluation}
\label{sec:experiments}

We now explore the differences between our proposed gradual ABA (BSAF) and the presented QBAF baseline (BAF) and within various modules of the semantics.
Our experiments show 
that BSAFs converge 90\% of the scenarios while BAFs do so only in around 70\% of the scenarios; moreover, BSAFs converge faster (in around 30 iterations) than BAFs do (in around 45 iterations).%

\begin{figure}[t]
    \centering
    \includegraphics[width=\linewidth]{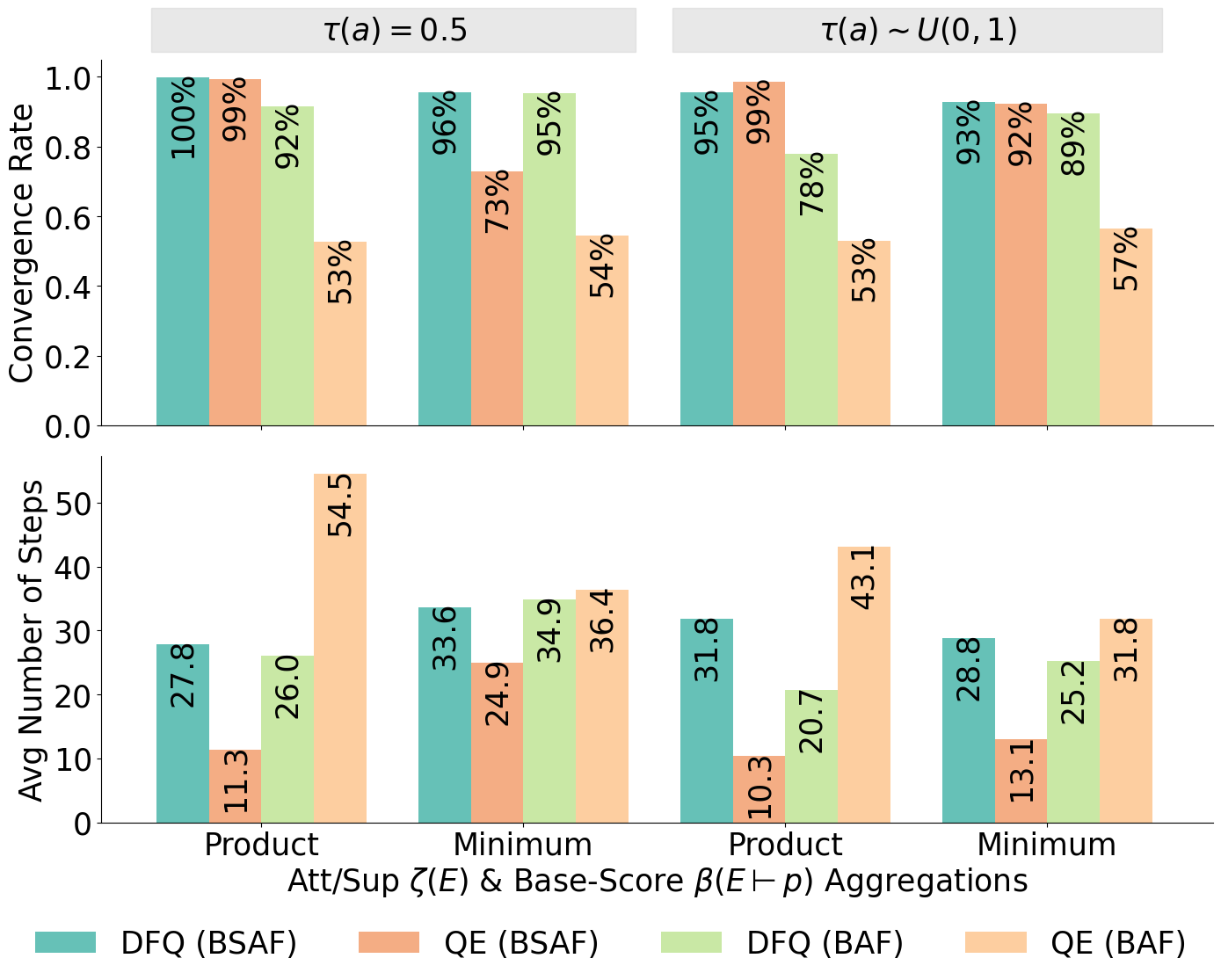}
    \caption{Global convergence rates (top, y-axis) and average steps to converge (bottom, y-axis) for the BSAF and BAF approach.  We compare two modular semantics (DFQ vs.\ QE), two $\Att$/$\Sup$ and Base-score aggregations (x-axis, resp. for BSAF and BAF), across two base‐score initialisations ($\tau=0.5$ vs.\ $\tau\sim U(0,1)$).}
    \label{fig:overall_convergence}
\end{figure}

\begin{figure}[ht]
    \centering
    \includegraphics[width=\linewidth]{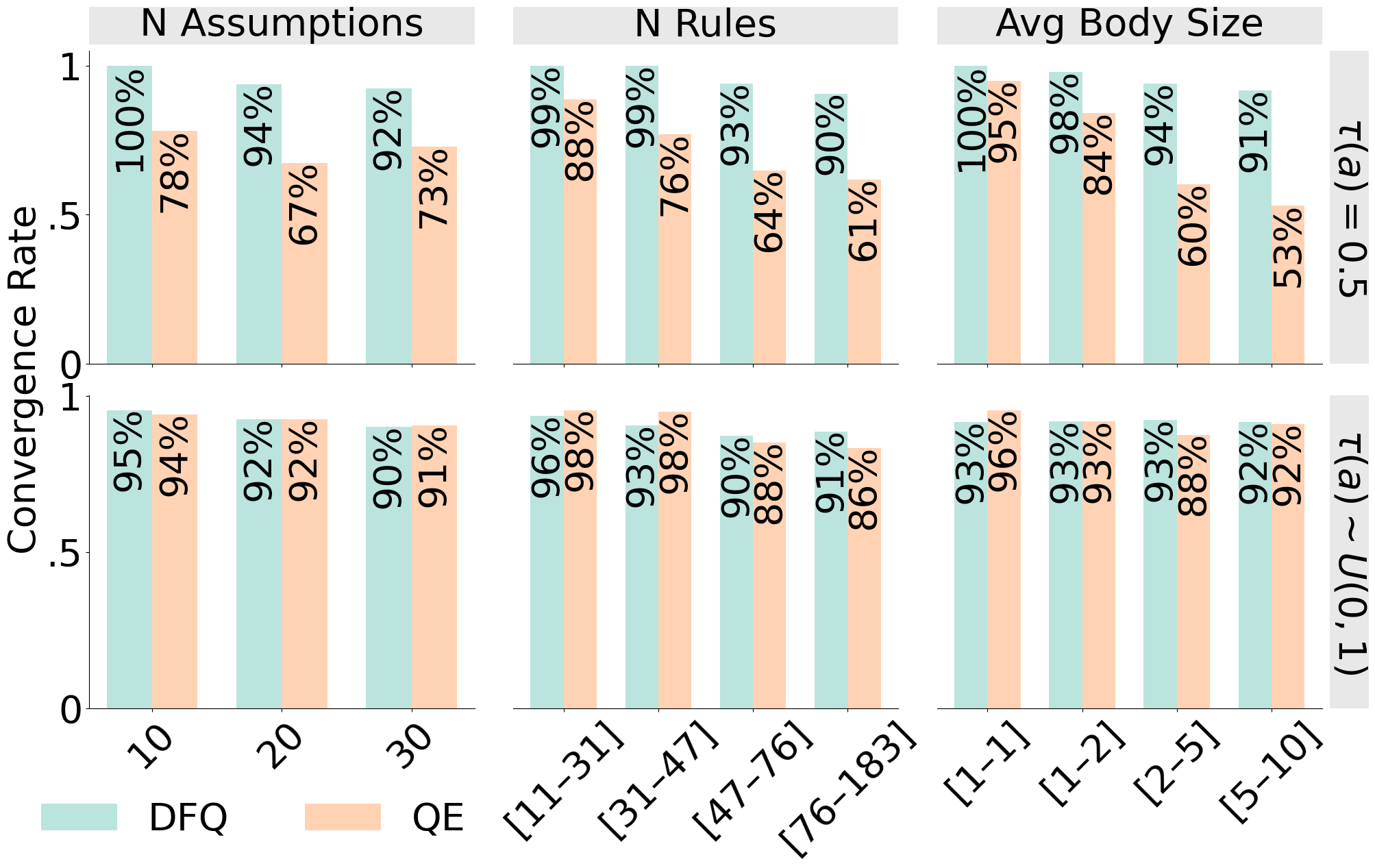}
    \caption{Sensitivity of the gradual ABA semantics (BSAF) to ABAF structural characteristics.  Convergence rates are binned by number of assumptions (left), number of rules (center), and average rule body size (right), for both DFQ and QE under constant ($\tau=0.5$, top row) and random ($\tau\sim U(0,1)$, bottom row) base‐scores. The set aggregation is minimum (product in \ifappendix Appendix~\ref{app:additional_results}). \else Appendix E.3). \fi   } 
    \label{fig:sensitivity}
\end{figure}
\subsubsection{Experimental Setup.}
We evaluate
BSAF and BAF
on a suite of $1440$ randomly generated ABAFs (960 non-flat, 480 flat), following the procedure in~\cite{DBLP:conf/ijcai/LehtonenRT0W24}. Details are provided in \ifappendix Appendix~\ref{sec:data-generation}. \else Appendix E.1~\cite{arxivABAgrad}. \fi

The runtime of both BSAF and BAF approaches depends on the instantiations (ABAF to SETAF and ABAF to AF, respectively), which can be both exponential in the number of assumptions in the worst case, 
and on the number of passes, which depends on the structure of the resulting abstract instance. As discussed in the previous sections, convergence cannot always be guaranteed; therefore, we constrained the number of iterations. 
Each run has a 10-minute time budget and up to 5000 iterations; we record 92 timeouts during abstraction (all on non-flat ABAFs), with 10 additional timeouts for BSAF and 89 for BAF in the semantics phases (see Figures~\ref{fig:best route} and \ref{fig:AF route}). 

\subsubsection{Convergence Metrics.}
We use two metrics to quantify the 
convergence behaviour of our gradual ABA semantics:
\begin{itemize}
  \item \emph{Global convergence rate:} proportion of ABAFs in which every assumption strength settles (within $\epsilon=10^{-3}$) over the final $\delta=5$ updates before hitting the cap on the number of iterations.
  \item \emph{Average steps to converge:} mean iteration step to convergence, computed only over the runs that do converge.
\end{itemize}
Further methodological details are in \ifappendix Appendix~\ref{app:metrics_details}. \else Appendix E.2~\cite{arxivABAgrad}. \fi

\subsubsection{Modular Components Combinations.}
We test two types for each of the modules presented in the previous sections: 
\begin{itemize}
  \item Aggregations \& Influence Functions: 
    \begin{itemize}
      \item \emph{DF-QuAD} (DFQ): $\alpha_\Pi$ with linear influence $\iota^1_{\mathrm{lin}}$  
      \item \emph{Quadratic Energy} (QE): $\alpha_{\Sigma}$ with quadratic influence $\iota^1_{q}$
    \end{itemize}
  \item Sets Aggregation \& Base‐score Functions:
    \begin{itemize}
      \item \emph{BSAF:} Att/Sup aggregation $\setag\in\{\setag_\bot,\setag_\Pi\}.$  
      \item \emph{BAF:} Base score aggregation $\argbase\in\{\argbase_\bot,\argbase_\Pi\}.$
    \end{itemize}
\end{itemize}

We use two base score initialisation: constant ($\tau(a)=0.5$) and random ($\tau(a)\sim U(0,1)$) and two assumptions strengths functions $\sigma^*\in\{\sigma^*_{\mathrm{avg}},\sigma^*_{\mathrm{asm}}\}$ (for BAF only). Results for the latter ablation, with no significant differences observed, are in \ifappendix Appendix~\ref{app:additional_results}. \else Appendix E.3~\cite{arxivABAgrad}. \fi

\paragraph{BSAF vs.\ BAF.}
We show global convergence rates (top) and average steps to converge (bottom) in Figure~\ref{fig:overall_convergence}. Overall:
\begin{enumerate}
  \item \textbf{Convergence Rate:}  BSAF achieves convergence in the range of 73-95\% of runs, compared to 53–95\% under BAF, with DFQ leading across both BSAFs and BAFs.
  \item \textbf{Convergence Speed:}  BSAF converges in 10–30 iterations, while BAF requires 20–55.
  \item \textbf{Sensitivity to Initialisation:}  BSAF is insensitive to $\tau$, while BAF shows wide variance across initialisations
  .
\end{enumerate}

\subsubsection{DFQ vs.\ QE within BSAF.}
DFQ consistently achieves $\ge$90\% convergence across all settings. QE converges fastest under Product aggregation (circa 10 steps for both $\tau=0.5$ and $\tau\sim U(0,1)$) but slows to 25–35 steps under Minimum. Interestingly, DFQ suffers from random initialisation (versus constant), while QE benefits, possibly reflecting the open-mindedness vs.\ conservativeness trade-off in \cite{PotykaB24}.

To explore the differences between DFQ and QE in more detail, we conduct an additional analysis on how convergence rates vary by structural ABAFs' features.
The summary of the results is shown in Figure~\ref{fig:sensitivity}. We split convergence rates by:
  \emph{number of assumptions} (left column);
  \emph{number of rules} (center column);
  \emph{average rule body size} (right column).
Results are given for constant ($\tau=0.5$, top row) and random ($\tau\sim U(0,1)$, bottom row) base scores initialisations. DFQ remains above 90\% in every bucket, whereas QE degrades on larger, more complex instances. More comparisons, including Product Att/Sup aggregation and the same breakdown for convergence speed and for the BAF approach, are in \ifappendix Appendix~\ref{app:additional_results}. \else Appendix E.3~\cite{arxivABAgrad}. \fi

\begin{figure}[t]
    \centering
    \includegraphics[width=\linewidth]{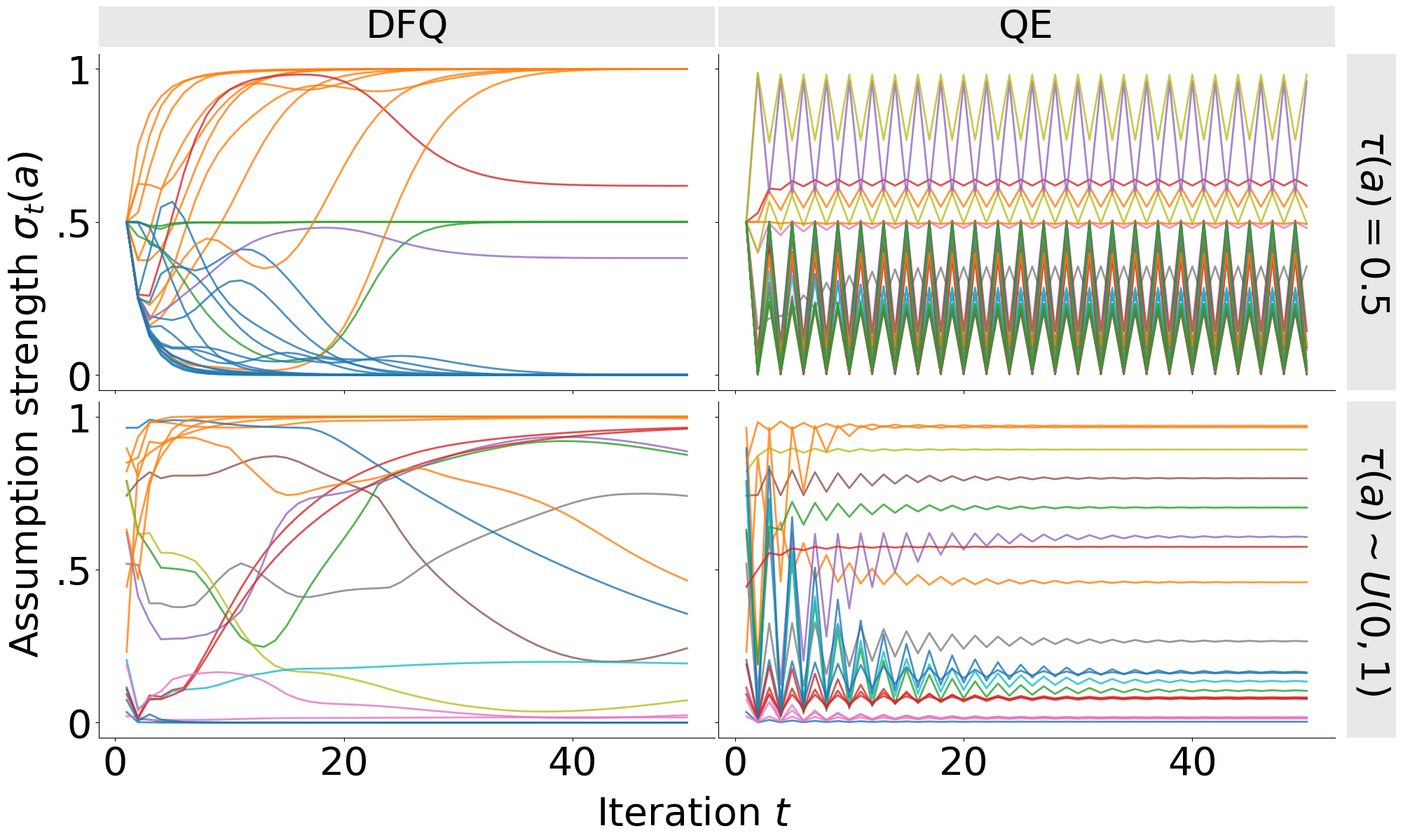}
    \caption{Sample assumption‐strength trajectories $\sigma_t(a)$ under the gradual ABA semantics.  DFQ (left column) versus QE (right column), with fixed base‐scores $\tau=0.5$ (top row) and random $\tau\sim U(0,1)$ (bottom row).  DFQ changes smoothly, whereas QE exhibits sustained oscillations or slow damping.}
    \label{fig:strength_trajectories}
\end{figure}
\subsubsection{Strength Trajectories.}
We show sample assumption-strength trajectories $\sigma_t(a)$ under BSAF in Figure~\ref{fig:strength_trajectories}. In the first example ABAF (30 assumptions, $\tau=0.5$, $\setag_{\Pi}$, top row)
we show that DFQ smoothly converges, while QE oscillates indefinitely.
  Another example is shown in the bottom row
  (20 assumptions, $\tau\sim U(0,1)$, $\setag_{\bot}$) where we observe that QE quickly settles, while DFQ never settles (not even with 5000 iterations, see \ifappendix Appendix~\ref{app:additional_results}). \else Appendix E.3~\cite{arxivABAgrad}). \fi

Overall, BSAF outperforms BAF in both convergence reliability and speed. Within BSAF, DFQ provides the greatest robustness across ABAFs' characteristics and initialisations, while QE trades some resilience for rapid convergence.

\section{Conclusion}

We have presented a family of novel gradual semantics for ABA, filling a gap in the literature and bridging to the extensive body of work on gradual semantics for abstract forms of argumentation. 
Our gradual semantics is defined via BSAFs that allow us to focus on the (supporting and attacking) relations between the assumptions. 
We have formally and experimentally evaluated  our gradual ABA semantics against a QBAF-based baseline for ABA and have shown that it exhibits desirable characteristics. 

\subsubsection{Future work.}
By presenting the very first purpose-built gradual semantics for ABA, our work opens several avenues for future work.
One regards the correspondence of gradual ABA semantics to other evaluation paradigms such as extension-based or labelling-based semantics. In this aspect, it would be interesting to investigate gradual interpretations of the \emph{closure property} known from extension-based semantics as the requirement that ensures that an extension contains all assumptions that it derives. 
Also, extending gradual semantics to ABA with preferences using set-to-set attacks~\cite{DimopoulosD0R0W24} would be an interesting avenue for future research.
In this work, we assume that the non-assumptions of our knowledge base (i.e., rules and sentences) 
do not have an intrinsic strength on their own. For future work, we want to investigate settings with varying intrinsic strength of sentences that are non-assumptions; and settings in which the strength of rules is part of the evaluation. 
Given that assumptions can render the rules defeasible, we can introduce rule weights using a dedicated assumption for each rule which represents its weight. In this way, the assumptions’ weight can be equated to the rules' weights.
Our argument base score functions could also be used to evaluate the strength of ordinary conclusions.
We also plan to explore concrete applications of our gradual semantics, e.g., for causal discovery with the ABA framework in \cite{russo2024argumentativecausaldiscovery}.

\section*{Acknowledgments}
Rapberger and Russo were funded by the ERC under the ERC-POC programme (grant number  101189053) while Rago and Toni under the EU’s Horizon 2020 research and innovation programme (grant number 101020934); Toni also by J.P. Morgan and by the Royal Academy of Engineering under the Research Chairs and Senior Research Fellowships scheme. 

\bibliographystyle{kr}

\ifappendix \appendix \else \end{document} \fi
\appendix

\clearpage        

\section{On Dominance and Balance (or: Extended Preliminaries)}\label{appendix:prelims}
In this section, we recall dominance for multi-sets as well as balance and monotonicity properties for aggregation and influence functions following ~\cite{PotykaB24}.
We furthermore discuss why dominance between multi-sets does not capture the desired comparison between multi-sets for our setting very well.

Below, we give the definition of dominance. Recall that for a multi-set $S$, we let $\Pos(S)=\{x\in S\mid x\neq 0\}$.
\begin{definition}\label{def:dominance multiset 1}
    Let $A,S$ denote two multi-sets.
    \begin{description}
        \item[Dominance] $A$ dominates $S$, in symbols $A\geqels S$, iff $\Pos(A)=\Pos(S)=\emptyset$ or there is $A'\subseteq \Pos(A)$ and a bijection $f:A' \rightarrow \Pos(S)$ s.t.\ $x\geq f(x)$ for all $x\in A'$;
        \item[Balance] $A$ and $S$ are balanced, in symbols $A\eqels S$, iff $\Pos(A)=\Pos(S)$.
    \end{description}
\end{definition}
As expected and as shown by \citeauthor{PotykaB24}~(\citeyear{PotykaB24}), $A\eqels S$ iff $A\geqels S$ and $A\geqels S$.

We observe that dominance,  cf.\ Definition~\ref{def:dominance multiset 1}, does not capture the desired relation between multi-sets in the context of set-aggregation of attackers (supporters) very well.
\begin{example}
    Consider again our two attacks $A_1=\{0.1,0.2,0.9\}$ and $A_2=\{0.9\}$,
    According to our desiderata in Definition~\ref{def:setag desiderata}, 
    the attack $A_2$ is stronger than $A_1$ since $A_1\supsetneq A_2$.
    However, it holds that $A_1\geqels A_2$.
\end{example}
In the following example, we discuss the difference between superiority and dominance of multi-sets.  
\begin{example}
Let $A=\{0.2,0.2,0.45,0.95,1\}$
and $S=\{0.01,0.2,0.4,0.9,1,1,1,1\}$.
It holds that $S$ dominates $T$, using the following mapping:
\begin{center}
\begin{tabular}{l|c c c c c c c c c c c c c}
         $S'$ & 1 & 1 & 1 & 1 & 0.9 \\
         $f(S')$ & 1 & 0.95 & 0.45 & 0.2 & 0.2 
\end{tabular}    
\end{center}
On the other hand, $A$ is superior over $S$.
In contrast to dominance, where we focus on the non-zero elements, we 
disregard the non-maximal elements when checking for superiority. Note $|\nonmax{A}|=4$.
It remains to check if we can find a bijection $f:\nonmax{A}\rightarrow Max_4(S)$ so that $x\geq f(x)$, $x\in A$. 
\begin{center}
\begin{tabular}{l|c c c c c c c c c c c c c}
         $A$ & 0.95 & 0.45 & 0.2 & 0.2 \\
         $f(A)$ & 0.9 & 0.4 & 0.2 & 0.01 
\end{tabular}    
\end{center}
\end{example}

In the setting of QBAFs, dominance plays an important role to define crucial properties. 
First, we recall properties of aggregation functions. 
\begin{definition}
An aggregation function $\alpha$ satisfies 
\begin{description}
\item[Neutrality] if $\alpha(A,S)=\alpha(\Pos(A),\Pos(S))$ 
\item[Monotonicity] if the following is satisfied: 
    \begin{itemize}
        \item $A\geqels S\ \Rightarrow\  \alpha(
        A,
        S
        )\leq 0$
        \item $S\geqels A\ \Rightarrow\  \alpha(
        A,
        S
        )\geq 0$
        \item $A\geqels A'\ \Rightarrow\ \alpha(
        A',
        S
        )\leq \alpha(
        A,
        S
        )$
        \item $S\geqels S'\ \Rightarrow\  \alpha(
        A,
        S
        )\leq \alpha(
        A,
        S'
        )$ 
    \end{itemize} 
    \item[Balance] if the following is satisfied:    
    \begin{itemize}
        \item $A\eqels S\ \Rightarrow\ \alpha(A,S)=0$
        \item $A\eqels A', S\eqels S'\ \Rightarrow\ \alpha(A,S)=\alpha(A',S')$
    \end{itemize}
\end{description}
\end{definition}
Note that balance is a consequence of monotonicity. 

Next, we recall properties of influence functions. 
\begin{definition}
An influence function $\iota$ satisfies 
\begin{description}
\item[Monotonicity] if the following is satisfied: 
    \begin{itemize}
        \item $w < 0\ \Rightarrow\  \iota(b,w)\leq b$
        \item $w > 0\ \Rightarrow\  \iota(b,w)\geq b$
        \item $w_1 \leq w_2\ \Rightarrow\  \iota(b,w_1)\leq \iota(b,w_2)$
        \item $b_1 \leq b_2\ \Rightarrow\  \iota(b_1,w)\leq \iota(b_2,w)$
    \end{itemize} 
    \item[Balance] if $\iota(b,0)=b$ 
\end{description}
\end{definition}

\section{Omitted proofs from Section~\ref{sec:grad ABA}}
In this section, we state the proofs about fundamental properties of gradual ABA semantics. 
\PropSetagSatisfiesProps*
\begin{proof}
Most properties follow directly from the definition. Note that weakest link limiting implies void.

$\setag_\Pi$: \ocr\ is satisfied since  $\basescore:\mathcal{A}\rightarrow[0,1]$ and $a\cdot b\leq a$ for all $a,b\in[0,1]$. 
$\fact$ is satisfied since the empty product is $1$.
To show $\neut$, first observe that $\maxval_\sigma=1$ since the base score is in the interval $[0,1]$. 
Thus, $$\setag_\Pi(S)= \prod_{w\in S}w = \prod_{w\in S, w\neq 1} w = \setag_\Pi(\nonmax{S}),$$
and therefore, $\neut$ is satisfied. 
\ide\ is trivially satisfied.
$\wl$ holds since $s,s'\leq 1$ for all $s,s'\in S$, thus, $\prod_{s\in S}s\leq \min S$ if $S\neq \emptyset$ and $\prod_{s\in S}=1$ otherwise. 

$\setag_\Sigma$: It is easy to see that most of the properties are violated, e.g., $a+b\geq a$ for all $a,b\in[0,1]$ thus $\ocr$ does not hold; the empty sum is $0\neq 1=\maxval_{\setag_\sigma}$ thus $\fact$ is violated; clearly, $a+1> a$, contradicting $\neut$; the sum over elements in $[0,1]$ is larger than its minimum, thus $\wl$ is violated; if $s\in S$ with $s> 0$ then $\void$ is violated.  

$\setag_{\top}$: Let $S=\{0.2,0.5\}$ and $S'=\{0.2\}$ then $\setag_\top(S)=0.5> 0.2=\setag_\top(S')$, violating $\ocr$;
$\max \emptyset\cup \{0\}=0\neq 1=\maxval_{\setag_\top}$ thus $\fact$ is violated; clearly, $\max \{0.2,1\}=1>0.2=\max\{0.2\}$, contradicting $\neut$; $\max S> \min S$ for $S$ defined as above; and if $s\in S$ with $s> 0$ then $\void$ is violated. 

$\setag_{\bot}$: $\min S \leq \min S'$ for each $S'\subseteq S$: if $\min S = s\in S'$ then  $\min S' = \min S $ otherwise $\min S' = s'$ for some $s'>s = \min S$, thus $\ocr$ is satisfied. 
$\fact$ holds by definition; 
 $\neut$ follows since $s\in [0,1]$ for all $s\in S$; thus either there is some $s\in S$ such that $s<1$ or $\nonmax{S}=\emptyset$ and thus $\min \nonmax{S}=1$ by definition of $\setag_{\bot}$;
 $\wl$ and $\void$ follow from the definition.
\end{proof}

\LemmaEasyDefSuper*
\begin{proof}
$(\Rightarrow)$ Let $A\geqsup S$. If 1.\ holds, we are done. Now assume 2.\ holds. Let $S'=\Maxx_k(\nonmax{S})$. Then there is a bijection $f:\nonmax{A}\rightarrow S'$ such that for each $x\in \nonmax{A}$, $w\geq f(w)$.

    ($\Leftarrow$)  If 1.\ holds, we are done. Otherwise, let $S'\subseteq S$ with size $\vert \nonmax{A}\vert = k= \vert S'\vert$.
    Then, for each $w\in S'$, we can find some distinct value in $\Maxx_k(\nonmax{S})$ that is greater or equal to $w$ (since $\Maxx_k(\nonmax{S})$ contains all the maximal elements in $\nonmax{S}$). That is, we can find a bijection $f:\Maxx_k(\nonmax{S})\rightarrow S'$ such that $w\geq f'(w)$ for all $w\in \Maxx_k(\nonmax{S})$.
    Thus, $f'\circ f: \nonmax{A}\rightarrow S'$ is a bijection that satisfies, for all $w\in \nonmax{A}$, $w\geq f(w)\geq f'(f(w))$, as desired.
\end{proof}

\propBalanceSup*
\begin{proof}
    ($\Rightarrow$) Let $f=id$.
    
    ($\Leftarrow$) $A\geqsup S$ implies that there is a bijection $f:\nonmax{A}\rightarrow \Maxx_k(\nonmax{S})$ thus $|\nonmax{A}|\leq |\nonmax{S}|$;
    ${S}\geqsup{A}$ implies that there is a bijection $f':\nonmax{S}\rightarrow Max_{k'}(\nonmax{A})$ thus $|\nonmax{S}|\leq |\nonmax{A}|\leq |\nonmax{S}|$. We obtain $|\nonmax{A}|=|\nonmax{S}|$ and $f,f'=id$. 
\end{proof}

We note that, in contrast to superiority and $\nonmaxm$-equivalence (and in contrast to the respective properties for aggregation functions),
it is not the case that dominance, i.e.,  $\multiA\geqset \multiS$ and $\multiS\geqset\multiA$, is equivalent to $\Pos(\multiA)=\Pos(\multiS)$.

We give an example below.
\begin{example}
    Let $\multiA=\{A\}$ and $\multiS=\{S\}$ with $A=\{1,0.1\}$ and $S=\{0.1\}$.
    Observe that $\nonmax{A}=\nonmax{S}$. 
    Then, $\multiA\geqset \multiS$ since $A\geqsup S$ and $\multiS\geqset\multiA$ since $S\geqsup A$.
    However, $\Pos(\multiA)=\multiA\neq \multiS=\Pos(\multiS)$.
\end{example}

\PropSatisfySpres*
\begin{proof}
    $\setag_\Pi$: 
    Suppose $\multiA\geqset \multiS$.
    First, suppose $\Pos(\multiA)=\Pos(\multiS)=\emptyset$, that is, each set in $\multiA$ and $\multiS$ contains $0$.
    Since $\setag_\Pi$ satisfies \void, it follows that $\setag_\Pi(A)=\setag_\Pi(S)=0$ for all $A\in \multiA$, $S\in \multiS$. Thus $\setag_\Pi(\multiA)\succeq \setag_\Pi(\multiS)$.

    Now, suppose there is $\multiA'\subseteq \Pos(\multiA)$ and 
    a bijection $f:\multiA' \rightarrow \Pos(\multiS)$ 
    s.t.\ $A\geqsup f(A)$ for all $A\in \multiA'$.

    Fix $A\in\multiA'$ and let $S=f(A)$.
    We show that $\setag_\Pi(A)\geq \setag_\Pi(S)$.
    Recall that
    $A\geqsup S$ iff 
    (i) $\nonmax{A}=\nonmax{S}=\emptyset$
    or
    (ii) there is a bijection $f:\nonmax{A}\rightarrow \Maxx_k(\nonmax{S})$, 
    such that for each $x\in \nonmax{A}$, it holds that $x \geq f(x)$.

    Suppose (i) holds.
    Since $\setag_\Pi$ satisfies \neut, we have $\setag_\Pi(A)=\setag_\Pi(\nonmax{A})=\setag_\Pi(\emptyset)$.
    Thus, $\setag_\Pi(A)=\setag_\Pi(S)$.

    Suppose (ii) holds. Let $s_1,\dots ,s_k=\Maxx_k(\nonmax{S})$ denote the $k=|\nonmax{A}|$ largest elements of $\nonmax{S}$, $s_i\neq 1$, let $s_i\leq s_{i+1}$. Let $a_1,\dots, a_k$ denote the corresponding elements in $\nonmax{A}$ so that $a_i\geq s_i$. Since all elements are in $[0,1)$, it holds that $\prod_{i\leq k}s_i\leq \prod_{i\leq k}a_i$. We obtain $\setag(A)\geq \setag(S)$.

    $\setag_\bot$: 
    Suppose $\multiA\geqset \multiS$.
    First, suppose $\Pos(\multiA)=\Pos(\multiS)=\emptyset$, that is, each set in $\multiA$ and $\multiS$ contains $0$.
    Since $\setag_\bot$ satisfies \void, it follows that $\setag_\bot(A)=\setag_\bot(S)=0$ for all $A\in \multiA$, $S\in \multiS$. Thus $\setag_\bot(\multiA)\succeq \setag_\bot(\multiS)$.

    Now, suppose there is $\multiA'\subseteq \Pos(\multiA)$ and 
    a bijection $f:\multiA' \rightarrow \Pos(\multiS)$ 
    s.t.\ $A\geqsup f(A)$ for all $A\in \multiA'$.

    Fix $A\in\multiA'$ and let $S=f(A)$.
    We show that $\setag_\bot(A)\geq \setag_\bot(S)$.
    Recall that
    $A\geqsup S$ iff 
    (i) $\nonmax{A}=\nonmax{S}=\emptyset$
    or
    (ii) there is a bijection $f:\nonmax{A}\rightarrow \Maxx_k(\nonmax{S})$, 
    such that for each $x\in \nonmax{A}$, it holds that $x \geq f(x)$.

    Suppose (i) holds.
    Since $\setag_\bot$ satisfies \neut, we have $\setag_\bot(A)=\setag_\bot(\nonmax{A})=\setag_\bot(\emptyset)$.
    Thus, $\setag_\bot(A)=\setag_\bot(S)$.

    Suppose (ii) holds. Let $s_1,\dots ,s_k=Max_k(S)$ denote the $k=|\nonmax{A}|$ largest elements of $S$, $s_i\neq 1$, let $a_1,\dots, a_k$ denote the corresponding elements in $\nonmax{A}$ so that $a_i\geq s_i$. Wlog, let $s_1$, $a_1$ denote the smallest element of the respective sets. Then $\min \Maxx_k(\nonmax{S})=s_1\leq a_1=\min \nonmax{A}$. Thus $\setag(A)\geq \setag(S)$.
\end{proof}
We provide counter-examples for     $\setag_\Sigma$ and   $\setag_{\top}$.
\begin{example}
    $\setag_\Sigma$ violates $\spres$: Let $A=\{0.2\}$ and $S=\{0.1,0.1,0.1\}$. Then $\{A\}\geqset \{S\}$ since $A\geqsup S$.
    However, $\setag_\Sigma(A)=0.2$ and $\setag_\Sigma(S)=0.3$.

    $\setag_{\top}$ violates $\spres$: Let $A=\{0,0.1\}$ and $S=\{0,0.2\}$. Then $\{A\}\geqset \{S\}$ but $\setag_{\bot}(A)=0.1>0.2=\setag_\bot(S)$.
    
\end{example}

\PropSatisfactionSETAGALPHAmon*
\begin{proof}
Since $\setag$ satisfies \spres, we have that every two multi-sets of mulit-sets $\multiA$, $\multiS$ can be aggregated into multi-sets $\setag(\multiA)$, $\setag(\multiS)$ in an order-preserving manner. Since $\alpha$ satisfies monotonicity, the statement follows. 
\end{proof}

\ThmProperties*
\begin{proof}
    Let $\sigma$ be a modular semantics. Then, $\sigma$ satisfies 
    \begin{description}
        \item[\anon] since the names of the assumptions does not influence their evaluation;
        \item[\indep] by definition, since the assumption's strength is influenced only by connected assumptions;
        \item[\dir] since, more precisely, only attack- or support-predecessors influence the strength of an assumption.
    \end{description}
    Let $A_a=\setag(\Att(a))$, $S_a=\setag(\Sup(a))$, $A_b=\setag(\Att(b))$, and $S_b=\setag(\Sup(b))$. 
    Let $\sigma$ have an elementary kernel. Then, $\sigma$ satisfies
        \begin{description}
        \item[\IAM] $\sigma$ is a fixed point of the strength evolution process, we have $$\sigma(a)=\iota(\basescore(a),
        \alpha(A_a,S_a)).$$
        By assumption, $\Att(a)\geqset \Sup(a)$.
        Since $\alpha$ is monotone and $\setag$ satisfies $\succeq$-preservation, it holds that 
        they satisfy $(\alpha,\setag)$-monotonicity. We thus obtain $\alpha(A_a,S_a)\leq 0$. Hence, $\iota(\basescore(a), \alpha(A_a,S_a) ) \leq \basescore(a)$ by monotonicity and balance of $\iota$.
        We obtain $\sigma(a)\leq \basescore(a)$.
        \item[\ISM] Analogous to \IAM.
        \item[\RM]  It holds that \begin{align*}
            \sigma(a)=&\ \iota(\basescore(a), 
        \alpha(A_a,S_a))\\
        \sigma(b)= &\ \iota(\basescore(b),
        \alpha(A_b,S_b))
        \end{align*}
        From $\Att(a)\geqset \Att(b)$ and by $(\alpha,\setag)$-monotonicity, we obtain $\alpha(A_a,S_a)\leq \alpha(A_b,S_a)$;
        further, from $\Sup(b)\geqset Sub(a)$, we get $\alpha(A_b,S_a)\leq \alpha(A_b,S_b)$.
        Thus, $\alpha(A_a,S_a)\leq \alpha(A_b,S_b)$. 
        Moreover, by assumption $\basescore(a)\leq \basescore(b)$ and by monotonicity of $\iota$, we obtain
        \begin{align*}
        \sigma(a)=\iota(\basescore(a), \alpha(A_a,S_a))\leq &\ \iota(\basescore(a),
        \alpha(A_b,S_b))\\
        \leq &\ \iota(\basescore(b),
        \alpha(A_b,S_b))=\sigma(b).
        \end{align*}
        \item[\IB] $\Att(a)\eqset \Sup(a)$ implies, by definition, $\Att(a)\geqset \Sup(a)$ and $\Sup(a)\geqset \Att(a)$. By  \ISM\ and \IAM, we obtain $\sigma(a)\leq \basescore(a)\leq \sigma(a)$. Thus, $\sigma(a) = \basescore(a)$.
        \item[\RB] $\Att(a)\eqset \Att(b)$ implies $\Att(a)\geqset \Att(b)$ and  $\Att(b)\geqset \Att(a)$. Analogously, we obtain $\Sup(a)\geqset \Sup(b)$ and  $\Sup(b)\geqset \Sup(a)$ from $\Sup(a)\eqset \Sup(b)$.
        By \RM, we thus get $\sigma(a)\leq \sigma(b)\leq \sigma(a)$.\qedhere
        
        \end{description}
\end{proof}

\propQBAFtoBSAF*
\begin{proof}
    Follows since $\setag$ corresponds to the identity function $id$ in this setting. 
\end{proof}

\section{Convergence of Elementary Modular Semantics}\label{appendix:convergence}

First, we provide the proof for the convergence in acyclic BSAFs. 
We define cycles with respect to the so-called \emph{primal graph}~\cite{DvorakKUW24} 
where each hyperedge $(\aset,a)$ is interpreted as simple edge $(e,a)$ for each $e\in \aset$.
A BSAF is acyclic if its corresponding primal graph is. 
\begin{definition}
    Let $\BF=(\aargs,\aatts,\asupps,\basescore)$.
    We define the primal graph of $\BF_P=(\aargs,\aatts',\asupps')$ with $\aatts'=\{(e,a)\mid (\aset,a)\in \aatts,e\in \aset\}$; analogously for $\asupps'$.
    $\BF$ is acyclic if $\BF_P$ is.
\end{definition}
\propConvergeneacyclic*
\begin{proof}
    We traverse the BSAF based on the topological order induced by the primal graph, starting by the leaf nodes. 
    Now, given an assumption $a$ with incoming attackers $\Att(a)$ and supporters $\Sup(a)$ so that all $b\in \aset$, $\aset\in \Att(a)\cup \Sup(a)$ have been already computed, we apply the semantics to obtain the strength of $a$. Since the BSAF is acyclic, there is no set $\aset$ with $a\in \aset$ so that $\aset$ supports or attacks any predecessor of $a$ in the primal graph.   
    Thus, the strength of $a$ is final.
\end{proof}

To prove the convergence result for the general case,
 it will be convenient to consider the (set) aggregation and influence function not for individual objects (assumptions) but for the ABAF/BSAF as a whole.
In what follows, we will thus slightly adopt the definition of the semantics.

Given a BSAF $\BF=(\aargs,\aatts,\asupps)$ corresponding to the ABAF $D=(\lit,\asm,\rules,\contraryempty)$ be an ABAF with $|\mathcal{A}|=|\aargs|=n$. 
Let $\mathcal{A}=\{a_1,\dots, a_n\}$ and let $\vec{a}$ denote the column vector ($1\times n$ matrix) whose entries correspond to the weights of the assumptions.
Let $m=\vert \{ H \mid (H,t)\in \aatts\cup\asupps\} \vert$ denote the total number of attacking and supporting sets in $D$ (resp.\ in the corresponding BSAF). 

We define the strength evolution process on the BSAF arguments (assumptions). 
The strength evolution process is a function $s:\mathbb{R}^n\rightarrow\mathbb{R}^n$ is a composition $s=\iota \circ \alpha\circ \setag$ of three functions $\setag:\mathcal{R}^n\rightarrow \mathbb{R}^m$, $\alpha:\mathbb{R}^m\rightarrow \mathbb{R}^n$, $\iota:\mathbb{R}^n\rightarrow \mathbb{R}^n$.

Let $B=(b_{ij})_{1\leq i\leq m,1\leq j\leq n}$ denote the $m\times n$ matrix with entries in $\{0,1\}$ so that the rows correspond to the sets that attack or support at least one assumption in $D$ and the entries in the row encode containment in the set. Let $i$ denote a set $H$ corresponding to the $i$-th row in $B$; 
then $b_{ij}=1_H$, i.e., $b_{ij}=1$ if $a_j\in H$ and $b_{ij}=0$ otherwise.
A set aggregation function $\setag: \mathbb{R}^n\rightarrow \mathbb{R}^m$ for w.r.t.\ $D$ computes the strength of the attacks and supports.
Table~\ref{tab:setag functions formally} gives an overview over the functions that we consider in this paper. 
\begin{table}
    \centering
    \begin{tabular}{l r}
          $\setag_\Sigma({\mathcal{A}})= B \cdot \vec{a} = ( \sum_{i=1}^n a_ib_{ij})_{1\leq j\leq m}$  & \textbf{(Sum)}\\
          $\setag_\Pi({\mathcal{A}})= (   \prod_{i=1, b_{ij}=1}^n a_i)_{1\leq j\leq m} $  & \textbf{(Prod)}\\
          $\setag_\bot({\mathcal{A}}) =(   \min \{ a_ib_{ij}\mid i\leq n \}\cup \{1\} )_{1\leq j\leq m} $ &  \textbf{(Min)}\\
          $\setag_\top({\mathcal{A}}) =(   \max \{ a_ib_{ij}\mid i\leq n \}\cup \{1\} )_{1\leq j\leq m} $ &  \textbf{(Max)}
    \end{tabular}
    \caption{Global definition of set-aggregation functions for an ABAF $D$.}
    \label{tab:setag functions formally}
\end{table}
We then apply an aggregation function $\alpha:\mathbb{R}^m\rightarrow \mathbb{R}^n$ that aggregates the attacks and supports and returns the aggregated value for each assumption, by slightly adopting the case for QBAFs (the difference is for QBAFs, $m=n$).
Let $C$ denote the $m\times n$ matrix with entries in $\{0,1,-1\}$ where each row corresponds to an attacking or supporting set and the columns to the assumptions in $D$. $C$ encodes the attackers and supporters of the assumptions. Given $a_j\in\mathcal{A}$ and a set $H\subseteq \mathcal{A}$ corresponding to the $j$-th entry of $C$, then $c_{ij}=1$ if $H$ supports $a_j$, $c_{ij}=-1$ if $H$ attacks $a_j$, and then $c_{ij}=0$ otherwise. 

\begin{definition}
A function $f$ is Lipschitz-continuous iff there exists a Lipschitz-constant $\lambda$ such that $\Vert f(x)-f(y)\Vert \leq \lambda\Vert x-y\Vert$ for all $x,y\in Dom(f)$.     
\end{definition}

We prove that our set aggregation functions $\setag$ are Lipschitz-continuous with Lipschitz-constants $\lambda_{\setag}$.
We consider the maximum norm for matrices $\Vert A \Vert =\max \{\sum_{j=1}^m |a_{ij}|\mid 1\leq i\leq n\}$ for an $m\times n$ matrix $A=(a_{ij})_{1\leq i\leq m,1\leq j\leq n}$.
Norms in $\mathbb{R}^n$ are equivalent in the sense that their difference can be bounded up to a constant factor. 

\begin{restatable}{proposition}{converdlkjgfng}
    Let $F=(\aargs,\aatts,\asupps)$ be a BSAF
    $|\aargs|=n$, $|\aatts\cup\asupps|=m$ and $h = \max \{H\subseteq \aargs\mid (H,t)\in \aatts\cup\asupps\}$.
    All considered set aggregation functions are Lipschitz-continuous, with Lipschitz-constants
    $\lambda_{\setag_\Pi}=\lambda_{\setag_\Sigma} = h$
    and $\lambda_{\setag_\top}=\lambda_{\setag_\bot}=1$.
\end{restatable}

\begin{proof}
To check that a set aggregation function $\setag: \mathbb{R}^n\rightarrow \mathbb{R}^m$ is Lipschitz-continuous it suffices to show that the component functions $\setag_j: \mathbb{R}^n\rightarrow \mathbb{R}$ are continuous. 
If all component functions $\setag_j$ are Lipschitz-continuous with Lipschitz-constant $\lambda_j$, then 
\begin{align*}
    \vert \setag_j(x) -\setag_j(y)\vert \leq \lambda_j \Vert x-y \Vert
\end{align*}
for all $j\leq m$ implies
\begin{align*}
    \Vert \setag(x) -\setag(y)\Vert & =  \max \{| \setag_j(x) -\setag_j(y) |\mid 1\leq j\leq m\} \\
    & \leq  \max \{\lambda_j \mid 1\leq j \leq m\}  \Vert x-y \Vert.
\end{align*}
\begin{itemize}
    \item[$\setag_\Sigma$:] The Lipschitz-constant is the size of the largest set $H$ that attacks or supports an assumption in $D$, i.e., 
    $\lambda_{\setag_\Sigma}=\max \{|H|\mid (H,a)\in\aatts\cup \asupps\}$. 
\begin{align*}
    \vert \setag_{\sum,j}(x) -\setag_{\sum,j}(y)\vert = &\ \vert \sum_{i=1}^n x_ib_{ij} -  \sum_{i=1}^n y_ib_{ij} \vert\\
    = &\  \vert\sum_{i=1}^n b_{ij} ( x_i-y_i  )\vert \\
    \leq &\ \sum_{i=1}^n b_{ij} \Vert x-y\Vert
\end{align*}
Thus, $\lambda_{\setag_\Sigma} = \max\{\sum_{i=1}^n b_{ij}\mid 1\leq i \leq n\}$ which corresponds to the size of the largest attacking or supporting set in $D$. 
    \item[$\setag_\Pi$:]  Likewise, $\lambda_{\setag_\Pi}=\max\{|H|\mid (H,a)\in\aatts\cup \asupps\}$:
    \begin{align*}
    \vert \setag_{\Pi,j}(x) -\setag_{\Pi,j}(y)\vert = &\ \vert\prod_{i=1, b_{ij}=1}^n x_i - \prod_{i=1, b_{ij}=1}^n y_i\vert\\
    \leq &\  \vert \prod_{i=1, b_{ij}=1}^n \max x - \prod_{i=1, b_{ij}=1}^n \max y   \vert \\
    = &\ \sum_{i=1}^n b_{ij} \Vert x-y\Vert
    \end{align*}
    \item[$\setag_\top$:] 
    The Lipschitz-constant is $1$. 
    Let $\max Z := \max \{ z_ib_{ij}\mid i\leq n \}\cup \{1\} $, $Z\in\{X,Y\}$.
    In the case $b_{ij}=0$ for all $j\leq m$, we have 
        \begin{align*}
    \vert \setag_{\top,j}(x) -\setag_{\top,j}(y)\vert = &\ \vert\max X -\max Y \vert\ = 0,
    \end{align*}
    Otherwise, we have 
    \begin{align*}
    \vert \setag_{\top,j}(x) -\setag_{\top,j}(y)\vert = &\ \vert\max X -\max Y \vert\ \\
    \leq &\ \max_{i\leq n, b_{ij}=1} \vert x_i -y_i \vert\\
    = &\ \Vert x -y \Vert
    \end{align*}
    \item[$\setag_{\bot}$:] Again, we have Lipschitz-constant $1$.
    Let $\min Z := \min \{ z_ib_{ij}\mid i\leq n \}\cup \{1\} $, $Z\in\{X,Y\}$.     
    \begin{align*}
    \vert \setag_{\bot,j}(x) -\setag_{\bot,j}(y)\vert = &\ \vert\min X -\min Y \vert\\
   \leq &\ \max_{i\leq n, b_{ij}=1} \vert x_i -y_i \vert\\
    = &\ \Vert x -y \Vert
    \end{align*}
\end{itemize}
\end{proof}

If the Lipschitz-constant $\lambda$ is strictly smaller than $1$ then the function converges with exponential speed. 
We recall the Lipschitz-constants for the aggregation function and influence functions under considerations, as given in 
Potyka~\cite[Table 1]{PotykaB24}.
We let $\lambda_\alpha=\max_{1\leq i\leq n} \lambda_\alpha^i$ where $\lambda_\alpha^i$ denotes the $i$th component of the function $\alpha$.

We say that a function $f(x)$ converges with exponential speed to a fixed point $x^*$ if $\vert f^n(x)-x^*\vert \leq cd^n$ for some $c\geq 0$ and $0\leq d<1$. 

\convergenceP*
\begin{proof}
Lipschitz-continuous functions are closed under composition with the Lipschitz-constant being the product of the composed functions. 
Sicne the strength evolution process $s$ is defined as $\iota\circ\alpha\circ \setag$, it holds that $s_D$ has Lipschitz-constant $\lambda_\setag\cdot \lambda_\alpha\cdot \lambda_\iota$.
If $\lambda_D< 1$ then $s_D$ is a contraction and converges to a unique fixed point $x^*$.

Moreover, iterating $\vert s_D(x)-x^*\vert \leq \lambda_D\vert s(x)-x^*\vert$ gives $$\vert s_D^n(x)-x^*\vert \leq \lambda_D^n \vert s_D(x) - x^*\vert.$$
Thus, $s_D$ converges with exponential speed. 
\end{proof}

\convergenceOurSems*
\begin{proof}
We compute
$\lambda_\setag\cdot\lambda_\alpha\cdot\lambda_\iota$.
By~\cite{DBLP:conf/atal/Potyka19}, we have $\lambda_{\alpha_\Pi}=\lambda_{\alpha_\Sigma}=d$ and $\lambda_{\iota_q(k)}=\lambda_{\iota_{lin}(k)}=\frac{1}{k} \max\{w,1-w\}$ for weight $w$.
Note that $\max\{w,1-w\} \leq 1$ by assumption.
Thus, for $\lambda_{\setag_\Pi}=\lambda_{\setag_\Sigma}=h$, we have 
 $h\cdot d \cdot \frac{1}{k} \max\{w,1-w\} \leq \frac{hd}{k} < 1$ iff
 $hd< k $. 
For $\lambda_{\setag_\top}=\lambda_{\setag_\bot}=1$, we get convergence whenever $d< k$. 
\end{proof}

\section{Omitted Proofs from Section~\ref{sec:instantiation}}

\PropDesiderataArg*
\begin{proof}
    \textbf{(SI)} is by definition; the remaining results follow from Proposition~\ref{prop:SatDesiderata setag}.
\end{proof}

\PropSameArg*
\begin{proof}
    By definition of flat, only sentences that are not assumptions can be derived. Hence, $\{a\}\vdash a$ is the unique argument with claim $a$ for each assumption $a\in\asm$.
\end{proof}

\PropCorrespAnrg*
\begin{proof}
    Let $F_D$ denote the BSAF corresponding to $D$. 
    Note that $F_D$ is essentially a QBAF because all attacks are of the form $(a,b)$ for assumptions $a,b\in\asm$ (induced only by the contrary function). 

    Let $\mathcal{F}_D$ denote the QBAF instanitation of $D$. 
    $\mathcal{F}_D$ has only assumption arguments, thus 
    the arguments in $\mathcal{F}_D$ and $F_D$ coincide. Moreover, attacks are also only induced by the contrary function. We obtain $\mathcal{F}_D=F_D$ and thus the semantics coincide. 
\end{proof}

\section{Details for Experiments}
\label{app:exp_details}

In this section we provide details for the empirical evaluation presented in Section~\ref{sec:experiments}. The code to reproduce all experiments and plots is available at: \url{https://github.com/briziorusso/GradualABA}

\subsubsection{Computing Infrastructure.}  
All experiments were run on an Intel(R) Xeon(R) W5-2455X CPU (up to 4.6\,GHz) with 128\,GB of RAM, using Python 3.10.12 on Ubuntu 22.04.

\subsubsection{Time Budget.}  
Each run was given a 10 minute CPU time budget. In the argument‐construction phase (Step 2 in Figures~\ref{fig:best route}\footnote{We construct attacks and supports  $(\aset,a)$ from arguments $\aset\vdash \contrary{a}$ and $\aset\vdash a$, respectively.} and \ref{fig:AF route}), 92 of the 1{,}440 ABAFs timed out (all non-flat). In the semantics phase, we observed 10 additional timeouts for BSAF versus 89 for BAF. The unsolved ABAFs tend to be the largest: among the timed-out instances the median vocabulary size is \(|\mathcal{L}|=60\) with parameters \(f=0.2\), \(r_{\max}=8\), and \(b_{\max}=8\), as detailed below.

\subsection{Data Generating Process}
\label{sec:data-generation}

We generate a suite of synthetic ABAFs by varying five design parameters:
\(
  \mathcal{L},\;
  m,\;
  r_{\max},\;
  b_{\max},\;
  f.
\)
Concretely:
\begin{description}
  \item[\(\mathcal{L}\)] Total vocabulary size (atoms \(\cup\) assumptions).  
  \item[\(m\)] Number of assumptions, set to \(0.5\,|\mathcal{L}|\). Each assumption receives a unique contrary (so \(\mathcal{A}\cap\overline{\mathcal{A}}=\emptyset\)), which preserves generality by appealing to the translation to  ABAFs with separated contraries~\cite{DBLP:journals/jair/RapbergerU23}.  
  \item[\(r_{\max}\)] Maximum rules per head. For \(r_{\max}\in\{2,4,8\}\), each sentence is derived between 1 and \(r_{\max}\) rules.  
  \item[\(b_{\max}\)] Maximum rule-body size. For \(b_{\max}\in\{2,4,8,16\}\), each rule’s body is a uniform random subset of assumptions of size in \([1, b_{\max}]\).  
  \item[\(f\)] Head-assumption fraction. We pick an additional \(\lfloor f\,m\rfloor\) assumptions to also act as heads, for \(f\in\{0.01,0.05,0.1,0.2\}\). When \(f\approx 0\) we mostly obtain flat ABAFs; larger \(f\) yields non-flat instances.  
\end{description}

All the ABAF generated are so-called \emph{atomic ABAFs}~\cite{DBLP:journals/jair/RapbergerU23}, i.e., ABAFs where each rule has only assumptions in the body. Since each ABAF can be translated into an atomic ABAF, this does not limit the generality of our experiments; it just speeds up the BSAF and BAF instantiations. 

For each configuration we generate 10 independent seeds to smooth out randomness.
Overall, we produce 1440 random ABAF, of which 960 are non-flat and 480 flat. 

\subsection{Metrics} 
\label{app:metrics_details}

We summarise the results of our experiments on the convergence using two metrics: global convergence rate and average steps to converge. Here we provide formal definitions.

Let \(\sigma_t(a)\) be the strength of assumption \(a\) at iteration \(t\). Fix thresholds \(\epsilon=10^{-3}\), \(\delta=5\), and maximum iteration cap \(T=5000\). Define for each \(a\):
\[
  \mathrm{Converged}(a)\;=\;
    \Bigl[\max_{T-\delta<t\le T} \bigl|\sigma_t(a)-\sigma_{t-1}(a)\bigr| < \epsilon\Bigr].
\]
An ABAF \emph{globally converges} iff \(\mathrm{Converged}(a)=1\) for \textit{every} assumption \(a\). Over \(N\) ABAFs, the \emph{global convergence rate} is the mean of all converged ABAFs.
Additionally, for each converged run we let
\[
  T^* 
    = \min\bigl\{\, t \le T \mid \forall  t-\delta < u \le t:
      |\sigma_u(a)-\sigma_{u-1}(a)|<\epsilon
    \bigr\},
\]
and report the \emph{average steps to converge} as the mean of all~\(T^*\).

\subsection{Additional Results}
\label{app:additional_results}

In this section we provide additional results omitted from  Section~\ref{sec:experiments} for conciseness.

\begin{figure}[t]
    \centering
    \includegraphics[width=\linewidth]{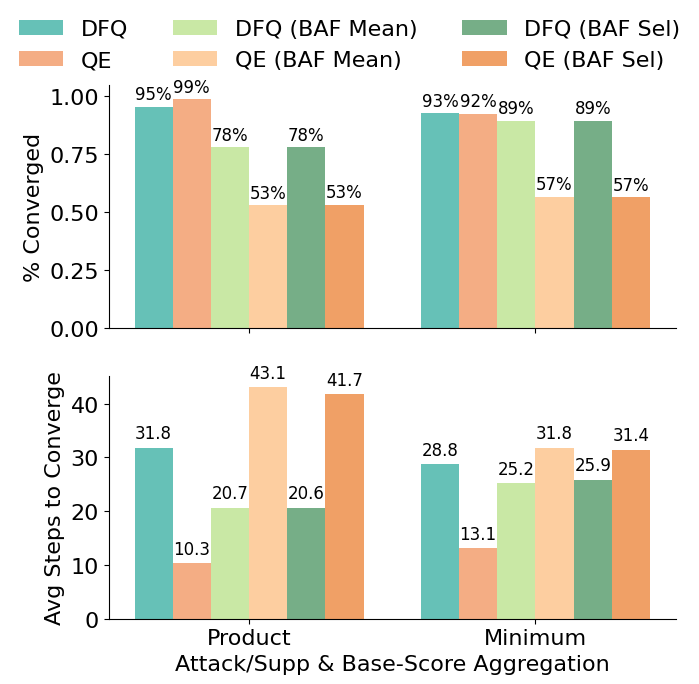}
    \caption{\textbf{BAF assumption‐strength aggregation ablation.}
    Global convergence rate (top) and average steps to converge (bottom) for DF-QuAD and QE under BAF,
    comparing mean‐aggregation \(\sigma^*_{\mathrm{mean}}\) versus selection \(\sigma^*_{\mathrm{asm}}\).
    The first two bars in each group show the corresponding BSAF results; the last four bars
    show that switching \(\sigma^*\) in BAF has negligible impact on either convergence rate or speed.}
    \label{fig:BAF-sel}
\end{figure}
\subsubsection{Assumption Strength Computation for BAF.}
To assess the impact of the assumption strength Computation function \(\sigma^*\) in BAF, we compare the mean‐aggregation
\(\sigma^*_{\mathrm{mean}}\) (averaging all attacker and supporter strengths) against a ``selection” variant
\(\sigma^*_{\mathrm{asm}}\) (selecting the trivial argument $(\{a\}\vdash a)$ for each assumption). 

Figure~\ref{fig:BAF-sel} compares convergence for DF-QuAD and QE under BAF, contrasting mean-aggregation \(\sigma^*_{\mathrm{mean}}\) vs.\ assumption selection \(\sigma^*_{\mathrm{asm}}\). The first two bars in each quadrant reproduce the BSAF results for reference; the last four bars show the four BAF variants (DF-QuAD mean, QE mean, DF-QuAD sel, QE sel). 
Results show minimal differences, within BAF, with both DF-QuAD and QE being robust to the choice of assumption-strength aggregation. 

Particularly:
\begin{itemize}
    \item Convergence Rates (top panels): Switching the assumption-strength aggregator has virtually no effect on final convergence rates for either semantics.
        \begin{itemize}
          \item \emph{Product base-score aggregation $\beta_{\Pi}$:} DF-QuAD converges in 78\% of runs under both \(\sigma^*_{\mathrm{mean}}\) and \(\sigma^*_{\mathrm{asm}}\); QE converges in 53\% under both.
          \item \emph{Minimum base-score aggregation $\beta_{\bot}$:} DF-QuAD converges in 89\% of runs under both options; QE in 57\%.
        \end{itemize}
    \item Convergence Speed (bottom panels): Minimal differences in average steps to convergence are observed.
        \begin{itemize}
          \item \emph{Product aggregation:} DF-QuAD requires 20.7 iterations with \(\sigma^*_{\mathrm{mean}}\) vs.\ 20.6 with \(\sigma^*_{\mathrm{asm}}\); QE requires 43.1 vs.\ 41.7.
          \item \emph{Minimum aggregation:} DF-QuAD needs 25.2 vs.\ 25.9 iterations; QE needs 31.8 vs.\ 31.4.
        \end{itemize}
\end{itemize}

\begin{figure}[t]
    \centering
    \includegraphics[width=\linewidth]{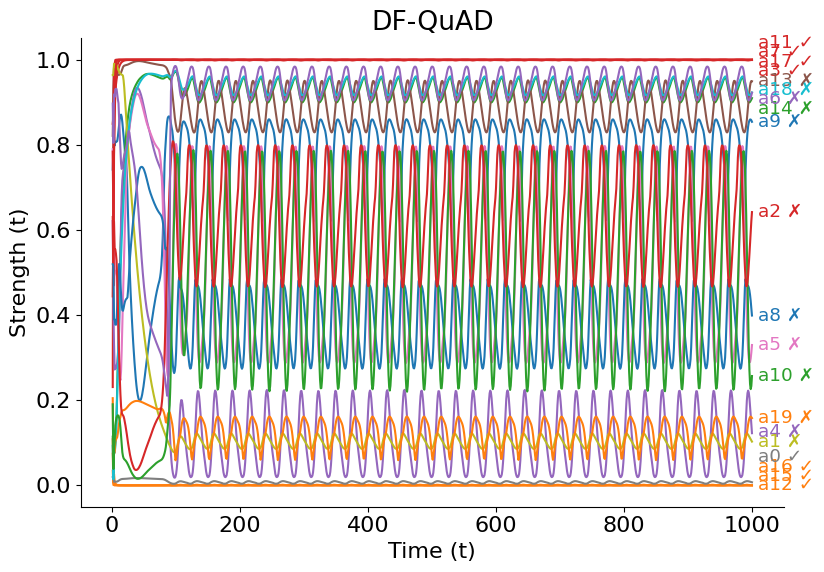}
    \caption{\textbf{DFQ trajectories at \(T=1000\).} The same experiment as in the bottom row of Figure~\ref{fig:strength_trajectories} in the main text. Assumption‐strength curves \(\sigma_t(a)\) for DF-QuAD
    on a 30‐assumption ABAF with constant base‐scores \(\tau=0.5\) and product aggregation.  Large oscillations remain after 1{,}000 iterations.}
    \label{fig:dfq_t1000}
\end{figure}

\begin{figure}[t]
    \centering
    \includegraphics[width=\linewidth]{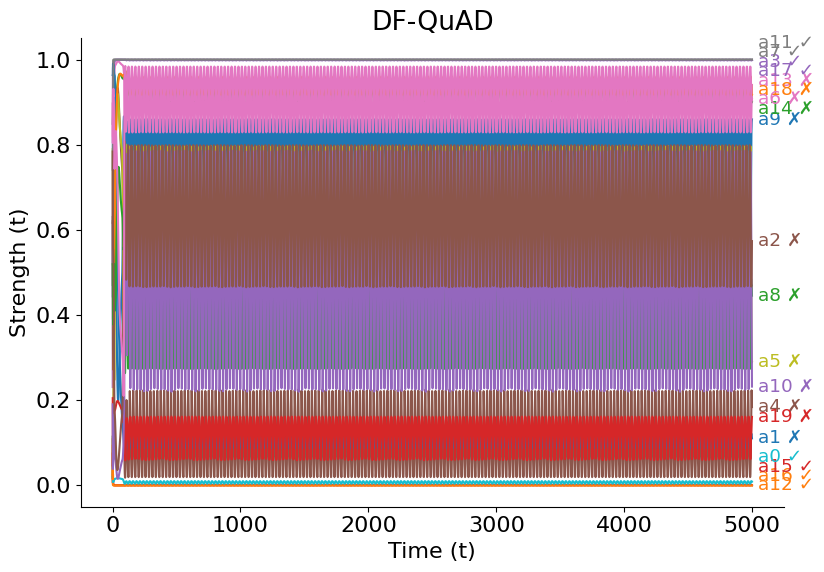}
    \caption{\textbf{DFQ trajectories at \(T=5000\).} The same experiment as Figure~\ref{fig:dfq_t1000}, now through 5{,}000 iterations:
    oscillations persist indefinitely under DF-QuAD with product aggregation.}
    \label{fig:dfq_t5000}
\end{figure}
\subsubsection{Extended Strength Trajectories.}
Here we present an extended view the example trajectory shown in Figure~\ref{fig:strength_trajectories} (bottom left quarter) in the main text. Figures~\ref{fig:dfq_t1000} and \ref{fig:dfq_t5000} display the same DF-QuAD trajectory example, run on a 30‐assumption ABAF
(\(\tau(a)=0.5\), \(\setag_{\Pi}\)) but here with \(T=1000\) and \(T=5000\) iterations, respectively, instead of only 50.  
Even after 1{,}000 iterations (Figure~\ref{fig:dfq_t1000}), the assumption‐strength curves exhibit large, sustained oscillations; these persist through
5{,}000 iterations (Figure~\ref{fig:dfq_t5000}).

\subsubsection{Sensitivity of BSAF to ABAF Structural Features.}
Figure~\ref{fig:BSAF_conv_struct} (global convergence rates) and Figure~\ref{fig:BSAF_speed_struct} (iterations to converge) compare DF-QuAD and QE under BSAF across three structural dimensions—number of assumptions, number of rules, and average rule-body size—and two base-score initialisations (\(\tau=0.5\) and \(\tau\sim U(0,1)\)). 

DF-QuAD converges in at least 90\% of runs (typically 97–100\%) in every bucket and requires 25–35 iterations. QE’s convergence varies from 53–78\% under Set–Minimum with \(\tau=0.5\) (rising to 88–95\% with random \(\tau\)) to 98–100\% under Set–Product with \(\tau=0.5\) (remaining >92\% at random \(\tau\)), and only slightly exceeds DF-QuAD in the smallest buckets ($\leq$15 assumptions). 

In speed, QE needs 10–15 iterations on simple ABAFs but slows to 34–42 on the most complex, whereas DF-QuAD stays at 25–35. Random \(\tau\) narrows QE’s worst-case by 3–5 iterations and adds 2–3 to DF-QuAD. Thus, DF-QuAD offers consistent high reliability, while QE trades rapid convergence on easy instances for reduced resilience as complexity grows.  

\subsubsection{Sensitivity of BAF to ABAF Structural Features.}
Figure~\ref{fig:BAF_conv_struct} reports global convergence rates for DF-QuAD and QE under BAF (mean base-score aggregation), bucketed by number of assumptions, rules, and average body size, with Set–Minimum and Set–Product aggregation and both \(\tau=0.5\) and \(\tau\sim U(0,1)\).  

Unlike under BSAF, both DF-QuAD and QE under BAF exhibit pronounced sensitivity to framework structure and initialisation. DF-QuAD’s convergence rate falls from near 100\% on small or medium ABAFs to roughly 90–92\% on the largest and densest ones under both Set–Minimum and Set–Product, recovering only to 94–98\% with random \(\tau\). QE performs worse still: its rate drops below 60\% under Set–Minimum and stays under 80\% even with Set–Product or random initialisation. 

Figure~\ref{fig:BAF_speed_struct} confirms larger variance in iterations to converge—DF-QuAD requires 20–30 steps (vs.\ 25–35 under BSAF), while QE ranges from $\sim$17 to $\sim$60 iterations (vs.\ 10–42 under BSAF). Random base‐score initialisation further destabilises BAF, sometimes improving QE’s poorest runs but increasing overall variance, likely due to the higher rate of timeouts. Overall, BAF yields substantially less robust and predictable convergence than BSAF for both DF-QuAD and QE.  

Overall, BSAF (Figures~\ref{fig:BSAF_conv_struct}–\ref{fig:BSAF_speed_struct}) delivers substantially higher and more consistent convergence—95–100\% and 25–35 iterations—compared to BAF’s DF-QuAD (90–98\%, 20–30 iter) and QE (50–85\%, 17–60 iter) across the same structural spectrum.
\begin{figure}[ht]
    \centering
    \includegraphics[width=\linewidth]{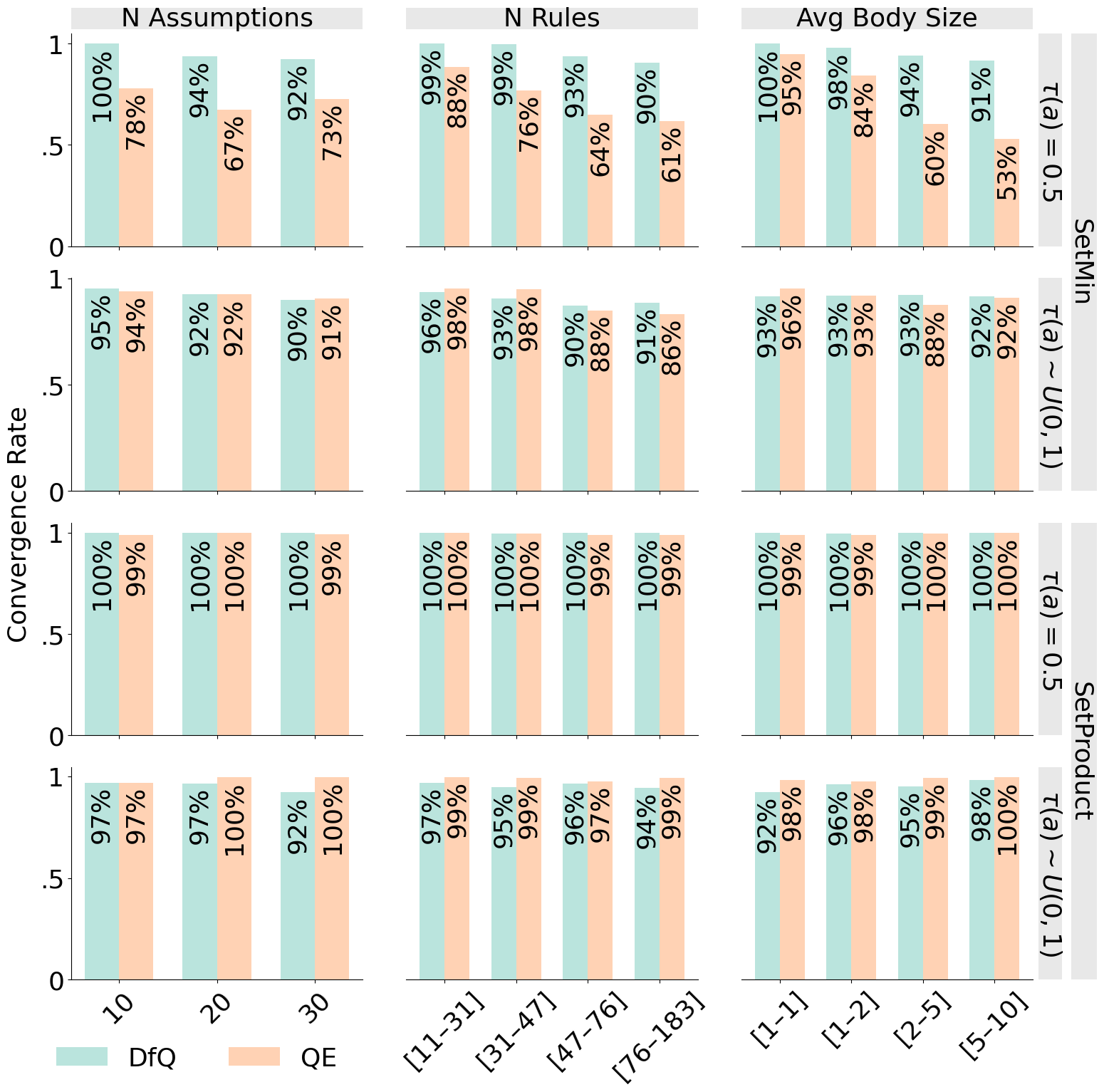}
    \caption{\textbf{BSAF convergence rate by structure.}
    Global convergence rates for DF-QuAD and QE under BSAF, bucketed by number of assumptions (left),
    number of rules (center), and average rule‐body size (right). Top row: \(\tau=0.5\); bottom row: \(\tau\sim U(0,1)\).}
    \label{fig:BSAF_conv_struct}
\end{figure}

\begin{figure}[ht]
    \centering
    \includegraphics[width=\linewidth]{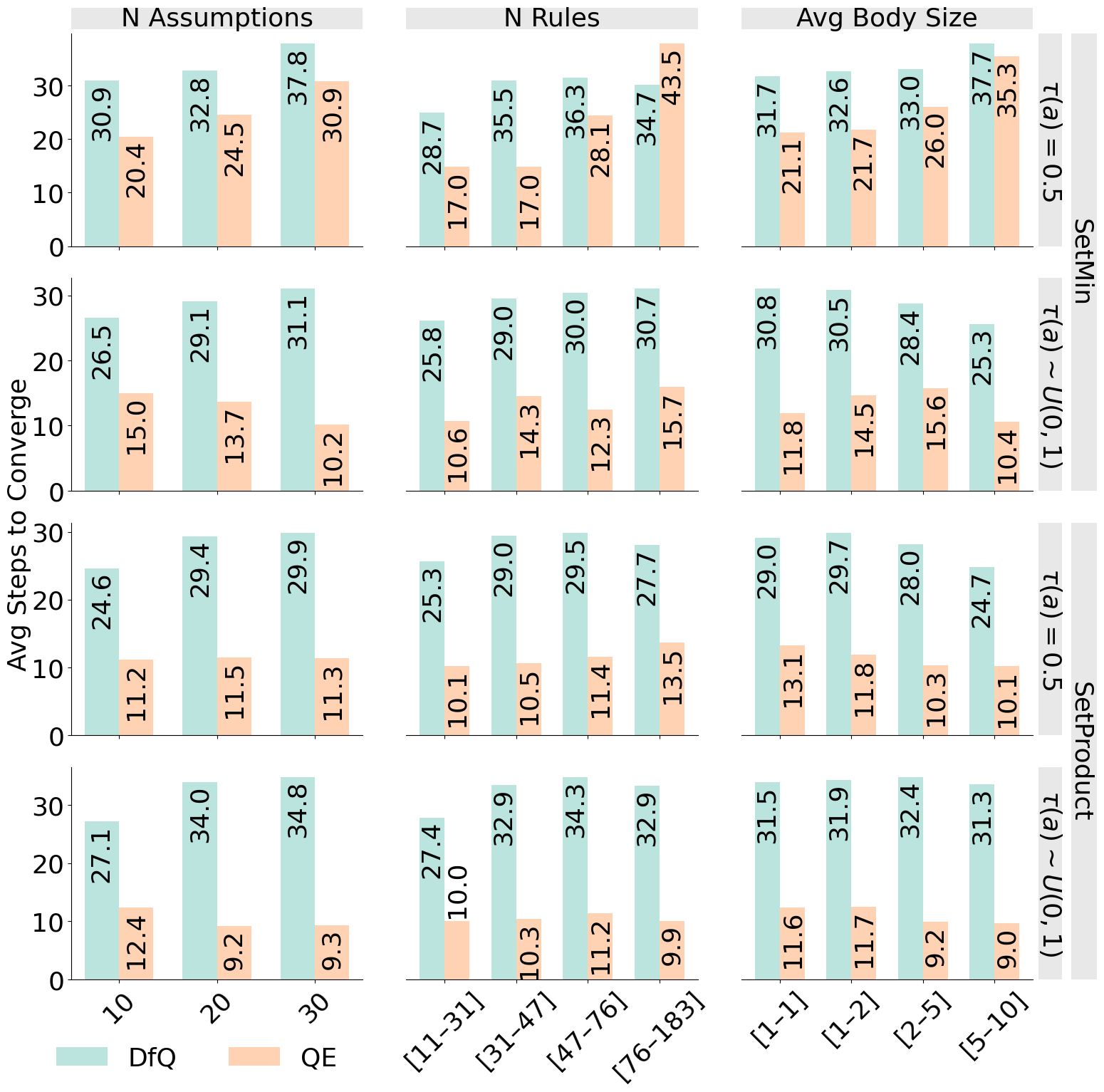}
    \caption{\textbf{BSAF convergence speed by structure.}
    Average steps to converge for DF-QuAD and QE under BSAF, across the same structural buckets and initialisations
    as Figure~\ref{fig:BSAF_conv_struct}.}
    \label{fig:BSAF_speed_struct}
\end{figure}

\begin{figure}[ht]
    \centering
    \includegraphics[width=\linewidth]{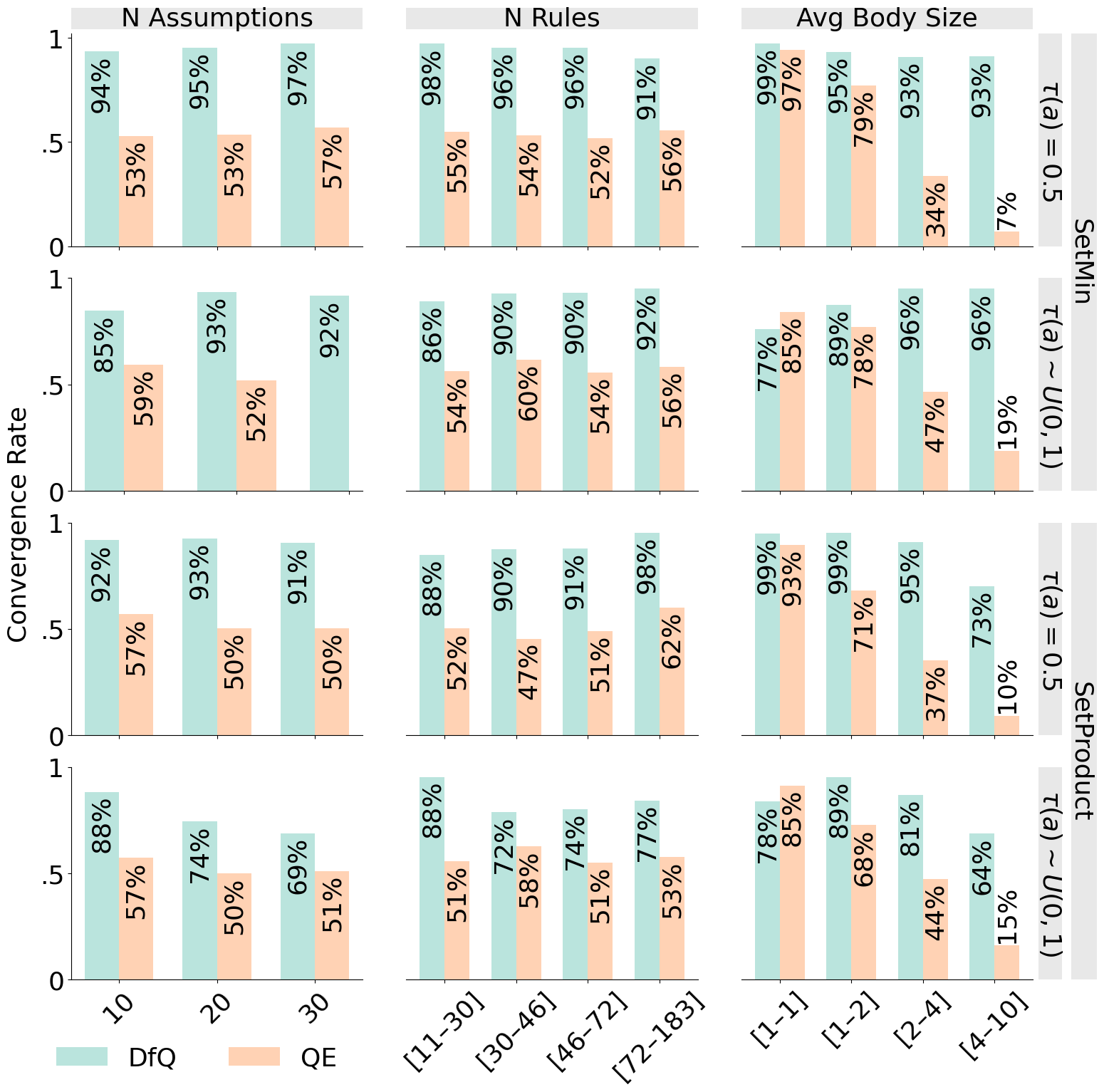}
    \caption{\textbf{BAF convergence rate by structure.}
    Global convergence rates for DF-QuAD and QE under BAF (\(\sigma^*_{\mathrm{mean}}\)), bucketed by structural features
    as before.}
    \label{fig:BAF_conv_struct}
\end{figure}

\begin{figure}[ht]
    \centering
    \includegraphics[width=\linewidth]{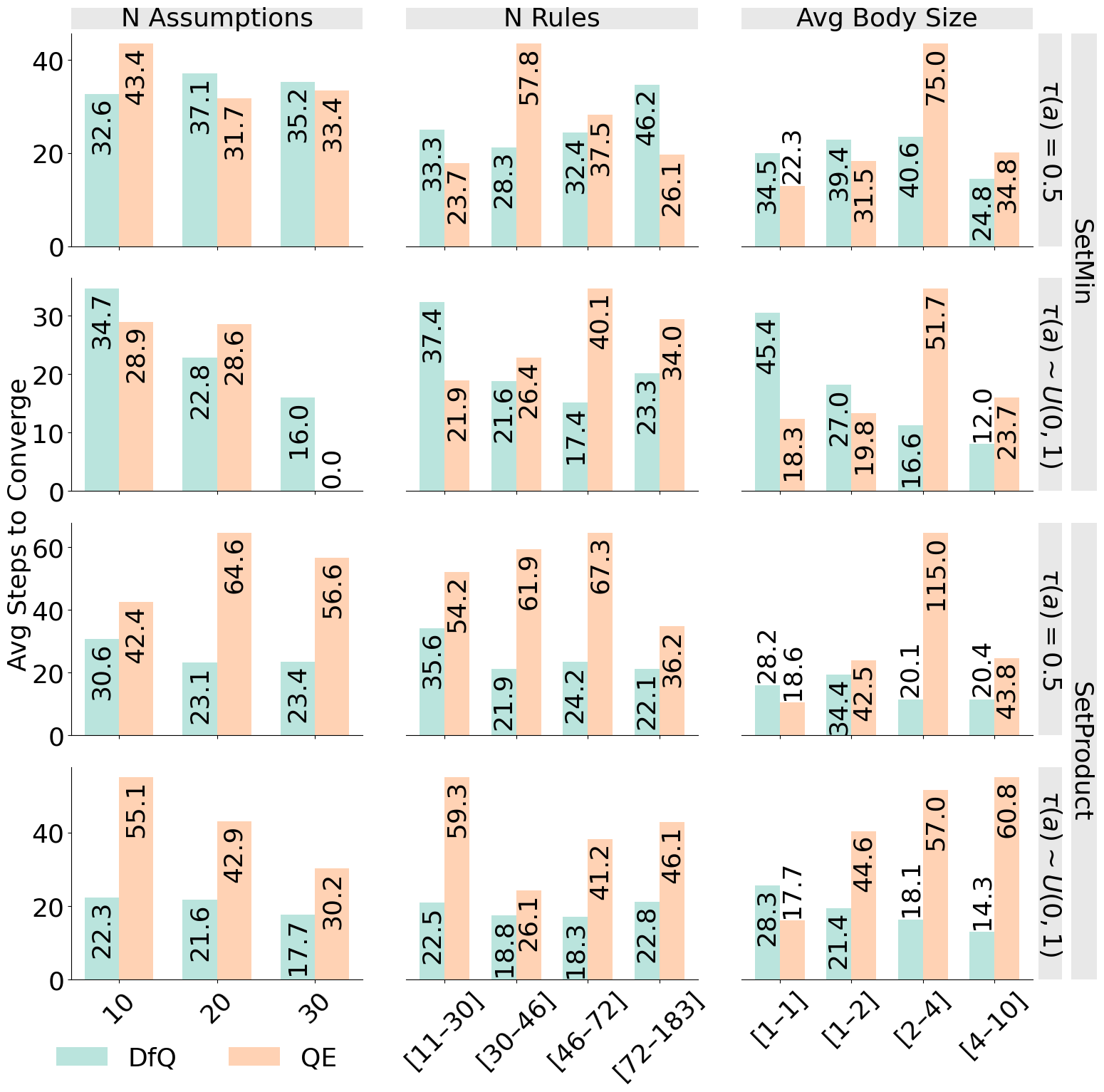}
    \caption{\textbf{BAF convergence speed by structure.}
    Average steps to converge for DF-QuAD and QE under BAF (\(\sigma^*_{\mathrm{mean}}\)), across the same buckets.}
    \label{fig:BAF_speed_struct}
\end{figure}

\end{document}